\renewcommand{\raggedright}{\leftskip=0pt \rightskip=0pt plus 0cm} 
\definecolor{colorhkust}{RGB}{20,43,140}
\definecolor{colortsinghua}{RGB}{116,52,129}
\definecolor{color1}{RGB}{128,0,0}
\theoremstyle{plain}
\newtheorem{theorem}{\bf{Theorem}}  
\newtheorem{lemma}{\bf{Lemma}}  
\newtheorem{assumption}{\bf{Assumption}}  
\theoremstyle{remark}
\newtheorem{remark}{\bf{Remark}}
\newtheorem{corollary}{\bf{Corollary}}
\newcommand{\TT}{{\mathrm{T}}}
\setlist{leftmargin=1em}
\begin{document}

    \title{Semi-Decentralized Federated Edge Learning with Data and Device Heterogeneity} 
   
    \author{
        Yuchang~Sun,~\IEEEmembership{Graduate Student Member,~IEEE},
        Jiawei~Shao,~\IEEEmembership{Graduate Student Member,~IEEE},
        Yuyi~Mao,~\IEEEmembership{Member,~IEEE,}
        Jessie Hui~Wang,~\IEEEmembership{Member,~IEEE,}
        and~Jun~Zhang,~\IEEEmembership{Fellow,~IEEE}
      	\thanks{
      	Parts of this paper were presented at the 2022 IEEE Wireless Communications and Networking Conference \cite{sun2021semi} and the 2022 IEEE International Conference on Communications \cite{sun2022icc}. \emph{(Corresponding author: Yuyi Mao.)}
      	
      	Y. Sun, J. Shao, and J. Zhang are with the Department of Electronic and Computer Engineering, the Hong Kong University of Science and Technology, Hong Kong (E-mail: \{yuchang.sun, jiawei.shao\}@connect.ust.hk, eejzhang@ust.hk). 
      	Y. Mao is with the Department of Electronic and Information Engineering, the Hong Kong Polytechnic University, Hong Kong (E-mail: yuyi-eie.mao@polyu.edu.hk).
       J. H. Wang is with the Institute for Network Sciences and Cyberspace, Tsinghua University, Beijing 100084, and ZGC Lab, Beijing 100194, China. (E-mail: jessiewang@tsinghua.edu.cn).}
}

    \maketitle

\begin{abstract}

Federated edge learning (FEEL) emerges as a privacy-preserving paradigm to effectively train deep learning models from the distributed data in 6G networks.
Nevertheless, the limited coverage of a single edge server results in an insufficient number of participating client nodes, which may impair the learning performance.
In this paper, we investigate a novel FEEL framework, namely \emph{semi-decentralized federated edge learning} (SD-FEEL), where multiple edge servers collectively coordinate a large number of client nodes. 
By exploiting the low-latency communication among edge servers for efficient model sharing, SD-FEEL incorporates more training data, while enjoying lower latency compared with conventional federated learning.
We detail the training algorithm for SD-FEEL with three steps, including local model update, intra-cluster, and inter-cluster model aggregations. 
The convergence of this algorithm is proved on non-independent and identically distributed data, which reveals the effects of key parameters and provides design guidelines.
Meanwhile, the heterogeneity of edge devices may cause the straggler effect and deteriorate the convergence speed of SD-FEEL.
To resolve this issue, we propose an asynchronous training algorithm with a staleness-aware aggregation scheme, of which, the convergence is also analyzed.
The simulations demonstrate the effectiveness and efficiency of the proposed algorithms for SD-FEEL and corroborate our analysis.
\end{abstract}

\begin{IEEEkeywords}
Federated learning (FL), mobile edge computing (MEC), non-independent and identically distributed (non-IID) data, device heterogeneity.
\end{IEEEkeywords}

\section{Introduction}
The recent upsurge of Internet of Things (IoT) applications brings about a drastically increasing number of IoT devices, which is predicted to reach more than 30 billion by 2025 \cite{iot}. 
As a result, an unprecedented volume of data is generated and stored at the edge of the wireless network, which can facilitate the training of powerful machine learning (ML) models to empower various intelligent mobile applications.
Meanwhile, as an enabler of emerging IoT applications, the sixth generation (6G) of wireless networks is envisioned to provide ubiquitous artificial intelligence (AI) services anywhere at any time \cite{letaief2019roadmap,6giot}.
To leverage the valuable data resources, a traditional approach is to upload them to a centralized server for training.
However, this approach may not be suitable to support ubiquitous AI in 6G networks, since data uploading incurs heavy communication overhead and serious concerns about privacy leakage \cite{meneghello2019iot}.
Therefore, there is an emerging trend of pushing AI computing to edge devices \cite{shi2020communication}.

In 2017, Google proposed a privacy-preserving training paradigm, namely federated learning (FL) \cite{mcmahan2017communication}, where the client nodes (e.g., mobile and IoT devices) train models based on local data and periodically upload them to a Cloud-based parameter server (PS) for model aggregation. 
Since FL requires no data sharing, it substantially avoids privacy leakage.
Nevertheless, model uploading between the client nodes and the Cloud-based PS introduces expensive communication costs, which degrades the efficiency of FL.
Inspired by the emerging mobile edge computing (MEC) platforms \cite{mao2017survey}, federated edge learning (FEEL) \cite{lim2020federated} has been proposed to overcome this bottleneck, where an edge-based PS (i.e., the edge server) is deployed to be located near the edge devices as a model aggregator. 
Despite its great promise in reducing the model uploading latency, the training efficiency of FEEL is far from satisfactory, since the number of client nodes accessible by a single edge server is insufficient due to its limited coverage.

To fully exploit the potential of FEEL, recent works started to incorporate multiple edge servers, each of which is in charge of a number of client nodes. Accordingly, the total amount of training data samples can be significantly increased. To speed up the training process, edge servers collaborate in training by sharing their models. A possible solution is to allow the Cloud to collect and aggregate the models from edge servers \cite{HierFL,wang2020local}, which still introduces excessive communication latency.
In this work, we will investigate a novel FL architecture to support collaborated learning in 6G networks, namely semi-decentralized federated edge learning (SD-FEEL) \cite{castiglia2020multi,sun2021semi}, which utilizes low-latency communication among edge servers to realize effective and efficient model aggregation.

\subsection{Related Works}
The training efficiency of FEEL faces two bottlenecks, i.e., the limited communication bandwidth and the straggler effect.
On one hand, the client nodes and edge servers suffer from unstable wireless connections and thus frequent communications will cause a large training latency.
To improve the learning efficiency, a control algorithm was proposed in \cite{wang2019adaptive} to adaptively determine the global model aggregation frequency given the available resources on client nodes. 
Besides, Shi \textit{et} \textit{al.} \cite{shi2020joint} solved a joint bandwidth allocation and device scheduling problem to maximize the learning performance of FL within the given training time. Moreover, gradient quantization and sparsification techniques were adopted to achieve communication-efficient FEEL \cite{mills2019communication}, \cite{amiri2020federated}.

On the other hand, different types of edge devices have heterogeneous computational resources, e.g., processing speed, battery capacity, and memory usage \cite{lim2020federated}.
Particularly, it may take a longer time for the client nodes with less computational resources (namely stragglers) to conduct the same amount of local training, which prolongs the total training time.
This device heterogeneity issue can be problematic especially in large-scale implementations. 
A client selection algorithm for FEEL was proposed in \cite{nishio2019client}, which eliminates the straggling client nodes from global model aggregation. 
Meanwhile, asynchronous FL has attracted much attention due to its effectiveness in dealing with device heterogeneity \cite{xie2019asynchronous,ma2021fedsa,wu2020safa}. 
Xie \textit{et} \textit{al.} \cite{xie2019asynchronous} proposed the first asynchronous training algorithm for FL, where the model aggregation is triggered once the PS receives an update from any client node. However, this scheme incurs more frequent communications between the PS and client nodes. To strike a balance between model improvement and training latency, a semi-asynchronous training algorithm for FL was proposed in \cite{ma2021fedsa}, where the model aggregation is delayed until the PS collects a targeted number of local updates. However, the stale models uploaded from the straggling client nodes are less valuable to the global model aggregation and may even degrade the learning performance. The design in \cite{wu2020safa} relieved this issue by forcing some client nodes with up-to-date or deprecated local models to synchronize with the PS, while most client nodes stay asynchronous.

When being implemented over wireless networks, the aforementioned benefits of FEEL are hindered by the limited coverage of a single edge server. 
Recent works considered the cooperation among multiple edge servers in training to further explore the potential of FEEL. 
The client-edge-cloud hierarchical FL system, namely HierFAVG, was investigated in \cite{HierFL,wang2020local}, where each edge server is responsible for aggregating the models of its associated client nodes, while the Cloud-based PS performs global aggregation to average the models from edge servers periodically.
However, the communication with the Cloud-based PS still incurs a high latency.
As an alternative to the Cloud-based aggregator, in \cite{saha2020fogfl}, a fog node (i.e., an edge server) was selected by the Cloud to perform global model aggregation in each communication round.
Such a design requires all the fog nodes to be fully connected and may suffer from single-point failure.
Besides, a recent work \cite{han2021fedmes} adopted the cell-edge users (i.e., client nodes) as bridges to share models between multiple edge servers.
Nevertheless, these client nodes may suffer from poor signal quality because of the cell-edge effect \cite{you2011cell}, which degrades the training performance.
Table \ref{tab:related} summarizes the main characteristics of the above works.
\begin{table*}[!t]
\centering
\linespread{1.2} 
\caption{Comparison of FEEL systems with multiple edge servers}
\resizebox{\textwidth}{!}{
\begin{tabular}{|c|c|c|c|c|c|c|}
\hline
  \multicolumn{1}{|c|}{\textbf{System}} &
  \multicolumn{1}{c|}{\textbf{\begin{tabular}[c]{@{}c@{}}Communication \\ Among \\ Edge Servers \end{tabular}}} &
  \multicolumn{1}{c|}{\textbf{\begin{tabular}[c]{@{}c@{}}Communication \\ Between the Cloud \\ and Edge Servers \end{tabular}}} &
  \multicolumn{1}{c|}{\textbf{Latency}} &
  \textbf{\begin{tabular}[c]{@{}c@{}}Insensitive to \\ Network \\ Topology \end{tabular}} &
  \multicolumn{1}{c|}{\textbf{\begin{tabular}[c]{@{}c@{}}Insensitive to \\ the Cell-edge \\ Effect \end{tabular}}} &
  \multicolumn{1}{c|}{\textbf{\begin{tabular}[c]{@{}c@{}} Convergence \\ Guarantee on \\ Non-IID Data \end{tabular}}} \\ \hline
HierFAVG \cite{HierFL,wang2020local} &
  \multicolumn{1}{c|}{\XSolidBrush} &
  \multicolumn{1}{c|}{\Checkmark} &
  \multicolumn{1}{c|}{High} &
  \multicolumn{1}{c|}{\Checkmark} &
  \multicolumn{1}{c|}{\Checkmark} &
  \multicolumn{1}{c|}{\Checkmark} \\ \hline
FogFL \cite{saha2020fogfl} &
  \multicolumn{1}{c|}{\Checkmark} &
  \multicolumn{1}{c|}{\Checkmark} &
  \multicolumn{1}{c|}{High} &
  \multicolumn{1}{c|}{\XSolidBrush} &
  \multicolumn{1}{c|}{\Checkmark} &
  \multicolumn{1}{c|}{\XSolidBrush} \\ \hline
FedMes \cite{han2021fedmes} &
  \multicolumn{1}{c|}{\XSolidBrush} &
  \multicolumn{1}{c|}{\XSolidBrush} &
  \multicolumn{1}{c|}{Low} &
  \multicolumn{1}{c|}{\Checkmark} &
  \multicolumn{1}{c|}{\XSolidBrush} &
  \multicolumn{1}{c|}{\XSolidBrush} \\ \hline
Multi-level Local SGD \cite{castiglia2020multi} &
  \multicolumn{1}{c|}{\Checkmark} &
  \multicolumn{1}{c|}{\XSolidBrush} &
  \multicolumn{1}{c|}{Low} &
  \multicolumn{1}{c|}{\XSolidBrush} &
  \multicolumn{1}{c|}{\Checkmark} &
  \multicolumn{1}{c|}{\XSolidBrush} \\ \hline
SD-FEEL (Ours) &
  \multicolumn{1}{c|}{\Checkmark} &
  \multicolumn{1}{c|}{\XSolidBrush} &
  \multicolumn{1}{c|}{Low} &
  \multicolumn{1}{c|}{\Checkmark} &
  \multicolumn{1}{c|}{\Checkmark} &
  \multicolumn{1}{c|}{\Checkmark} \\ \hline
\end{tabular}}
\label{tab:related}
\end{table*}

Motivated by the efficient communication among edge servers, this paper investigates a novel FEEL system, \emph{semi-decentralized federated edge learning} (SD-FEEL) \cite{castiglia2020multi,sun2021semi}, where multiple edge servers collectively coordinate a large number of client nodes. The client nodes perform local model updates for several iterations and then upload the model to the associated edge server for \textit{intra-cluster} model aggregation. After several times of intra-cluster model aggregations, multiple rounds of model exchanges and aggregations with the neighboring edge servers, which is termed \textit{inter-cluster} model aggregation, are conducted by edge servers. Such a semi-decentralized architecture can easily include a huge amount of client nodes and adequately explore their data at a very low cost. 
The recent work \cite{castiglia2020multi} investigated a similar design as SD-FEEL and analyzed its convergence, but only on independent and identically distributed (IID) data. Besides, they assumed only one round of communication among edge servers, which may degrade the training performance due to the model inconsistency among different edge servers.

\subsection{Main Contributions}
This paper investigates SD-FEEL at the edge of 6G wireless networks, accounting for the non-IID data and heterogeneous computational resources at different client nodes.
Our main contributions are summarized as follows:
\begin{itemize}
    \item We first propose a generic training algorithm for SD-FEEL, where client nodes and edge servers collaboratively train an ML model through local updates, intra-cluster, and inter-cluster model aggregations. To relieve the model inconsistency among edge clusters, edge servers are allowed to exchange and aggregate models multiple times in each round of inter-cluster model aggregation.
    \item We analyze the convergence of the training algorithm on non-IID data, with data heterogeneity among clients within the same edge cluster, as well as that among different edge clusters.
    Based on the analytical results, the impacts of intra-/inter-cluster aggregation periods are discussed.
    We also investigate how the network topology among edge servers affects the convergence rate.
    It is found that SD-FEEL converges slowly when the edge servers are sparsely connected, which, however, can be alleviated by multiple times of model sharings in inter-cluster model aggregation.
    \item To effectively combat device heterogeneity that may significantly hinder the training performance, we propose an asynchronous training algorithm for SD-FEEL, where edge servers can independently set deadlines for local computation at client nodes. Particularly, we design a staleness-aware aggregation scheme to account for device heterogeneity. 
    For analysis, we decompose the variance incurred by asynchronous training and prove the convergence of the proposed algorithm.
    \item We conduct extensive simulations on two image classification tasks. The results demonstrate the benefits of SD-FEEL in achieving faster convergence without sacrificing model quality.
    The simulations also verify our discussions on the effects of various critical parameters.
    Especially, increasing the number of inter-server communication rounds is found to be an effective strategy to reduce the model inconsistency among edge clusters.
    Besides, further experiments show that the proposed asynchronous training algorithm improves the test accuracy when SD-FEEL is under a high degree of device heterogeneity.
\end{itemize}
\subsection{Organizations}
The rest of the paper is organized as follows.
In Section \ref{sec:system}, we introduce the SD-FEEL system and propose a generic training algorithm. Section \ref{sec:theory} presents the convergence analysis and corresponding discussions, including the effects of key parameters and comparison of different FEEL systems. In Section \ref{sec:async}, we investigate SD-FEEL with device heterogeneity and propose an asynchronous training algorithm, followed by its convergence analysis. We provide the simulation results in Section \ref{sec:experiment} and conclude this paper in Section \ref{sec:conclusion}.

\subsection{Notations}
Throughout this paper, we use bold-face lower-case letters, bold-face upper-case letters, and math calligraphy letters to denote vectors, matrices, and sets, respectively.
Besides, $\mathbf{X} \triangleq [\bm{x}^{(i)}]_{i=1}^N$ represents a matrix, of which, the $i$-th column vector is $\bm{x}^{(i)}$.
The $N\times N$ identity matrix is denoted by $\mathbf{I}_N$, and we define $\mathbf{1}_N\triangleq [1,1,\dots,1]_{1\times N}$.
For any $M\times N$ matrix $\mathbf{X}$, $\lambda_i(\mathbf{X})$ represents its $i$-th largest eigenvalue, the operator norm is denoted by $\left\|\mathbf{X}\right\|_{\mathrm{op}} \triangleq \max_{\left\|\bm{w}\right\|=1} \mathbf{X}\bm{w} = \sqrt{\lambda_\mathrm{max}(\mathbf{X}^\TT\mathbf{X})}$, and the weighted Frobenius norm is defined as $\left\|\mathbf{X}\right\|_{\mathbf{M}} \triangleq \sum_{i=1}^M\sum_{j=1}^N m^{i,j} |x^{i,j}|^2$, where $\mathbf{M} \triangleq \{m^{i,j}\}$.
In addition, $\lceil \cdot \rceil$ denotes the ceil function, and $\mathbbm{1}\{\cdot\}$ is the binary indicator function.
We denote $f(x)=\mathcal{O}(g(x))$ when there exists a positive real number $M$ and a real number $x_0$ such that $|f(x)|\leq Mg(x), \forall x \geq x_0$.

\section{System Model}\label{sec:system}
\begin{figure}[!t]
    \centering{
    \includegraphics[width=\linewidth]{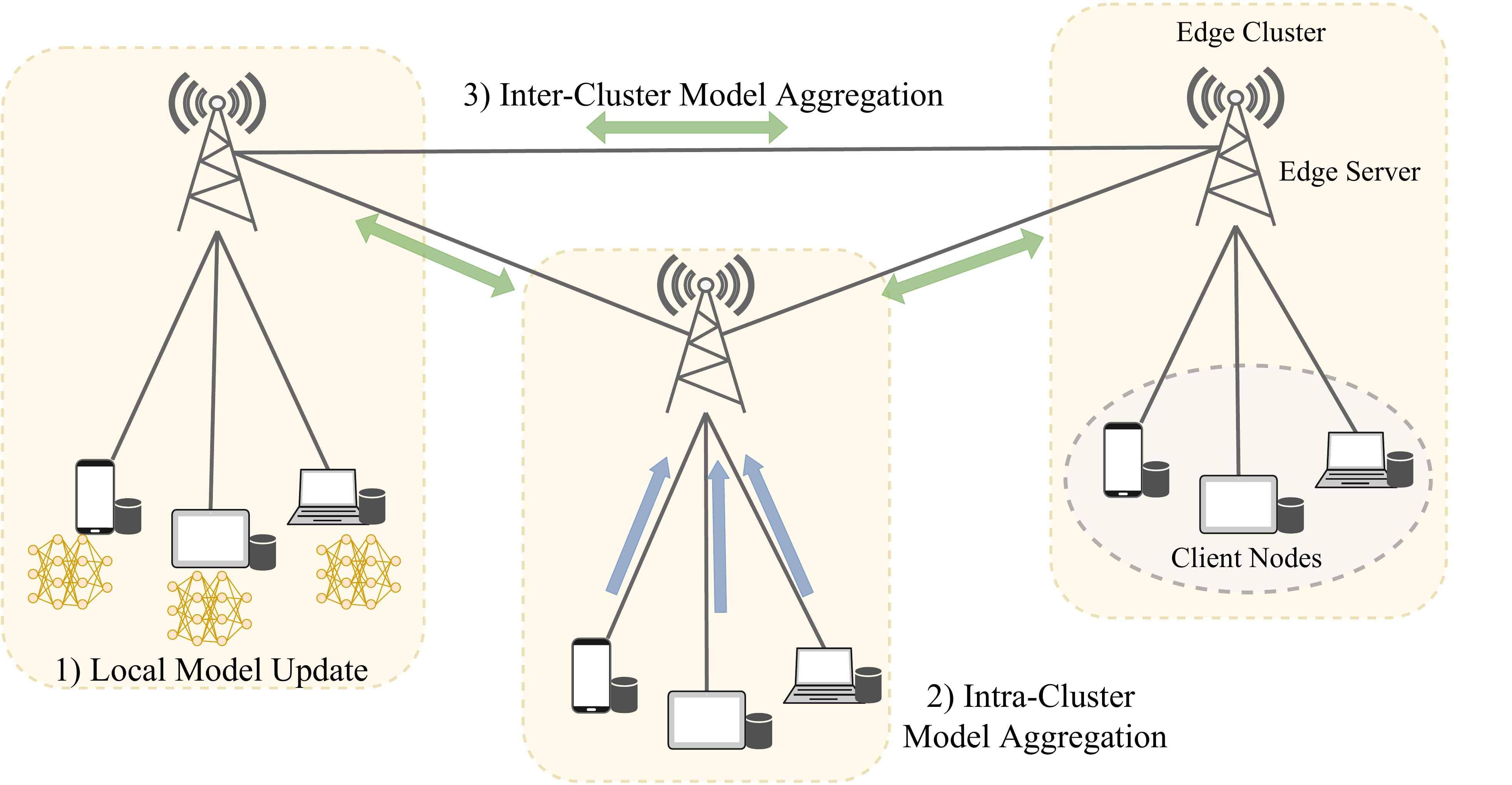}}
    \caption{The semi-decentralized FEEL system.}
    \label{fig:system}
\end{figure}

\subsection{Semi-Decentralized FEEL System}
The SD-FEEL system comprises $C$ client nodes (denoted by set $\mathcal{C}$) and $D$ edge servers (denoted by set $\mathcal{D}$), as shown in Fig. \ref{fig:system}. The system can be viewed as $D$ edge clusters, each of which consists of an edge server (denoted by $d$) and a set of associated client nodes (denoted by $\mathcal{C}_d$).
Assume each edge server coordinates at least one client node, and each client node is associated with only one edge server, according to some predefined criteria, e.g., physical proximity and network coverage. Besides, the edge servers are connected with the neighboring servers $\mathcal{N}_d$ via high-speed cables, formulating a connected graph $\mathcal{G}$.

Client node $i$ possesses a set of local training data, denoted by $\mathcal{S}_i=\{\bm{s}_{j}^{(i)}\}_{j=1}^{|\mathcal{S}_i|}$, where $\bm{s}_{j}^{(i)}$ is the $j$-th data sample at client node $i$. The collection of data samples at the set of client nodes $\mathcal{C}_{d}$ is denoted by $\Tilde{\mathcal{S}}_d$, and the training data at all the client nodes is denoted by $\mathcal{S}$. The ratios of data samples are defined as $\hat{m}_i \triangleq \frac{|\mathcal{S}_i|}{|\Tilde{\mathcal{S}}_d|}$, $m_i \triangleq \frac{|\mathcal{S}_i|}{|\mathcal{S}|}$, and $\Tilde{m}_d \triangleq \frac{|\Tilde{\mathcal{S}}_d|}{|\mathcal{S}|}$, respectively.
The client nodes collaboratively train an ML model $\bm{w}\in\mathbb{R}^M$ with $M$ trainable parameters. 
Denote the loss function associated with a data samples $\bm{s}$ with model $\bm{w}$ by $f(\bm{s}; \bm{w})$, a typical example of which is the categorical cross-entropy between the predicted label and the ground truth for a classification task.
Accordingly, the objective of SD-FEEL is to minimize the value of the loss function over the training data across all the client nodes, i.e., $\min_{\bm{w}\in\mathbb{R}^{M}} \left\{ F(\bm{w})\triangleq \sum_{i\in\mathcal{C}} \frac{|\mathcal{S}_i|}{|\mathcal{S}|} F_i(\bm{w}) \right\}$, 
where $F_i(\bm{w}) \triangleq \frac{1}{|\mathcal{S}_i|} \sum_{j\in\mathcal{S}_i} f(\bm{s}_j^{(i)}; \bm{w})$ is the local loss at client node $i$.

We use $h_i$ to denote the computational speed of client node $i$, which is in the unit of floating point operations per second (i.e., FLOPS).
Accordingly, the degree of device heterogeneity is characterized by the \emph{heterogeneity gap} $H \!\triangleq\! \max_{i,j\in\mathcal{C}}\frac{h_i}{h_j}$.

\subsection{Training Algorithm}
Assume there are $K$ iterations in the training process, which includes three main procedures: 1) local model update, 2) intra-cluster model aggregation, and 3) inter-cluster model aggregation. 
While local model update takes place at every training iteration, intra-/inter-cluster model aggregations are triggered with periods of $\tau_1$ and $\tau_1\tau_2$ training iterations, respectively, where both $\tau_1$ and $\tau_2$ are positive integers. 
We show an illustration of SD-FEEL training for two edge clusters in Fig. \ref{fig:async-2cluster} and introduce the details as follows.
For clarity, we will first present synchronous SD-FEEL, while the asynchronous training will be considered in Section \ref{sec:async}.

\subsubsection{Local Model Update}
Denote the model on the client node $i$ at the beginning of the $k$-th training iteration by $\bm{w}_{k-1}^{(i)}$. This client node performs model updating based on its local data by using the mini-batch stochastic gradient descent (SGD) algorithm \cite{dekel2012optimal}, which is expressed as follows:
\begin{equation}
    \bm{w}_{k}^{(i)} \leftarrow \bm{w}_{k-1}^{(i)} - \eta g(\bm{\xi}_{k}^{(i)}; \bm{w}_{k-1}^{(i)}), i\in\mathcal{C},
\label{eq:update}
\end{equation}
where $\eta$ is the learning rate, and $g(\bm{\xi}_{k}^{(i)}; \bm{w}_{k-1}^{(i)})$ is the stochastic gradient computed on a randomly-sampled batch of training data $\bm{\xi}_{k}^{(i)}$. 

\subsubsection{Intra-cluster Model Aggregation}
When the iteration index $k$ is an integer multiple of $\tau_1$, the client nodes upload their local models to the corresponding edge server after completing local training. The edge server aggregates these received models by computing a weighted sum as follows:
\begin{equation}
    \bm{\hat{y}}_{k}^{(d)} \leftarrow \sum_{i\in \mathcal{C}_d} \hat{m}_i \bm{w}_{k}^{(i)}, d\in\mathcal{D}.
\label{eq:intra}
\end{equation}
If $k$ is not an integer multiple of $\tau_1\tau_2$, the edge server directly sends the model $\bm{y}_{k}^{(d)}$ to its associated client nodes, i.e.,
\begin{equation}
    \bm{w}_{k+1}^{(i)} \leftarrow \bm{y}_{k}^{(d)}, i \in \mathcal{C}_d.
\label{eq:broadcast}
\end{equation}
Otherwise, the inter-cluster model aggregation is triggered.

\begin{figure*}[t]
    \centering
    \includegraphics[width=\linewidth]{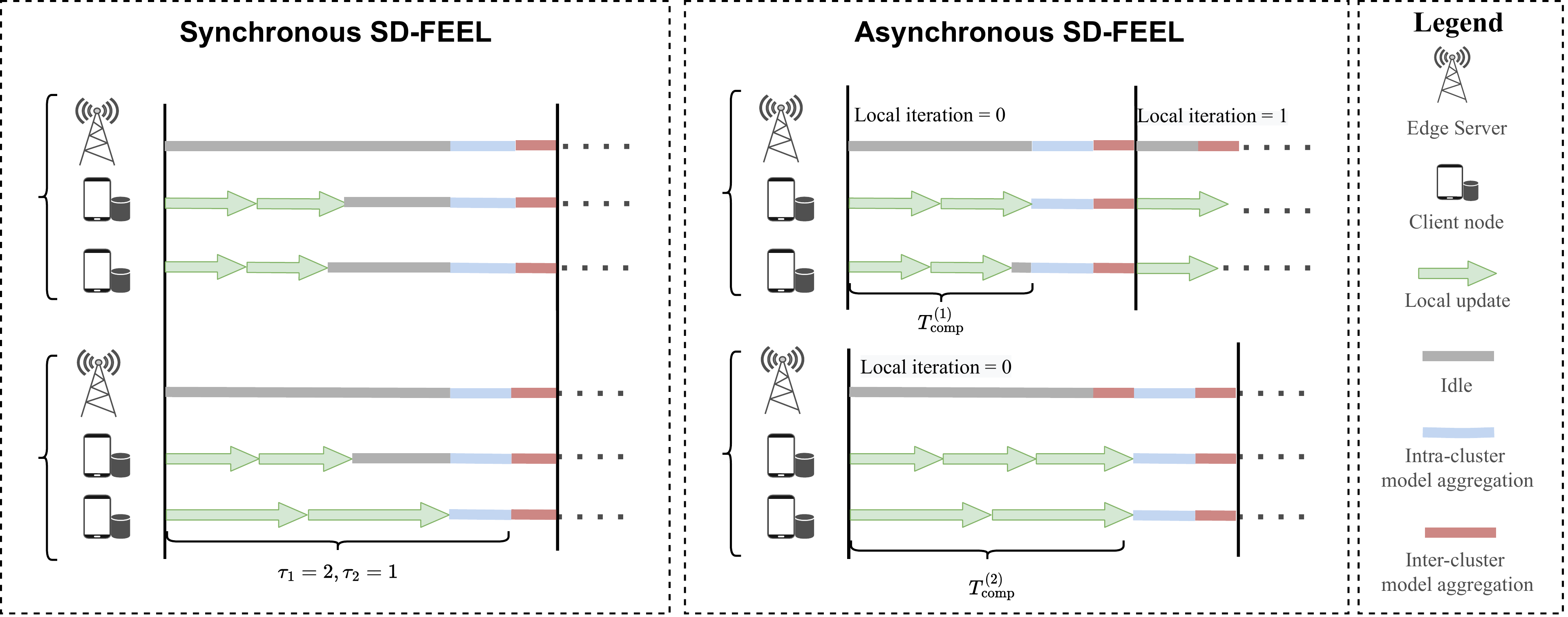}
    \caption{An illustration of synchronous (left) and asynchronous (right) SD-FEEL. In synchronous SD-FEEL, the client nodes are required to perform the same number of local iterations before intra-cluster and inter-cluster model aggregations, where the fast client nodes stay idle until all the client nodes complete their local training. In asynchronous SD-FEEL, the client nodes in edge cluster $d\in\{1,2\}$ perform local model updates for a duration of $T_\text{comp}^{(d)}$ before uploading the model updates to the associated edge server. The edge server then aggregates the received models from the client nodes in its cluster and shares the aggregated model with its one-hop neighbors.}
    \label{fig:async-2cluster}
\end{figure*}

\subsubsection{Inter-cluster Model Aggregation} When $k$ is an integer multiple of $\tau_1\tau_2$, each edge server shares its model with the one-hop neighboring edge servers after intra-cluster model aggregation. Each round of inter-cluster model aggregation includes $\alpha \in \{1,2,\dots \}$ times of model exchanges and aggregations, which can be expressed as follows:
\begin{equation}
    \bm{\hat{y}}_{k,l}^{(d)} \leftarrow \sum_{j\in \mathcal{N}_d\cup\{d\}} p^{j,d} \bm{\hat{y}}_{k,l-1}^{(j)}, l=1,2,\dots,\alpha, d\in\mathcal{D},
    \label{eq:inter}
\end{equation}
where $\bm{\hat{y}}_{k,0}^{(d)}=\bm{y}_{k}^{(d)}$ is the intra-cluster aggregated model, and we denote the mixing matrix by $\mathbf{P} \triangleq [p^{j,d}]\in \mathbb{R}^{D \times D}$.
To reduce model inconsistency among edge clusters, i.e., to ensure fast convergence to the weighted sum of the distributed models through inter-cluster model aggregation, $\mathbf{P}$ can be chosen as follows \cite{elsasser2002diffusion}:
\begin{equation}
    \mathbf{P} = \mathbf{I}_D-\frac{2}{\lambda_1(\mathbf{\Tilde{L}})+\lambda_{D-1}(\mathbf{\Tilde{L}})}\mathbf{\Tilde{L}}, 
    \label{eq5}
\end{equation}
where $\mathbf{\Tilde{L}} \triangleq \mathbf{L}\mathbf{\Omega}^{-1}$, $\mathbf{L}$ is the Laplacian matrix of graph $\mathcal{G}$, and $\mathbf{\Omega} \triangleq \mathrm{diag}(\Tilde{m}_1,\Tilde{m}_2,\dots,\Tilde{m}_D)$. We define $\zeta \!\triangleq\! |\lambda_2(\mathbf{P})| \!\in\! [0,1)$.
The edge server then updates $\bm{y}_{k}^{(d)}$ as $\bm{\hat{y}}_{k,\alpha}^{(d)}$ and broadcasts it to the associated client nodes according to \eqref{eq:broadcast}. 

After repeating the above steps for $K$ iterations ($K$ is assumed as an integer multiple of $\tau_1\tau_2$), the system enters a consensus phase where the edge servers exchange and aggregate models with their neighboring clusters. 
After sufficient rounds of such operations, the system will output a model $\sum_{d\in\mathcal{D}} \Tilde{m}_d \bm{y}_{K}^{(d)}$ and broadcast it to the associated client nodes.
The consensus phase takes place only once, which will incur negligible extra overhead.
We summarize the training algorithm of SD-FEEL in Algorithm \ref{alg-1}.

\begin{algorithm}[t]
\caption{Training Algorithm for SD-FEEL} \label{alg-1}
Initialize all client nodes with the same model (i.e., $\bm{w}_{0}^{(i)} \!=\! \bm{w}_{0}, \, \forall i\in\mathcal{C}$)\;
\For{$k=1,2,\dots,K$}{
\For{each client node $i\in \mathcal{C}$ in parallel}{
    Update the local model as $\bm{w}_{k}^{\left(i\right)}$ according to \eqref{eq:update}\;
    \If{$\mod(k,\tau_1)=0$}{
        \For{each edge server $d\in \mathcal{D}$ in parallel}{
        Receive the most updated model from the client nodes in $\mathcal{C}_{d}$\;
        Obtain $\bm{y}_{k}^{(d)}$ by performing intra-cluster model aggregation according to \eqref{eq:intra}\;
        \If{$\mod(k,\tau_1\tau_2)=0$}{
            Set $\bm{\hat{y}}_{k,0}^{\left(d\right)}$ as $\bm{y}_{k}^{(d)}$\;
            \For{$l=1,\dots,\alpha$}{
                Share models with $\mathcal{N}_{d}$ and perform inter-cluster model aggregation according to \eqref{eq:inter}\;
                Update $\bm{y}_{k}^{(d)}$ as $\bm{\hat{y}}_{k,\alpha}^{(d)}$ \;}
            }
        Broadcast $\bm{y}^{\left(d\right)}_{k}$ to the client nodes in $\mathcal{C}_{d}$\;}
    }
}
}
  Enter the consensus phase\;
  \KwRet $\sum_{d\in\mathcal{D}} \Tilde{m}_d \bm{y}_{K}^{(d)}$\;
\end{algorithm}

\section{Theoretical Analysis}\label{sec:theory}
\subsection{Convergence Analysis}\label{sec:convergence}

To facilitate the convergence analysis, we make the following assumptions that are commonly used in the existing literature \cite{zhangweiyi,wang2020tackling,wang2021quantized}.
\begin{assumption}\label{assumptions}
For all $i\in \mathcal{C}$, we assume:
\begin{itemize}
    \item (\textbf{Smoothness}) The local objective function is $L$-smooth, i.e., 
        \begin{equation}
            \left\| \nabla F_i( \bm{w} ) - \nabla  F_i( \bm{w}^\prime ) \right\|_2  \leq L \left\| \bm{w} - \bm{w}^\prime \right\|_2, \forall \bm{w}, \bm{w}^\prime \in \mathbb{R}^{M} .
            \label{eq-smooth}
        \end{equation}
    \item (\textbf{Unbiased and bounded gradient variance}) The mini-batch gradient is unbiased, i.e.,
        \begin{equation}
            \mathbb{E}_{\bm{\xi}^{(i)}|\bm{w}} [g(\bm{\xi}^{(i)};\bm{w})] = \nabla F_i(\bm{w}), \forall \bm{w}\in\mathbb{R}^M,
        \label{eq-gradient}
        \end{equation}
        and there exists $\sigma>0$ such that 
        \begin{equation}
            \mathbb{E}_{\bm{\xi}^{(i)}|\bm{w}} \left[\left\| g(\bm{\xi}^{(i)};\bm{w}) - \nabla F_i(\bm{w}) \right\|_2^2\right] \leq \sigma^2, \forall \bm{w}\in\mathbb{R}^M.
        \label{eq-variance}
        \end{equation}
    \item (\textbf{Degree of non-IIDness}) There exists $\kappa>0$ such that 
        \begin{equation}
            \left\| \nabla F_i(\bm{w}) - \nabla F(\bm{w}) \right\|_2 \leq \kappa, \quad \forall \bm{w} \in\mathbb{R}^M,
        \label{eq-kappa}
        \end{equation}
        where $\kappa$ measures the degree of data heterogeneity across all client nodes. When $\kappa=0$, it reduces to the IID case.
\end{itemize}
\end{assumption}

Denote $\mathbf{W}_k\triangleq [\bm{w}_k^{(i)}]_{i\in\mathcal{C}}\in\mathbb R^{M \times C}$ and $\mathbf{G}_k\triangleq [g(\bm{\xi}^{(i)}; \bm{w}_k^{(i)})]_{i\in\mathcal{C}} \in\mathbb R^{M \times C}$.
To characterize the process of model uploading and broadcasting between client nodes and edge servers, we define $\mathbf{V} \triangleq [v^{i,d}] \in \mathbb{R}^{C\times D}$ and $\mathbf{B} \triangleq [b^{d,i}] \in \mathbb R^{D\times C}$, where $v^{i,d} \triangleq \hat{m}_i \mathbbm{1}\left\{ i\in \mathcal{C}_d \right\}$ represents the data ratio of client node $i$ within the $d$-th edge cluster, and $b^{d,i} = \mathbbm{1}\left\{ i\in \mathcal{C}_d \right\}$ denotes the association between edge server $d$ and client node $i$.
Thus, the local models at client nodes evolve according to the following lemma.
\begin{lemma}\label{lm:evolve}
The local models evolve according to the following expression:
    \begin{equation}
    \mathbf{W}_{k+1} = (\mathbf{W}_k -\eta \mathbf{G}_k) \mathbf{T}_k,\, k=1,2,\dots,K,
    \label{eq:evolve}
    \end{equation}
    where the transition matrix is given by
    \begin{equation}
        \mathbf{T}_k \! = \! \left\{
        \begin{array}{ll}
        \mathbf{V}\mathbf{B},
        & \text{if}\;\mathrm{mod}\left(k,\!\tau_1\right)\!=\!0\;  
        \text{and}\;\mathrm{mod} \left(k,\!\tau_1\tau_2\right)\!\neq\!0, \\
        \mathbf{V}\mathbf{P}^\alpha\mathbf{B},
        & \text{if}\; \mathrm{mod}\left(k,\!\tau_1\tau_2\right)=0, \\ 
        \, \mathbf{I}_C, & \text{otherwise.}
        \end{array}
        \right.
        \label{T-k}
    \end{equation}
\end{lemma}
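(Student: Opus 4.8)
The plan is to translate each of the three per-iteration operations—the local SGD update, the intra-cluster aggregation, and the inter-cluster aggregation followed by broadcast—into a right-multiplication of the model matrix by an appropriate matrix, and then compose these operators according to whether iteration $k$ triggers an aggregation. Since the columns of $\mathbf{W}_k$ and $\mathbf{G}_k$ are indexed by the client nodes, the simultaneous application of the local update \eqref{eq:update} across all clients is exactly $\tilde{\mathbf{W}}_k \triangleq \mathbf{W}_k - \eta\mathbf{G}_k$. It then suffices to show that the aggregation/broadcast steps act as $\tilde{\mathbf{W}}_k \mapsto \tilde{\mathbf{W}}_k\mathbf{T}_k$ with $\mathbf{T}_k$ as claimed in \eqref{T-k}.

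First I would treat the intra-cluster aggregation. Writing the $d$-th column of $\tilde{\mathbf{W}}_k\mathbf{V}$ and using $v^{i,d}=\hat{m}_i\mathbbm{1}\{i\in\mathcal{C}_d\}$ yields $\sum_{i\in\mathcal{C}_d}\hat{m}_i\tilde{\bm{w}}^{(i)}$, matching \eqref{eq:intra}; hence the intra-cluster aggregated server models $\bm{y}_k^{(d)}$ are precisely the columns of $\tilde{\mathbf{W}}_k\mathbf{V}$. Similarly, the broadcast \eqref{eq:broadcast} assigns to client $i$ the model of its associated server, and computing the $i$-th column of $(\cdot)\mathbf{B}$ with $b^{d,i}=\mathbbm{1}\{i\in\mathcal{C}_d\}$ selects exactly that server's model; here I would note explicitly that each client belongs to a unique cluster, which makes this column-selection unambiguous. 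Consequently, when $\mathrm{mod}(k,\tau_1)=0$ but $\mathrm{mod}(k,\tau_1\tau_2)\neq0$, only aggregation and broadcast occur, giving $\mathbf{W}_{k+1}=\tilde{\mathbf{W}}_k\mathbf{V}\mathbf{B}$, i.e.\ $\mathbf{T}_k=\mathbf{V}\mathbf{B}$.

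Next I would handle the inter-cluster case. A single mixing step \eqref{eq:inter} writes the $d$-th column of the updated server matrix as $\sum_{j}p^{j,d}\hat{\bm{y}}_{k,l-1}^{(j)}$, which is exactly the $d$-th column of $\hat{\mathbf{Y}}_{k,l-1}\mathbf{P}$, where the restriction $j\in\mathcal{N}_d\cup\{d\}$ is absorbed because $\mathbf{P}$ respects the graph $\mathcal{G}$ and its off-support entries vanish. Iterating by induction over $l=1,\dots,\alpha$, and using $\hat{\mathbf{Y}}_{k,0}=\tilde{\mathbf{W}}_k\mathbf{V}$, gives $\hat{\mathbf{Y}}_{k,\alpha}=\tilde{\mathbf{W}}_k\mathbf{V}\mathbf{P}^\alpha$, after which the broadcast appends $\mathbf{B}$. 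Thus when $\mathrm{mod}(k,\tau_1\tau_2)=0$ we obtain $\mathbf{T}_k=\mathbf{V}\mathbf{P}^\alpha\mathbf{B}$; and when $\mathrm{mod}(k,\tau_1)\neq0$ no aggregation takes place, so $\mathbf{T}_k=\mathbf{I}_C$, completing the three cases.

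The argument is essentially a bookkeeping verification, so the main obstacle is notational rather than conceptual. Because models are stored as columns, every aggregation must act by \emph{right}-multiplication; the opposite (transpose) convention would interchange the roles of $\mathbf{V}$ and $\mathbf{B}$, so I would state the column/row indexing of $\mathbf{V}$, $\mathbf{B}$, and $\mathbf{P}$ once at the outset and check each product against it. I would also invoke associativity, $\tilde{\mathbf{W}}_k(\mathbf{V}\mathbf{P}^\alpha\mathbf{B})=((\tilde{\mathbf{W}}_k\mathbf{V})\mathbf{P}^\alpha)\mathbf{B}$, to confirm that the composite operator interleaves the individual steps in the exact physical order—upload, mix $\alpha$ times, broadcast—in which they are executed by Algorithm \ref{alg-1}.
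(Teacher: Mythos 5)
Your proposal is correct and follows essentially the same route as the paper's proof: rewriting the inter-cluster step \eqref{eq:inter} in matrix form so that $\alpha$ mixing rounds collapse to $\mathbf{P}^\alpha$, then composing with $\mathbf{V}$ (intra-cluster aggregation) and $\mathbf{B}$ (broadcast) to obtain the three cases of $\mathbf{T}_k$. If anything, your explicit column-wise verification and consistent right-multiplication bookkeeping is tidier than the paper's sketch, which writes the factors on the left (e.g., $\mathbf{\hat{Y}}_{k,0}=\mathbf{V}\mathbf{W}_k$) despite its stated $M\times D$ and $M\times C$ column conventions.
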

\begin{proof}
We rewrite \eqref{eq:inter} in the matrix form as $\mathbf{\hat{Y}}_{k,l} = \mathbf{P} \mathbf{\hat{Y}}_{k,l-1}$, where $\mathbf{\hat{Y}}_{k,l} \triangleq [\bm{\hat{y}}_{k,l}^{(d)}]\in\mathbb R^{M \times D}$. According to this iterative relationship, we have $\mathbf{\hat{Y}}_{k,\alpha} = \mathbf{P}^\alpha \mathbf{\hat{Y}}_{k,0}$. Then the proof is completed by following $\mathbf{\hat{Y}}_{k,0} = \mathbf{V} \mathbf{W}_k$ and $\mathbf{W}_{k+1} = \mathbf{B}\mathbf{\hat{Y}}_{k,\alpha}$.
\end{proof}

Denote $\bm{m} \triangleq [m_i]_{i\in\mathcal{C}}$ and $\mathbf{M} \triangleq \bm{m}\mathbf{1}_C^\TT$.
We define an auxiliary global model as $\bm{u}_k \triangleq \mathbf{W}_k \bm{m}$ and the corresponding concentration matrix is $\bm{u}_k \mathbf{1}_C^\TT = \mathbf{W}_k\mathbf{M}$. 
Let $\bm{u}_k$ evolve as if $\mathbf{G}_k \mathbf{M}$ can be obtained by a centralized PS and used to update the global model in every iteration. Accordingly, we have the following relationship:
\begin{equation}
    \bm{u}_{k+1} = \bm{u}_{k} - \eta \mathbf{G}_k \bm{m}^\TT.
\label{eq:u_k}
\end{equation}
Such a relationship is desired to ensure fast convergence in terms of training iterations, as the gradients computed at all the client nodes can be leveraged to update the global model \cite{wang2019adaptive,wang2019mlsys}.
Unfortunately, this can be achieved only when edge servers receive local models and reach a consensus in each training iteration (i.e., $\tau_1=1$, $\tau_2=1$, and $\zeta^\alpha=0$).
Comparatively, the model $\mathbf{W}_k$ in SD-FEEL deviates from this desired sequence due to the existence of $\mathbf{T}_k$ in \eqref{eq:evolve}.
However, there exists an upper bound for the deviation between $\mathbf{W}_k$ and $\bm{u}_k \mathbf{1}_C^\TT$, introduced by both mini-batch sampling of SGD (i.e., $\sigma^2$) and non-IIDness across client nodes (i.e., $\kappa^2$).

\begin{lemma}\label{lemma:deviation}
With Assumption \ref{assumptions}, we have:
\begin{align}
\frac{1}{K} \sum_{k=1}^{K} \mathbb E \left[\left\| \mathbf{W}_k - \bm{u}_k \mathbf{1}_C^\TT \right\|_{\mathbf{M}}^2 \right] \label{eq:deviation}
\end{align}
\begin{align}
\leq 2\eta^2 V_1 \sigma^2 + 8\eta^2 V_2 \kappa^2 + \frac{8\eta^2 V_2}{K} \sum_{k=1}^{K} J_k, \nonumber
\end{align}
\noindent
where $V_1 \triangleq \left(\tau_1\tau_2 \frac{\zeta^{2\alpha}}{1-\zeta^{2\alpha}} + \frac{\tau_1\tau_2-1}{2} \right)/\left(1-16\eta^2L^2V_3\right)$, $V_2 \triangleq V_3/\left(1-16\eta^2L^2V_3\right)$, $V_3 \triangleq \tau_1\tau_2 \left(\tau_1\tau_2\Lambda + \frac{\tau_1\tau_2-1}{2} \frac{2-\zeta^\alpha}{1-\zeta^\alpha} \right)$, $\Lambda \!\triangleq\! \frac{\zeta^{2\alpha}}{1-\zeta^{2\alpha}} \!+\! \frac{2\zeta^\alpha}{1-\zeta^\alpha} \!+\! \frac{\zeta^{2\alpha}}{(1-\zeta^\alpha)^2}$, and $J_k \!\triangleq\! \mathbb{E} [ \| \sum_{i \in \mathcal{C}} m_i \nabla F_i(\bm{w}_k^{\!(i)}) \|_2^{2} ]$.
\end{lemma}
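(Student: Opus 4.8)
The plan is to track the disagreement matrix $\mathbf{W}_k - \bm{u}_k\mathbf{1}_C^\TT = \mathbf{W}_k(\mathbf{I}_C - \mathbf{M})$ directly, exploiting that every transition matrix in \eqref{T-k} preserves the weighted average. First I would establish the invariance $\mathbf{T}_k\mathbf{M} = \mathbf{M}$ in all three cases. This is trivial for $\mathbf{T}_k = \mathbf{I}_C$, and for $\mathbf{T}_k = \mathbf{V}\mathbf{B}$ and $\mathbf{T}_k = \mathbf{V}\mathbf{P}^\alpha\mathbf{B}$ it reduces to $\mathbf{T}_k\bm{m} = \bm{m}$, which follows from the chain of identities $\mathbf{B}\bm{m} = \Tilde{\bm{m}}$, $\mathbf{P}\Tilde{\bm{m}} = \Tilde{\bm{m}}$, and $\mathbf{V}\Tilde{\bm{m}} = \bm{m}$, where $\Tilde{\bm{m}} \triangleq [\Tilde{m}_d]_{d\in\mathcal{D}}$. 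Here $\mathbf{P}\Tilde{\bm{m}} = \Tilde{\bm{m}}$ holds because $\Tilde{\bm{m}} = \mathbf{\Omega}\mathbf{1}_D$ and $\mathbf{L}\mathbf{1}_D = \mathbf{0}$, while the other two are immediate from the definitions of $\hat{m}_i$, $\Tilde{m}_d$, and $m_i$. Substituting \eqref{eq:evolve} then yields the clean recursion $\mathbf{W}_{k+1}(\mathbf{I}_C - \mathbf{M}) = (\mathbf{W}_k - \eta\mathbf{G}_k)(\mathbf{T}_k - \mathbf{M})$.

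Next I would unroll this recursion back to the most recent consensus point, using $k=0$ as the base case where $\mathbf{W}_0(\mathbf{I}_C - \mathbf{M}) = \bm{w}_0(\mathbf{1}_C^\TT - \mathbf{1}_C^\TT\mathbf{M}) = \mathbf{0}$ since $\sum_i m_i = 1$. Writing $\mathbf{\Phi}_{s:k-1} \triangleq \mathbf{T}_s\cdots\mathbf{T}_{k-1}$ and using $\mathbf{\Phi}_{s:k-1}\mathbf{M} = \mathbf{M}$, the disagreement becomes $\mathbf{W}_k(\mathbf{I}_C - \mathbf{M}) = -\eta\sum_{s} \mathbf{G}_s(\mathbf{\Phi}_{s:k-1} - \mathbf{M})$. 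The crucial estimate is the contraction of $\mathbf{\Phi}_{s:k-1} - \mathbf{M}$ in the $\mathbf{M}$-weighted geometry: each intra-cluster factor $\mathbf{V}\mathbf{B}$ collapses the within-cluster disagreement, while each inter-cluster factor contracts the remaining between-cluster disagreement by $\zeta^\alpha$. The latter is legitimate because $\mathbf{\Omega}^{-1/2}\mathbf{P}\mathbf{\Omega}^{1/2} = \mathbf{I}_D - c\,\mathbf{\Omega}^{-1/2}\mathbf{L}\mathbf{\Omega}^{-1/2}$ is symmetric, so $\mathbf{P}$ has real eigenvalues and acts as a genuine $\zeta = |\lambda_2(\mathbf{P})|$ contraction on the orthogonal complement of $\Tilde{\bm{m}}$. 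Counting how many inter-cluster rounds fall in each window $[s,k-1]$ and summing the resulting geometric series in $\zeta^\alpha$ over a full period of length $\tau_1\tau_2$ is exactly what produces $\Lambda$, $V_3$, and the factors $\frac{\zeta^{2\alpha}}{1-\zeta^{2\alpha}}$ and $\frac{2-\zeta^\alpha}{1-\zeta^\alpha}$.

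I would then take expectations and split each stochastic gradient as $\mathbf{G}_s = (\mathbf{G}_s - \overline{\mathbf{G}}_s) + \overline{\mathbf{G}}_s$ with $\overline{\mathbf{G}}_s \triangleq [\nabla F_i(\bm{w}_s^{(i)})]_{i\in\mathcal{C}}$. By the unbiasedness \eqref{eq-gradient} the noise term is zero-mean, and by the variance bound \eqref{eq-variance} together with independence across iterations its cross terms vanish, leaving a contribution that carries $\sigma^2$ with coefficient $\tau_1\tau_2\frac{\zeta^{2\alpha}}{1-\zeta^{2\alpha}} + \frac{\tau_1\tau_2-1}{2}$ (the sum of squared mixing weights over the period). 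For the deterministic part I would further write $\nabla F_i(\bm{w}_s^{(i)}) = (\nabla F_i(\bm{w}_s^{(i)}) - \nabla F(\bm{w}_s^{(i)})) + \nabla F(\bm{w}_s^{(i)})$ and apply the non-IID bound \eqref{eq-kappa} to extract the $\kappa^2$ term, with the residual controlled by $J_s = \mathbb{E}[\|\sum_{i} m_i\nabla F_i(\bm{w}_s^{(i)})\|_2^2]$. The leading constants $2$ and $8$ arise from the repeated use of $\|\bm{a}+\bm{b}\|_2^2 \le 2\|\bm{a}\|_2^2 + 2\|\bm{b}\|_2^2$-type splittings.

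The main obstacle, and the step I expect to be the most delicate, is closing the self-reference introduced by smoothness. The gradient magnitudes cannot be controlled by $J_s$ and $\kappa$ alone; invoking $L$-smoothness \eqref{eq-smooth} to compare $\nabla F_i(\bm{w}_s^{(i)})$ with its value at the global average $\bm{u}_s$ reintroduces the disagreement $\|\mathbf{W}_s - \bm{u}_s\mathbf{1}_C^\TT\|_{\mathbf{M}}^2$ on the right-hand side, scaled by $L^2$ and by the same mixing constants, i.e. by $16\eta^2L^2V_3$ after averaging over $k = 1,\dots,K$. This term must be moved to the left-hand side, which is precisely why $V_1$, $V_2$, and $V_3$ all carry the factor $(1-16\eta^2L^2V_3)^{-1}$ and why the argument implicitly presumes a step size small enough that $16\eta^2L^2V_3 < 1$. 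The remaining tedium lies in the double summation over the index pairs $(s,k)$ and in choosing the weights in the Cauchy–Schwarz/Jensen steps so that the accumulated coefficients collapse to exactly the stated $V_1$, $V_2$, and $V_3$.
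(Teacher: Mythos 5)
Your proposal takes essentially the same route as the paper's proof: unrolling the disagreement recursion to $\mathbf{W}_k(\mathbf{I}_C-\mathbf{M}) = -\eta\sum_{s}\mathbf{G}_s\left(\prod_{l}\mathbf{T}_l-\mathbf{M}\right)$ from the common initialization, then bounding the accumulated gradient terms via Assumption \ref{assumptions} and Jensen's inequality, with the contraction of the mixing-matrix products yielding $V_1$--$V_3$ and the smoothness-induced self-reference absorbed into the $(1-16\eta^2L^2V_3)^{-1}$ factor via the step-size condition. The details you fill in (the invariance $\mathbf{T}_k\mathbf{M}=\mathbf{M}$ and the spectral contraction of $\mathbf{P}$) are precisely those the paper defers to Appendix C of its conference version \cite{sun2021semi}, so this is a faithful reconstruction rather than a different argument.
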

\begin{proof}
Since $\bm{u}_k \mathbf{1}_C^\TT = \mathbf{W}_k\mathbf{M}$, we have $\mathbf{W}_k - \bm{u}_k \mathbf{1}_C^\TT=\mathbf{W}_k(\mathbf{I}_C-\mathbf{M})$, which can be expanded as $\mathbf{W}_0(\mathbf{I}_C-\mathbf{M}) \prod_{l=1}^{k-1}\mathbf{T}_{l} - \eta \sum_{s=1}^{k-1}\mathbf{G}_{s}$ $( \prod_{l=s}^{k-1}\mathbf{T}_{l} \!-\! \mathbf{M} )$ using \eqref{eq:evolve}.
Given that $\bm{w}_{0}^{(i)}=\bm{w}_{0}, \forall i\in\mathcal{C}$, it remains to provide an upper bound for the term $\mathbb E [\| -\eta \sum_{s=1}^{k-1}\mathbf{G}_{s}( \prod_{l=s}^{k-1}\mathbf{T}_{l} - \mathbf{M} ) \|_{\mathbf{M}}^2 ]$. The proof is concluded by bounding the accumulated variance of gradients with Assumption \ref{assumptions} and the Jensen's inequality.
The complete proof is referred to Appendix C in \cite{sun2021semi}.
\end{proof}

As aforementioned, the training objective is to output a global model $\bm{u}_k$ which minimizes the global objective function. 
We now characterize how the expected loss of $\bm{u}_k$ changes in two consecutive iterations in the following lemma.
\begin{lemma}\label{lemma-one-step}
With Assumption \ref{assumptions}, the expected change of the local loss functions in two consecutive iterations is bounded as follows:
\begin{equation}
\begin{split}
    &\quad\; \mathbb{E}[F(\bm{u}_{k+1})] - \mathbb{E} [F(\bm{u}_{k})] \\
    & \leq - \frac{\eta}{2} \mathbb{E} \left[\left\| \nabla F(\bm{u}_k) \right\|_2^2 \right]
    + \frac{\eta^2L}{2} \sum_{i\in\mathcal{C}} m_i^2 \sigma^2 \\
    & -\left(\frac{\eta}{2}-\frac{\eta^2 L}{2}\right) J_k
    +\frac{\eta L^2}{2} \mathbb{E} \left[\left\| \mathbf{W}_k(\mathbf{I}_C-\mathbf{M})\right\|_{\mathbf{M}}^2\right].
\end{split}
\label{eq:one-step}
\end{equation}
\end{lemma}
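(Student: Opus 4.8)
The plan is to apply the descent lemma afforded by $L$-smoothness of the global objective $F$ to the auxiliary iterates $\bm{u}_k$, and then to control the resulting inner-product and second-moment terms using the update rule \eqref{eq:u_k} together with the unbiasedness and variance bounds of Assumption \ref{assumptions}. Since $F=\sum_{i\in\mathcal{C}}m_iF_i$ is a convex combination of $L$-smooth functions, $F$ is itself $L$-smooth, so \eqref{eq-smooth} gives
\[
F(\bm{u}_{k+1}) \leq F(\bm{u}_k) + \langle \nabla F(\bm{u}_k),\, \bm{u}_{k+1}-\bm{u}_k\rangle + \tfrac{L}{2}\left\| \bm{u}_{k+1}-\bm{u}_k \right\|_2^2 .
\]
First I would substitute $\bm{u}_{k+1}-\bm{u}_k = -\eta\,\mathbf{G}_k\bm{m}^\TT = -\eta\sum_{i\in\mathcal{C}} m_i\, g(\bm{\xi}_k^{(i)};\bm{w}_k^{(i)})$ from \eqref{eq:u_k}, turning the right-hand side into a linear-plus-quadratic expression in the stochastic gradients.

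Next I would take the conditional expectation over the mini-batch randomness at iteration $k$, given the models $\{\bm{w}_k^{(i)}\}$. The linear term collapses by unbiasedness \eqref{eq-gradient}: writing $\bm{g}_k \triangleq \sum_{i\in\mathcal{C}} m_i \nabla F_i(\bm{w}_k^{(i)})$, the cross term becomes $-\eta\langle \nabla F(\bm{u}_k),\bm{g}_k\rangle$. For the quadratic term I would perform a bias--variance split, decomposing each stochastic gradient as its mean plus a zero-mean noise; because the mini-batches are drawn independently across client nodes, the noise cross-terms vanish in expectation, yielding
\[
\mathbb{E}\left\| \sum_{i\in\mathcal{C}} m_i\, g(\bm{\xi}_k^{(i)};\bm{w}_k^{(i)}) \right\|_2^2 \leq \left\| \bm{g}_k \right\|_2^2 + \sum_{i\in\mathcal{C}} m_i^2 \sigma^2
\]
via \eqref{eq-variance}, which produces the $\tfrac{\eta^2L}{2}\sum_i m_i^2\sigma^2$ term of the target.

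I would then rewrite the inner product through the polarization identity $-\langle \bm{a},\bm{b}\rangle = \tfrac12(\|\bm{a}-\bm{b}\|_2^2 - \|\bm{a}\|_2^2 - \|\bm{b}\|_2^2)$ with $\bm{a}=\nabla F(\bm{u}_k)$ and $\bm{b}=\bm{g}_k$. This contributes the $-\tfrac{\eta}{2}\|\nabla F(\bm{u}_k)\|_2^2$ term, a $-\tfrac{\eta}{2}\|\bm{g}_k\|_2^2$ term that merges with the $+\tfrac{\eta^2L}{2}\|\bm{g}_k\|_2^2$ from the quadratic term into $-(\tfrac{\eta}{2}-\tfrac{\eta^2L}{2})\|\bm{g}_k\|_2^2$ (whose full expectation is exactly $-(\tfrac{\eta}{2}-\tfrac{\eta^2L}{2})J_k$), and a residual $+\tfrac{\eta}{2}\|\nabla F(\bm{u}_k)-\bm{g}_k\|_2^2$ that still needs to be absorbed.

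The remaining and most delicate step is bounding this residual by the client-drift quantity. Since $\nabla F(\bm{u}_k)=\sum_i m_i\nabla F_i(\bm{u}_k)$, I have $\nabla F(\bm{u}_k)-\bm{g}_k = \sum_i m_i(\nabla F_i(\bm{u}_k)-\nabla F_i(\bm{w}_k^{(i)}))$. Applying Jensen's inequality (using $\sum_i m_i=1$) and then the $L$-smoothness of each $F_i$ gives $\|\nabla F(\bm{u}_k)-\bm{g}_k\|_2^2 \leq L^2\sum_i m_i\|\bm{u}_k-\bm{w}_k^{(i)}\|_2^2$. The final identification is to recognize that the $i$-th column of $\mathbf{W}_k(\mathbf{I}_C-\mathbf{M})$ equals $\bm{w}_k^{(i)}-\bm{u}_k$, since $\mathbf{W}_k\mathbf{M}=\bm{u}_k\mathbf{1}_C^\TT$; assigning weight $m_i$ to the $i$-th column then yields $\sum_i m_i\|\bm{u}_k-\bm{w}_k^{(i)}\|_2^2 = \|\mathbf{W}_k(\mathbf{I}_C-\mathbf{M})\|_{\mathbf{M}}^2$ by the definition of the weighted Frobenius norm. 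Taking full expectation and collecting all terms reproduces \eqref{eq:one-step}. The hard part will be handling the variance split cleanly—ensuring per-client independence justifies dropping the off-diagonal noise contributions—and matching the column-weighted Frobenius norm to the drift term with exactly the right weights $m_i$.
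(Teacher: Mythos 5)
Your proposal is correct and follows essentially the same route as the paper, which likewise applies the descent lemma (via Lemma 8 of the cited cooperative-SGD reference) to the auxiliary sequence in \eqref{eq:u_k}: unbiasedness removes the cross term, the polarization identity yields the $-\frac{\eta}{2}\mathbb{E}\left[\left\|\nabla F(\bm{u}_k)\right\|_2^2\right]$ and $J_k$ terms, and per-client $L$-smoothness plus Jensen's inequality converts the residual $\frac{\eta}{2}\mathbb{E}\left[\left\|\nabla F(\bm{u}_k)-\sum_{i\in\mathcal{C}}m_i\nabla F_i(\bm{w}_k^{(i)})\right\|_2^2\right]$ into the column-weighted drift term $\frac{\eta L^2}{2}\mathbb{E}\left[\left\|\mathbf{W}_k(\mathbf{I}_C-\mathbf{M})\right\|_{\mathbf{M}}^2\right]$. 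Your explicit observation that independence of mini-batch noise across clients is needed for the off-diagonal variance terms to vanish (producing $\sum_{i\in\mathcal{C}}m_i^2\sigma^2$) is an assumption the paper also relies on, albeit implicitly.
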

\begin{proof}
Following Lemma 8 in \cite{d2}, the proof is completed by plugging the right-hand side (RHS) of \eqref{eq:u_k} into the first-order Taylor expansion of $\nabla F(\bm{u}_{k+1})$ and leveraging Assumption \ref{assumptions}.
Please refer to the detailed proof in \cite{sun2021semi}.
\end{proof}

With the above lemmas, we prove the convergence of Algorithm \ref{alg-1} in the following theorem.
\begin{theorem}\label{thm-1}
    If the learning rate $\eta$ satisfies:
    \begin{equation}
        1-\eta L-8\eta^2L^2 V_2\geq 0, 1-16\eta^2L^2V_3 > 0,
        \setlength{\belowdisplayskip}{-3pt}
        \label{eq:lr}
    \end{equation}
    we have:
    \begin{equation}
    \frac{1}{K} \sum_{k=1}^{K} \mathbb{E} \left[\left\| \nabla F(\bm{u}_k) \right\|_2^2\right]
    \leq \frac{2\Delta}{\eta K}
    + \eta L \Phi_0 + \eta^2 L^2 \Phi(\tau_1,\tau_2,\alpha,\zeta),
    \label{eq:thm}
    \end{equation}
    where $\Delta \triangleq \mathbb{E}\left[F(\bm{u}_1)\right]- F(\bm{u}^*)$, $\bm{u}^*\triangleq \arg\min_{\bm{w}}F(\bm{w})$, $\Phi_0 \triangleq \sum_{i\in\mathcal{C}} m_i^2 \sigma^2$, and $\Phi(\tau_1,\tau_2,\alpha,\zeta) \triangleq 2 V_1\sigma^2 + 8 V_2 \kappa^2$.
\end{theorem}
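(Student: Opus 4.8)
The plan is to chain the three lemmas into a single telescoping argument over the $K$ iterations. Lemma~\ref{lemma-one-step} already furnishes a one-step descent inequality for the auxiliary global model $\bm{u}_k$, and its right-hand side contains the deviation term $\frac{\eta L^2}{2}\mathbb{E}[\|\mathbf{W}_k(\mathbf{I}_C-\mathbf{M})\|_{\mathbf{M}}^2]$, which equals $\frac{\eta L^2}{2}\mathbb{E}[\|\mathbf{W}_k-\bm{u}_k\mathbf{1}_C^\TT\|_{\mathbf{M}}^2]$ since $\bm{u}_k\mathbf{1}_C^\TT=\mathbf{W}_k\mathbf{M}$; this is precisely the quantity controlled on average by Lemma~\ref{lemma:deviation}. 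First I would sum the inequality \eqref{eq:one-step} over $k=1,\dots,K$. The left-hand side telescopes to $\mathbb{E}[F(\bm{u}_{K+1})]-\mathbb{E}[F(\bm{u}_1)]$, which is bounded below by $F(\bm{u}^*)-\mathbb{E}[F(\bm{u}_1)]=-\Delta$ by definition of $\bm{u}^*$. This produces a lower bound $-\Delta$ on an expression involving the gradient-norm sum, a constant $\Phi_0$ term, the $J_k$ terms, and the accumulated deviation $\sum_k\mathbb{E}[\|\mathbf{W}_k(\mathbf{I}_C-\mathbf{M})\|_{\mathbf{M}}^2]$.

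Next I would substitute the bound from Lemma~\ref{lemma:deviation} for the accumulated deviation. Multiplying \eqref{eq:deviation} by $\frac{\eta L^2}{2}K$ yields an upper bound of the form $\eta^3L^2V_1\sigma^2 K+4\eta^3L^2V_2\kappa^2 K+4\eta^3L^2V_2\sum_k J_k$. The important feature is that this introduces a \emph{second} term proportional to $\sum_k J_k$, which must be reconciled with the $-\left(\frac{\eta}{2}-\frac{\eta^2L}{2}\right)\sum_k J_k$ term already present from summing Lemma~\ref{lemma-one-step}.

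The hard part will be disposing of these $J_k$ terms, and this is exactly where the first learning-rate condition in \eqref{eq:lr} intervenes. Collecting the two contributions, the net coefficient of $\sum_k J_k$ is $-\frac{\eta}{2}\left(1-\eta L-8\eta^2L^2V_2\right)$, which is nonpositive precisely when $1-\eta L-8\eta^2L^2V_2\geq 0$; since $J_k\geq 0$ this whole contribution is then nonpositive and may be discarded, only tightening the bound. (The second condition $1-16\eta^2L^2V_3>0$ is needed upstream so that $V_1$ and $V_2$ are well-defined and positive in Lemma~\ref{lemma:deviation}.) After dropping the $J_k$ terms, the surviving inequality reads $-\Delta\leq-\frac{\eta}{2}\sum_k\mathbb{E}[\|\nabla F(\bm{u}_k)\|_2^2]+\frac{\eta^2 LK}{2}\Phi_0+\eta^3L^2V_1\sigma^2 K+4\eta^3L^2V_2\kappa^2 K$. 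Finally I would isolate the gradient-norm sum and divide through by $\frac{\eta K}{2}$; identifying $\Phi_0=\sum_{i\in\mathcal{C}}m_i^2\sigma^2$ and $\Phi(\tau_1,\tau_2,\alpha,\zeta)=2V_1\sigma^2+8V_2\kappa^2$ then reproduces the claimed bound \eqref{eq:thm} term by term.
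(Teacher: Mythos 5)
Your proposal is correct and follows essentially the same route as the paper's proof: summing the one-step descent inequality of Lemma~\ref{lemma-one-step} over $k$, telescoping against $\Delta$, substituting the averaged deviation bound of Lemma~\ref{lemma:deviation}, and using the first condition in \eqref{eq:lr} to make the net coefficient $-\frac{\eta}{2}\left(1-\eta L-8\eta^2 L^2 V_2\right)$ of $\sum_k J_k$ nonpositive so those terms can be dropped. Your coefficient bookkeeping (including the role of $1-16\eta^2L^2V_3>0$ in keeping $V_1,V_2$ well-defined) checks out, and dividing by $\frac{\eta K}{2}$ reproduces \eqref{eq:thm} exactly.
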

\begin{proof}
We sum up both sides of \eqref{eq:one-step} over $k=1,2,\dots,K$, and divide them by $K$. Then we apply \eqref{eq:deviation} to its RHS and choose a suitable $\eta$ that satisfies the conditions in \eqref{eq:lr} to eliminate $J_k$. The proof is concluded by rearranging terms.
\end{proof}

\subsection{Discussions}
The result of Theorem \ref{thm-1} provides us with various insights, as presented in this subsection.
\begin{corollary}\label{corollary-1}
\textbf{(Convergence rate)}
    If the learning rate is chosen as $\eta=\mathcal{O}\left(\frac{1}{L\sqrt{K}}\right)$, \eqref{eq:thm} can be simplified as follows:
    \begin{equation}
        \frac{1}{K} \sum_{k=1}^{K} \mathbb{E} \left[ \left\| \nabla F(\bm{u}_k) \right\|_2^2 \right]
        \!\leq\! \mathcal{O}\left(\frac{2L\Delta+\Phi_0}{\sqrt{K}} \right)
        + \mathcal{O}\left( \frac{\Phi(\tau_1,\tau_2,\alpha,\zeta)}{K} \right),
    \label{eq:coro}
    \end{equation}
    which implies that Algorithm \ref{alg-1} converges within $\mathcal{O}\left(\frac{1}{\epsilon^2}\right)$ training iterations to find an $\epsilon$-approximate solution (i.e., $\frac{1}{K} \sum_{k=1}^{K}\mathbb{E} \left[ \left\| \nabla F(\bm{u}_k) \right\|_2^2 \right] \leq \epsilon$).
\end{corollary}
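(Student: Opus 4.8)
The plan is to substitute the prescribed step size into the bound \eqref{eq:thm} of Theorem \ref{thm-1} and track, term by term, the resulting dependence on $K$. Writing $\eta = \frac{c}{L\sqrt{K}}$ for a fixed constant $c>0$, I would first confirm that the step-size conditions \eqref{eq:lr} are satisfied once $K$ is large enough: since $\eta L = \frac{c}{\sqrt{K}}$ and $\eta^2 L^2 = \frac{c^2}{K}$ both vanish as $K\to\infty$, the inequalities $1-\eta L-8\eta^2 L^2 V_2\geq 0$ and $1-16\eta^2 L^2 V_3>0$ hold for all $K$ above a threshold determined by the constants $c$, $V_2$, and $V_3$.

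The main obstacle is that $V_1$ and $V_2$, and hence $\Phi(\tau_1,\tau_2,\alpha,\zeta)=2V_1\sigma^2+8V_2\kappa^2$, are not genuinely constant: each carries the factor $\frac{1}{1-16\eta^2L^2V_3}$, which depends on $\eta$ and therefore on $K$. The resolution is to note that $V_3$ is independent of $K$ (it involves only $\tau_1,\tau_2,\alpha,\zeta$), so for $K$ large this factor is bounded, say by $2$; consequently $V_1$, $V_2$, and $\Phi$ are all $\mathcal{O}(1)$ in $K$ and may be treated as constants when extracting the rate.

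With $\Phi=\mathcal{O}(1)$, I would bound the three terms of \eqref{eq:thm} separately. The first gives $\frac{2\Delta}{\eta K}=\frac{2L\Delta}{c\sqrt{K}}=\mathcal{O}\!\left(\frac{L\Delta}{\sqrt{K}}\right)$, the second gives $\eta L\Phi_0=\frac{c\Phi_0}{\sqrt{K}}=\mathcal{O}\!\left(\frac{\Phi_0}{\sqrt{K}}\right)$, and the third gives $\eta^2L^2\Phi=\frac{c^2\Phi}{K}=\mathcal{O}\!\left(\frac{\Phi}{K}\right)$. Grouping the first two terms, which share the order $K^{-1/2}$, produces the term $\mathcal{O}\!\left(\frac{2L\Delta+\Phi_0}{\sqrt{K}}\right)$, while the third yields $\mathcal{O}\!\left(\frac{\Phi}{K}\right)$, establishing \eqref{eq:coro}. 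Finally, since the $K^{-1}$ term is dominated by the $K^{-1/2}$ term for large $K$, the entire right-hand side is $\mathcal{O}(1/\sqrt{K})$; requiring it to be at most $\epsilon$ forces $\sqrt{K}=\Omega(1/\epsilon)$, i.e.\ $K=\mathcal{O}(1/\epsilon^2)$, which is the claimed iteration complexity for an $\epsilon$-approximate solution.
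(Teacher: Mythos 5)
Your proposal is correct and follows the same direct-substitution route the paper intends: the corollary is an immediate consequence of Theorem \ref{thm-1} under $\eta=\mathcal{O}\bigl(\frac{1}{L\sqrt{K}}\bigr)$, with each of the three terms on the right-hand side of \eqref{eq:thm} tracked in $K$ exactly as you do. Your explicit handling of the fact that $V_1$, $V_2$, and hence $\Phi(\tau_1,\tau_2,\alpha,\zeta)$ carry the $\eta$-dependent factor $\frac{1}{1-16\eta^2L^2V_3}$ --- bounded once $K$ exceeds a threshold since $V_3$ depends only on $\tau_1,\tau_2,\alpha,\zeta$ --- is a point the paper leaves implicit, and it correctly resolves the only subtlety in treating $\Phi$ as $\mathcal{O}(1)$.
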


The results of Theorem \ref{thm-1} and Corollary \ref{corollary-1} contain two variance terms, i.e., $\Phi_0$  and $\Phi(\tau_1,\tau_2,\alpha,\zeta)$.
The constant term $\Phi_0$ is the same as the bound in previous literature of centralized SGD \cite{bottou2018optimization} and FL frameworks \cite{d2,zhangweiyi}.
However, infrequent aggregations in SD-FEEL result in model divergence across client nodes, reflected through $\Phi(\tau_1,\tau_2,\alpha,\zeta)$.

\begin{remark}\label{rm:tau}
\textbf{(Effect of aggregation periods)}
By taking the first-order derivative, we see that the term $\Phi(\tau_1,\tau_2,\alpha,\zeta)$ increases with both $\tau_1$ and $\tau_2$.
With more local iterations performed, the models on client nodes become more biased towards local datasets, which slows down the convergence speed \cite{wang2019mlsys,wang2020tackling}.
Therefore, to minimize the error floor in the RHS of \eqref{eq:thm}, we prefer setting $\tau_1=1$ and $\tau_2=1$.
Nevertheless, as will be seen in Section \ref{sec:experiment}, with smaller values of $\tau_1$ and $\tau_2$, such frequent aggregations incur huge latency in both intra-cluster and inter-cluster communications. 
Thus, $\tau_1$ and $\tau_2$ should be properly selected to achieve fast convergence with respect to the wall-clock time.
\end{remark}

\begin{remark}\label{rm:topo}
\textbf{(Effect of network among edge servers)}
The RHS of \eqref{eq:thm} also increases with $\zeta$ and $\alpha$, 
Note that when $\zeta^\alpha \!\rightarrow\! 0$ (i.e., with the fully connected topology or $\alpha \!\rightarrow\! \infty$), the edge servers can reach perfect consensus (i.e., $\bm{y}_{k}^{(d)} \!\leftarrow\! \sum_{j\in\mathcal{D}} \Tilde{m}_j \bm{y}_{k}^{(j)}$) after inter-cluster model aggregation.
\begin{itemize}
    \item A smaller value of $\zeta$ (i.e., a more connected graph among the edge servers) results in faster convergence. Fig. \ref{fig:three-topo} shows several typical topologies formed by six edge servers, including the star ($\zeta=0.71$), ring ($\zeta=0.6$), partially connected ($\zeta=0.33$), and fully connected ($\zeta=0$) topologies. It is clear that the fully-connected network topology can achieve the best performance with a fixed $\alpha$.
    \item Increasing $\alpha$ (i.e., raising the inter-server communication overhead) can reduce the variance terms in the RHS of \eqref{eq:thm} and thus speed up the convergence. Nevertheless, such benefits diminish as $\alpha$ increases. As will be demonstrated in Section \ref{sec:experiment}, there is no additional gain after $\alpha$ is beyond a certain value.
\end{itemize}
\end{remark}

\begin{figure*}[t]
    \centering
    \setlength\abovecaptionskip{-1pt}
    \setlength\belowcaptionskip{-10pt}
    \includegraphics[width=0.8\textwidth]{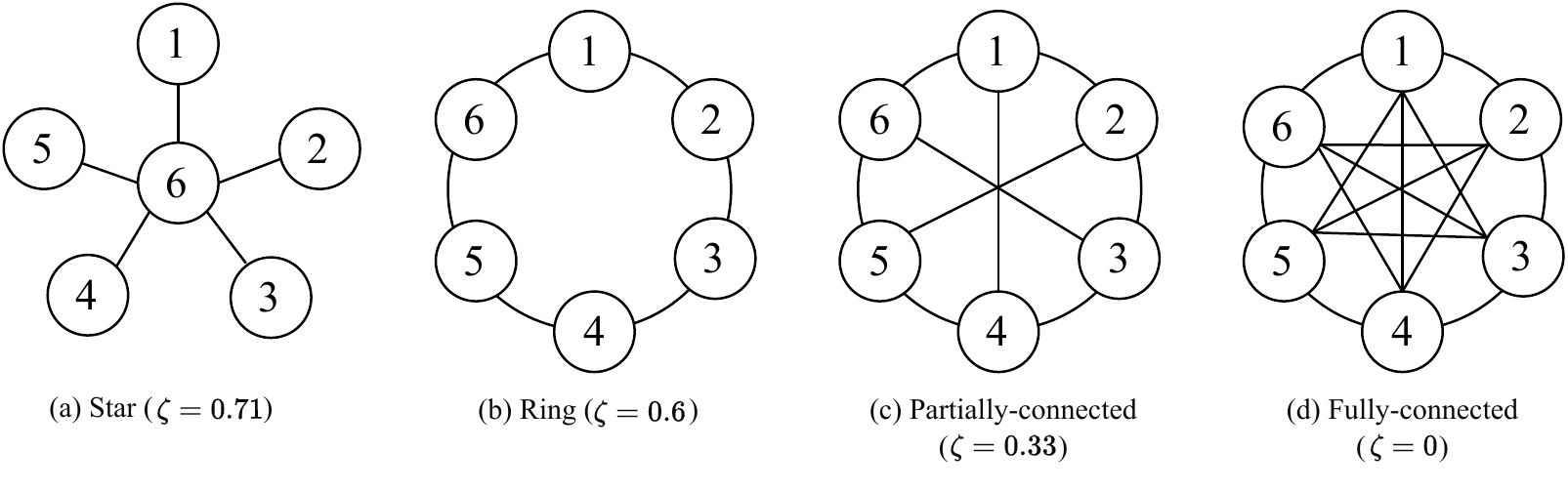}
    \caption{Typical network topologies of the edge servers ($\zeta$ denotes the second largest eigenvalue of matrix $\mathbf{P}$ defined in \eqref{eq5}).}
    \label{fig:three-topo}
\end{figure*}

\begin{remark}\label{rm:reduce} 
\textbf{(Comparison with HierFAVG)}
If edge servers can achieve a perfect consensus in each training iteration, i.e., $\zeta^\alpha=0$, our results in Theorem \ref{thm-1} and Corollary \ref{corollary-1} reduce to the case of HierFAVG \cite{HierFL}.
Accordingly, Corollary \ref{corollary-1} indicates that HierFAVG requires fewer iterations to converge. Nevertheless, SD-FEEL adopts inter-server communication to replace the communication between the edge server and the Cloud, which significantly reduces the per-iteration latency. 
As a result, the total training time of the two architectures depends on the application scenarios. Here we discuss two special cases as follows:
\begin{itemize}
    \item When the network among edge servers is fully-connected, SD-FEEL always leads to a better model than HierFAVG within a given training time.
    \item When the inter-server communication latency is adequately low, the edge servers are allowed to perform sufficient rounds of model sharing to reach consensus. Thus, SD-FEEL converges faster than HierFAVG with respect to the wall-clock time.
\end{itemize}
\end{remark}

When client nodes differ slightly in computation resources, i.e., with a small $H$, SD-FEEL converges fast with Algorithm \ref{alg-1}. Nevertheless, the training efficiency may be degraded if there is a huge disparity in computational speeds, i.e., $H$ is large. In the next section, we propose an asynchronous training algorithm to alleviate such an issue.

\section{Asynchronous Training for SD-FEEL}\label{sec:async}
When client nodes have significantly diverse computational speeds (i.e., $H \gg 1$), performing an equal number of local iterations results in an idle time of the fast client nodes since they have to wait for the straggling ones to complete local training. To mitigate such an issue, we propose an asynchronous training algorithm for SD-FEEL where each edge server presets a deadline for local training and proceeds to the next training iteration once it completes inter-cluster model aggregation of the current iteration.

\subsection{Training Algorithm}\label{sec:async-alg}
Denote the training iteration counter by $t$, which gives the total number of training iterations completed by edge clusters. 
Before entering the training process, each edge server presets a deadline for local model updates given the computational resources of the coordinated client nodes. Specifically, edge server $d$ sets the value of $T_\text{comp}^{(d)}$ for the client nodes in $\mathcal{C}_d$.
The determination of $T_\text{comp}^{(d)}$ is beyond the scope of this paper, but in general, it follows the same principles mentioned in Remark \ref{rm:tau}, namely to ensure effective training as well as avoid large model inconsistency in model aggregations.
An illustration of the asynchronous training process is shown in Fig. \ref{fig:async-2cluster}. We present the details of three key stages in each iteration as follows.

\subsubsection{Local Model Update}
In the $d$-th edge cluster, client node $i$ performs $\theta_i= h_i \beta \in \{\theta_{\text{min}},\theta_{\text{min}}+1,\dots,\theta_{\text{max}}\}$ local epochs using mini-batch SGD within the duration of $T_\text{comp}^{(d)}$ and $\beta$ is related to the complexity of training task and the batch size.
Denote the model of client node $i$ at the beginning of the $l$-th local training epoch in the $t$-th global iteration by $\bm{w}_{t,l}^{(i)}$. Then we have the following expression:
\begin{equation}
\bm{w}_{t,l+1}^{(i)} \!\!\leftarrow\! \bm{w}_{t,l}^{(i)} \!-\! \eta g(\bm{\xi}_{t,l}^{(i)}; \bm{w}_{t,l}^{(i)}), l \!\in \!\! \left\{0,1,\!\dots\!,\theta_i\!-\!1\right\}, i \!\in\! \mathcal{C}.
\label{eq:update-async}
\end{equation}
Given that the varying numbers of local epochs among client nodes may incur model bias \cite{wang2020tackling}, client node $i$ normalizes the local updates by $\theta_i$ as follows:
\begin{equation}
    \bm{\Delta}_{t}^{(i)} \!\triangleq\! \frac{1}{\theta_i} \! \left(\bm{w}_{t,\theta_i}^{(i)} - \bm{w}_{t,0}^{(i)}\right) \!=\! - \frac{\eta}{\theta_i} \! \sum_{l=0}^{\theta_i-1} \! g(\bm{\xi}_{t,l}^{(i)}; \bm{w}_{t,l}^{(i)}), i\in \mathcal{C}.
\label{eq:delta-async}
\end{equation}

\subsubsection{Intra-cluster Model Aggregation}
Once the deadline $T_\text{comp}^{(d)}$ arrives, edge server $d$ collects the normalized model updates from its associated client nodes $\mathcal{C}_d$ and computes the weighted average with the weighting factors $\{\hat{m}_i\}$'s.
Denote the most updated model at edge server $d$ at the beginning of the $t$-th iteration by $\bm{y}_t^{(d)}$.
Then such an update is added to $\bm{y}_t^{(d)}$ as the implementation of gradient descent, which can be expressed as follows:
\begin{equation}
    \bm{\hat{y}}_{t}^{(d)} 
    \leftarrow \bm{y}_t^{(d)} + \overline{\theta}_d \sum_{i\in\mathcal{C}_d} \hat{m}_i \bm{\Delta}_t^{(i)}, d \in \mathcal{D},
\label{eq:intra-async}
\end{equation}
where $\overline{\theta}_d \triangleq \sum_{i\in \mathcal{C}_d} \hat{m}_i \theta_i$ is the weighted average of the numbers of local epochs completed by the client nodes.

\subsubsection{Inter-cluster Model Aggregation}
After intra-cluster model aggregation, edge server $d$ exchanges the local model $\bm{\hat{y}}_{t}^{(d)}$ with its neighboring edge servers in $\mathcal{N}_d$. The models maintained by these edge servers are accordingly updated as follows:
\begin{equation}
    \bm{y}_{t}^{(j)} \leftarrow \sum_{j^\prime \in \mathcal{N}_{j}\cup\{j\}} p_{t}^{j^\prime,j} \bm{\hat{y}}_{t}^{(j^\prime)}, j \in \mathcal{N}_{d}\cup\{d\},
    \label{eq:inter-async}
\end{equation}
where $\mathbf{P}_t \triangleq \{p_t^{j^\prime,j}\}\in\mathbb{R}^{D \times D}$ denotes the mixing matrix in the $t$-th iteration. Note that the neighboring edge server $j$ maintains a model updated in a previous global iteration $t^\prime(j)<t$, the gap of which is named the \emph{iteration gap}, i.e., $\delta_t^{(j)} \triangleq t - t^\prime(j)$. Since a larger value of $\delta_t^{(j)}$ implies that the model is staler and has less value to other edge clusters \cite{xie2019asynchronous}, we design a staleness-aware mixing matrix as follows:
\begin{equation}
    p_{t}^{i,j} = \left\{
        \begin{array}{ll}
        \frac{\psi(\delta_t^{(j)})}{\Psi_t^{(j)}}, & \text{if} \; j=d \; \text{and} \; i\in\mathcal{N}_{d}\cup\{d\}, \\
        p_{t}^{j,i}, \; & \text{if} \; j \in\mathcal{N}_{d} \; \text{and} \; i=d,\\
        1-p_{t}^{d,j}, \; & \text{if} \; j \; \in\mathcal{N}_{d} \; \text{and} \; i=j ,\\
        1, \; & \text{if} \; j \; \notin \mathcal{N}_{d}\cup\{d\} \; \text{and} \; i=j ,\\
        0, & \text{otherwise,}
        \end{array}
        \right.
\label{eq:pk}
\end{equation}
where $\psi(x)$ is a general non-increasing function of $x$ and $\Psi_t^{(j)} \triangleq \sum_{i\in\mathcal{N}_{j}\cup\{j\}} \psi(\delta_t^{(i)})$. Then the model $\bm{y}_t^{(d)}$ is broadcasted to the client nodes in $\mathcal{C}_d$ according to $\bm{w}_{t+1,0}^{(i)} \leftarrow \bm{y}_{t}^{(d)}, i \in \mathcal{C}_d$. For illustration, consider an example with three edge clusters $d\in\{1,2,3\}$ arranged in sequential order. When edge cluster $1$ triggers the inter-cluster model aggregation in the $t$-th training iteration, the iteration gaps of its neighboring edge cluster ($\mathcal{N}_1=\{2\}$) is $\delta_t^{(2)}=2$. The corresponding mixing matrix is $\mathbf{P}_t = \left[\frac{\psi(0)}{\Psi_t^{(1)}}, \frac{\psi(2)}{\Psi_t^{(1)}}, 0; \frac{\psi(2)}{\Psi_t^{(1)}}, 1-\frac{\psi(2)}{\Psi_t^{(1)}}, 0; 0,0,1 \right]$.

The above steps repeat until timeout or the values of local loss at all the client nodes cannot be further reduced. Assume $T$ is the global iteration index at that time. Then the system enters the consensus phase to output a global model $\sum_{d\in\mathcal{D}} \Tilde{m}_d \bm{y}_{T}^{(d)}$.

\subsection{Convergence Analysis}

We define an auxiliary global model at the $t$-th training iteration as $\overline{\bm{y}}_{t} \triangleq \sum_{d\in\mathcal{D}} \Tilde{m}_d \bm{y}_{t}^{(d)}$, the evolution of which is expressed as:
\begin{equation}
    \bm{\overline{y}}_{t+1} = \bm{\overline{y}}_{t} - \eta \mathbf{\hat{G}}_t \bm{\Tilde{m}}^\TT \mathbf{\Lambda},
\label{eq:y_t}
\end{equation}
where $\mathbf{\hat{G}}_t \triangleq [ \sum_{i\in\mathcal{C}_d} \frac{\hat{m}_i}{\theta_i} \sum_{l=0}^{\theta_i-1}$ $g(\bm{\xi}_{t,l}^{(i)};\bm{w}_{t,l}^{(i)}) ]_{d\in\mathcal{D}} \!\in\! \mathbb{R}^{M \times D}$, $\bm{\Tilde{m}}\triangleq[\Tilde{m}_d]_{d\in\mathcal{D}}$, and $\mathbf{\Lambda} \triangleq \text{diag}(\overline{\theta}_1, \overline{\theta}_2,\dots,$ $\overline{\theta}_D)$.
Since the client nodes perform different local epochs, we prove the convergence by focusing on the servers' models, in contrast to the client models as in the proof of Section \ref{sec:theory}. The equivalence $\overline{\bm{y}}_{t}=\sum_{i\in\mathcal{C}} m_i \bm{w}_{t+1,0}^{(i)}$ always holds if all edge servers broadcast the models to the associated client nodes.

The analysis follows the procedures in subsection \ref{sec:convergence}, i.e., using accumulated gradients to provide an upper bound for model deviation.
Nevertheless, the asynchronous training incurs additional model inconsistency among edge clusters. 
Specifically, iteration counter $t$ increases once an edge cluster completes a training iteration.
However, the client nodes in other edge clusters are training on stale models as aforementioned.
To characterize this phenomenon, we define $\bm{a}_{\Tilde{t}}^{(d)} \triangleq \bm{a}_{t-\delta_t^{(d)}}^{(d)}$ (respectively $\bm{a}_{\Tilde{t}}^{(i)} \triangleq \bm{a}_{t-\delta_t^{(d)}}^{(i)}, i \!\in\! \mathcal{C}_d$) as the delayed model or gradient at the $d$-th edge server (respectively the $i$-th client node) in the $t$-th iteration.
The following lemma shows that $\delta_t^{(d)}$ is upper bounded throughout the whole training process.
\begin{lemma}\label{lemma-delta}
\textbf{(Bounded iteration gap)}
There exists a positive integer $\delta_\text{max}$ such that $\delta_t^{(d)} \leq \delta_\text{max}, \forall t \in \mathbb{N}, d\in\mathcal{D}$.
\end{lemma}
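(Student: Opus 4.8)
The plan is to analyze the timeline of the asynchronous protocol in \emph{wall-clock time} and to argue that, between any two consecutive model updates of a given edge cluster, only a bounded number of global-counter increments can occur. Recall that the counter $t$ advances by one whenever \emph{any} edge cluster finishes a full iteration (local update, intra-cluster aggregation, inter-cluster aggregation), and that $\delta_t^{(j)}$ counts exactly how many such increments have elapsed since cluster $j$ last refreshed its model. Hence it suffices to bound the number of completions produced by all clusters during one cycle of the slowest-advancing cluster.

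First I would establish that each cluster's per-iteration wall-clock duration lies in a fixed finite interval. The local computation phase of cluster $d$ is capped by the preset deadline $T_\text{comp}^{(d)}$, a finite positive constant, while the intra-/inter-cluster communication phases also take finite, strictly positive time. Taking the infimum and supremum over all $d\in\mathcal{D}$ of these total per-iteration durations yields constants $0<T_{\min}\le T_{\max}<\infty$: finiteness of $T_{\max}$ follows from the finiteness of every deadline (no cluster stalls forever) and of the link latencies, and positivity of $T_{\min}$ from the fact that computation and communication cannot complete instantaneously. In particular the ratio $T_{\max}/T_{\min}$ is finite, and it is controlled by the disparity of the deadlines $\{T_\text{comp}^{(d)}\}$, which is in turn governed by the device-heterogeneity gap $H$.

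Second, I would fix an arbitrary global iteration $t$ and a cluster $j$ participating in the inter-cluster aggregation at that instant. Between cluster $j$'s previous update (at global index $t^\prime(j)$) and the current instant, the elapsed wall-clock time is at most one full cycle of cluster $j$, hence at most $T_{\max}$, since otherwise cluster $j$ would already have produced another update. Within any wall-clock window of length $T_{\max}$, a single cluster can complete at most $\lceil T_{\max}/T_{\min}\rceil$ iterations; summing over the $D$ clusters bounds the number of counter increments in that window, giving $\delta_t^{(j)} = t - t^\prime(j) \le D\lceil T_{\max}/T_{\min}\rceil =: \delta_\text{max}$. As this bound is uniform in $t$, in $j$, and in the realization of the (possibly random) computation times, the claim follows with this finite positive integer $\delta_\text{max}$.

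The main obstacle is to make the informal scheduling argument rigorous: one must pin down a precise event-driven model of when each cluster increments the counter, verify that per-iteration durations are uniformly bounded away from $0$ and $\infty$ even under the varying number of local epochs $\theta_i\in\{\theta_\text{min},\dots,\theta_\text{max}\}$, and handle the edge case in which several clusters complete simultaneously. Once this bookkeeping is fixed, the counting step is elementary. A secondary point, useful for interpretability, is to express $\delta_\text{max}$ explicitly in terms of $H$ and $D$, which requires relating the spread of the deadlines to the heterogeneity gap.
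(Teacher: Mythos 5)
Your proposal is correct and follows essentially the same route as the paper: the paper's proof likewise counts, over the wall-clock duration of one iteration of the slowest cluster $j^*$, how many iterations the other clusters can complete, using the fact that once the deadlines $\{T_\text{comp}^{(d)}\}$ are set each cluster's per-iteration latency is a fixed constant $T_\text{iter}^{(d)}$, which yields $\delta_\text{max}=\sum_{d\in\mathcal{D}}\bigl(\lceil T_\text{iter}^{(j^*)}/T_\text{iter}^{(d)}\rceil-1\bigr)$. Your coarser uniform bound $D\lceil T_{\max}/T_{\min}\rceil$ (modulo a harmless off-by-one at window endpoints) is equally sufficient for the existence claim, merely less tight than the paper's per-cluster constant.
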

\begin{proof}
After setting $\left\{ T_\text{comp}^{(d)} \right\}$'s, the training latency for each iteration is fixed as $T_\text{iter}^{(d)}$.
During one training iteration of the slowest edge cluster $j^*$, $\delta_\text{max}= \sum_{d\in\mathcal{D}} \left( \left\lceil \frac{T_\text{iter}^{(j^*)}}{T_\text{iter}^{(d)}} \right\rceil -1 \right)$ gives a maximal value for the number of total iterations that other edge clusters have completed.
\end{proof}

To show the convergence, we begin with upper bounding the expected change of loss functions in two consecutive iterations in Lemma \ref{lemma-5}.
\begin{lemma}\label{lemma-5}
The expected change of the global loss function in two consecutive iterations is bounded as follows:

\begin{align}
    &\quad \mathbb{E}[F(\bm{\overline{y}}_{t+1})] - \mathbb{E} [F(\bm{\overline{y}}_{t})] \label{eq:loss-change-async} \\
    & \leq - \frac{1}{2} \eta\theta_{\mathrm{min}} \mathbb{E} \left[\left\| \nabla F(\bm{\overline{y}}_t) \right\|^2 \right]
    + \frac{1}{2} \eta^2 L \theta_{\mathrm{max}}^2 \theta_{\mathrm{min}}^{-1} \sum_{i\in\mathcal{C}} m_i^2 \sigma^2 \nonumber \\
    & - \frac{\eta}{2} (\theta_{\mathrm{min}} - \eta L  \theta_{\mathrm{max}}^2) Q_{\Tilde{t}}
    +\frac{1}{2} \eta \theta_{\mathrm{min}} \underbrace{\mathbb{E} \left[ \left\| \nabla F(\bm{\overline{y}}_t) -  \nabla\mathbf{\hat{F}}_{\Tilde{t}} \bm{\Tilde{m}}^\TT \right\|^2 \right]}_{\mathcal{E}_t}, \nonumber
\end{align}
where $Q_{\Tilde{t}} \!\triangleq\! \mathbb{E} [ \| \nabla\mathbf{\hat{F}}_{\Tilde{t}} \bm{\Tilde{m}}^\TT \|^2 ]$ and $\nabla\mathbf{\hat{F}}_{\Tilde{t}} \!\triangleq\! [ \sum_{i\in\mathcal{C}_d} \! \frac{\hat{m}}{\theta_i} \! \sum_{l=0}^{\theta_i-1} \nabla F_i (\bm{w}_{\Tilde{t},l}^{(i)}) ]_{d\in\mathcal{D}}$.
\end{lemma}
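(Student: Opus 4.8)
The plan is to obtain this one-step bound by inserting the evolution \eqref{eq:y_t} into the quadratic upper bound granted by smoothness, and then taking expectations so that the three structural quantities $\mathbb{E}[\|\nabla F(\bm{\overline{y}}_t)\|^2]$, $Q_{\Tilde{t}}$ and $\mathcal{E}_t$ fall out of a completion of the square. First I would observe that the global objective $F=\sum_{i\in\mathcal{C}} m_i F_i$ inherits $L$-smoothness from \eqref{eq-smooth} as a convex combination of the $F_i$, so that $F(\bm{\overline{y}}_{t+1}) \leq F(\bm{\overline{y}}_t) + \langle \nabla F(\bm{\overline{y}}_t), \bm{\overline{y}}_{t+1}-\bm{\overline{y}}_t\rangle + \tfrac{L}{2}\|\bm{\overline{y}}_{t+1}-\bm{\overline{y}}_t\|^2$. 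Substituting the displacement $\bm{\overline{y}}_{t+1}-\bm{\overline{y}}_t = -\eta\,\mathbf{\hat{G}}_t\mathbf{\Lambda}\bm{\Tilde{m}}^\TT$ from \eqref{eq:y_t} splits the change of loss into a first-order inner-product term and a second-order squared-norm term in the stochastic aggregate $\mathbf{\hat{G}}_t\mathbf{\Lambda}\bm{\Tilde{m}}^\TT$.

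Next I would condition on the models present at iteration $t$ and take expectation. By the unbiasedness assumption \eqref{eq-gradient}, the conditional mean of $\mathbf{\hat{G}}_t$ is $\nabla\mathbf{\hat{F}}_{\Tilde{t}}$, i.e.\ the same column-stacked per-cluster sum of gradients but evaluated at the delayed iterates $\bm{w}_{\Tilde{t},l}^{(i)}$, which are well defined precisely because Lemma \ref{lemma-delta} bounds the iteration gap. This turns the first-order term into $-\eta\langle \nabla F(\bm{\overline{y}}_t), \nabla\mathbf{\hat{F}}_{\Tilde{t}}\mathbf{\Lambda}\bm{\Tilde{m}}^\TT\rangle$. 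For the second-order term I would apply the bias--variance split $\mathbb{E}\|X\|^2 = \|\mathbb{E}X\|^2 + \mathbb{E}\|X-\mathbb{E}X\|^2$, isolating a deterministic mean part $\|\nabla\mathbf{\hat{F}}_{\Tilde{t}}\mathbf{\Lambda}\bm{\Tilde{m}}^\TT\|^2$ and a pure-variance part.

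The variance part is the cleanest piece: using the bounded-variance bound \eqref{eq-variance} together with the independence of the mini-batch draws across client nodes and across the $\theta_i$ local epochs, the per-epoch variances add, and after bounding $\overline{\theta}_d\le\theta_{\mathrm{max}}$ and $\theta_i\ge\theta_{\mathrm{min}}$ and using the identity $\Tilde{m}_d\hat{m}_i=m_i$, this collapses to $\tfrac12\eta^2 L\theta_{\mathrm{max}}^2\theta_{\mathrm{min}}^{-1}\sum_{i\in\mathcal{C}} m_i^2\sigma^2$. The remaining deterministic terms I would treat by the completion of the square $-\langle\bm{a},\bm{b}\rangle = \tfrac12(\|\bm{a}-\bm{b}\|^2 - \|\bm{a}\|^2 - \|\bm{b}\|^2)$ taken around $\bm{b}=\nabla\mathbf{\hat{F}}_{\Tilde{t}}\bm{\Tilde{m}}^\TT$, which simultaneously produces the descent term $\|\nabla F(\bm{\overline{y}}_t)\|^2$, the consistency error $\mathcal{E}_t=\mathbb{E}\|\nabla F(\bm{\overline{y}}_t)-\nabla\mathbf{\hat{F}}_{\Tilde{t}}\bm{\Tilde{m}}^\TT\|^2$, and $Q_{\Tilde{t}}=\mathbb{E}\|\nabla\mathbf{\hat{F}}_{\Tilde{t}}\bm{\Tilde{m}}^\TT\|^2$; $\mathcal{E}_t$ is deliberately left intact here, to be controlled in the follow-up lemma that decomposes the asynchrony-induced variance.

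The main obstacle is the heterogeneity of the effective local-epoch counts $\overline{\theta}_d$ carried by the diagonal matrix $\mathbf{\Lambda}$. Because these differ across clusters, the aggregated descent direction $\nabla\mathbf{\hat{F}}_{\Tilde{t}}\mathbf{\Lambda}\bm{\Tilde{m}}^\TT$ is not a scalar multiple of the quantity $\nabla\mathbf{\hat{F}}_{\Tilde{t}}\bm{\Tilde{m}}^\TT$ appearing in $Q_{\Tilde{t}}$ and $\mathcal{E}_t$, so I cannot simply factor a single $\theta$ out of the inner-product and quadratic terms. The resolution I would pursue is to sandwich $\theta_{\mathrm{min}}\mathbf{I}_D \preceq \mathbf{\Lambda} \preceq \theta_{\mathrm{max}}\mathbf{I}_D$ and to bound the two contributions asymmetrically: the negative (descent and inner-product) contributions are lower-bounded using $\theta_{\mathrm{min}}$, whereas the positive (mean-quadratic) contribution is upper-bounded using $\theta_{\mathrm{max}}^2$, which is exactly why $\theta_{\mathrm{min}}$ weights the gradient-norm and $\mathcal{E}_t$ terms while $\theta_{\mathrm{max}}^2$ weights the $Q_{\Tilde{t}}$ and $\sigma^2$ terms. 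Reconciling this asymmetric sandwiching with the precise forms of $Q_{\Tilde{t}}$ and $\mathcal{E}_t$, while carrying the delayed iterates $\bm{w}_{\Tilde{t},l}^{(i)}$ throughout, is the delicate step that distinguishes this bound from the synchronous one-step lemma (Lemma \ref{lemma-one-step}).
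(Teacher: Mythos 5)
Your proposal follows the same skeleton as the paper's (one-sentence) proof: plug the RHS of \eqref{eq:y_t} into the smoothness/Taylor quadratic bound, use unbiasedness \eqref{eq-gradient} to replace $\mathbf{\hat{G}}_t$ by $\nabla\mathbf{\hat{F}}_{\Tilde{t}}$ in the first-order term, split the second-order term into mean and variance, and extract $\mathbb{E}[\|\nabla F(\bm{\overline{y}}_t)\|^2]$, $Q_{\Tilde{t}}$ and $\mathcal{E}_t$ via the polarization identity $-\langle \bm{a},\bm{b}\rangle = \tfrac12\left(\|\bm{a}-\bm{b}\|^2-\|\bm{a}\|^2-\|\bm{b}\|^2\right)$ \`a la Lemma 8 in \cite{d2}. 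Your variance computation is exactly right: the mini-batch noise terms are martingale differences across local epochs and independent across clients, so variances add, and with $\overline{\theta}_d\le\theta_{\mathrm{max}}$, $\theta_i\ge\theta_{\mathrm{min}}$ and $\Tilde{m}_d\hat{m}_i=m_i$ one lands precisely on $\tfrac12\eta^2 L\,\theta_{\mathrm{max}}^2\theta_{\mathrm{min}}^{-1}\sum_{i\in\mathcal{C}}m_i^2\sigma^2$.

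The gap is in your resolution of the $\mathbf{\Lambda}$-heterogeneity, which you correctly isolate as the delicate step but then dispatch with an argument that does not go through. The operator sandwich $\theta_{\mathrm{min}}\mathbf{I}_D\preceq\mathbf{\Lambda}\preceq\theta_{\mathrm{max}}\mathbf{I}_D$ does not imply either of the two inequalities your asymmetric bounding needs, because $\mathbf{\Lambda}$ sits inside a weighted sum of non-orthogonal cluster gradients with uncontrolled signs: writing $\bm{a}=\nabla F(\bm{\overline{y}}_t)$ and $\hat{\bm{f}}_d$ for the $d$-th column of $\nabla\mathbf{\hat{F}}_{\Tilde{t}}$, neither $\langle \bm{a}, \nabla\mathbf{\hat{F}}_{\Tilde{t}}\mathbf{\Lambda}\bm{\Tilde{m}}^\TT\rangle \ge \theta_{\mathrm{min}}\langle \bm{a}, \nabla\mathbf{\hat{F}}_{\Tilde{t}}\bm{\Tilde{m}}^\TT\rangle$ nor $\|\nabla\mathbf{\hat{F}}_{\Tilde{t}}\mathbf{\Lambda}\bm{\Tilde{m}}^\TT\|^2 \le \theta_{\mathrm{max}}^2\,\|\nabla\mathbf{\hat{F}}_{\Tilde{t}}\bm{\Tilde{m}}^\TT\|^2$ holds in general. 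For the second, take two clusters with $\Tilde{m}_1=\Tilde{m}_2$, $\hat{\bm{f}}_1=-\hat{\bm{f}}_2\neq\bm{0}$ and $\overline{\theta}_1\neq\overline{\theta}_2$: then $\nabla\mathbf{\hat{F}}_{\Tilde{t}}\bm{\Tilde{m}}^\TT=\bm{0}$ while $\nabla\mathbf{\hat{F}}_{\Tilde{t}}\mathbf{\Lambda}\bm{\Tilde{m}}^\TT\neq\bm{0}$, so no multiple of $Q_{\Tilde{t}}$ dominates the mean-quadratic term; the inner-product bound fails by the same cancellation, since $\langle\bm{a},\hat{\bm{f}}_d\rangle$ has no sign control. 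A rigorous patch is to polarize against the scalar $\theta_{\mathrm{min}}$, i.e., split $-\eta\langle\bm{a},\nabla\mathbf{\hat{F}}_{\Tilde{t}}\mathbf{\Lambda}\bm{\Tilde{m}}^\TT\rangle = -\eta\theta_{\mathrm{min}}\langle\bm{a},\nabla\mathbf{\hat{F}}_{\Tilde{t}}\bm{\Tilde{m}}^\TT\rangle - \eta\langle\bm{a},\nabla\mathbf{\hat{F}}_{\Tilde{t}}(\mathbf{\Lambda}-\theta_{\mathrm{min}}\mathbf{I}_D)\bm{\Tilde{m}}^\TT\rangle$ and control the residual by Young's inequality plus per-cluster deviation bounds (which inject additional $\kappa^2$ and model-divergence contributions of the type Lemma \ref{lemma-e-t} handles, altering the stated constants); alternatively the identity factors exactly when all $\overline{\theta}_d$ coincide. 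To be fair, the paper's proof leaves precisely this point implicit behind the citation of \cite{d2} — the stated constants are only reachable through the steps you describe — so your reconstruction faithfully mirrors the intended argument, but the sandwiching step is asserted rather than proved.
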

\begin{proof}
Similar to the proof of Lemma \ref{lemma-one-step}, by plugging the RHS of \eqref{eq:y_t} into the first-order Taylor expansion of $\nabla F(\bm{\overline{y}}_{t+1})$ and following Lemma 8 in \cite{d2}, we conclude the proof.
\end{proof}

The term $\mathcal{E}_t$ in \eqref{eq:loss-change-async} measures the degree to which the gradients collected from client nodes (i.e., $\nabla\mathbf{\hat{F}}_{\Tilde{t}} \bm{\Tilde{m}}^\TT$) deviate from the desired gradient of global model (i.e., $\nabla F(\bm{\overline{y}}_t)$).
Using the $L$-smoothness and the Jensen's inequality (i.e., $\|\bm{a}+\bm{b} \|^2 \leq 2\| \bm{a} \|^2 + 2\| \bm{b} \|^2, \forall \bm{a},\bm{b} \in\mathbb{R}^d$), we derive an upper bound for the term $\mathcal{E}_t$, which can be decomposed as follows:
\begin{equation}
\begin{split}
    \mathcal{E}_t &\leq 2 L^2\! \underbrace{\mathbb{E} \left[\left\| \bm{\overline{y}}_t - \bm{\overline{y}}_{\Tilde{t}} \right\|^2\right]}_{\mathcal{E}_{t,1}} 
    + 4L^2 \! \underbrace{\sum_{d\in \mathcal{D}} \Tilde{m}_d \mathbb{E}\left[ \left\| \bm{\overline{y}}_{\Tilde{t}} - \bm{y}_{\Tilde{t}}^{(d)} \right\|^2 \right]}_{\mathcal{E}_{t,2}} \\
    &+ 4L^2 \! \underbrace{ \sum_{d\in \mathcal{D}} \!\Tilde{m}_d \sum_{i\in \mathcal{C}_d}\! \frac{\hat{m}_i}{\theta_i} \sum_{l=0}^{\theta_i-1} \! \mathbb{E} \!\left[ \left\| \bm{y}_{\Tilde{t}}^{(d)} - \bm{w}_{\Tilde{t},l}^{(i)} \right\|^2 \right]}_{\mathcal{E}_{t,3}}\!.
\end{split}
\label{eq:respective}
\end{equation}
In the RHS of \eqref{eq:respective}, the term $\mathcal{E}_{t,1}$ quantifies the influence of the staleness, which can be upper bounded by using Lemma \ref{lemma-delta}.
The term $\mathcal{E}_{t,2}$ measures the inter-cluster model divergence between an edge server and the averaged model $\bm{\overline{y}}_{\Tilde{t}}$, while the term $\mathcal{E}_{t,3}$ measures the weighted sum of the model divergence between edge clusters.
After respectively bounding three terms, we obtain the following lemma.

\begin{lemma}\label{lemma-e-t}
With Assumption \ref{assumptions}, we have:
\begin{equation}
\begin{split}
    \frac{1}{T} \sum_{t=0}^{T-1} \mathcal{E}_t
    & \leq A (\theta_\mathrm{min},\theta_\mathrm{max},\delta_\mathrm{max}) \sigma^2 \\
    & + B (\theta_\mathrm{min},\theta_\mathrm{max},\delta_\mathrm{max}) \kappa^2 
    + C (\theta_\mathrm{max},\delta_\mathrm{max}) Q_{\Tilde{t}},
\end{split}
\label{eq-lemma}
\end{equation}
where $A (\theta_\mathrm{min}, \theta_\mathrm{max},\delta_\mathrm{max}) \triangleq 4 \eta^2 L^2 \delta_\mathrm{max}^2 \theta_{\mathrm{max}}^2 \theta_\mathrm{min}^{-1} U_4
    + \frac{4\eta^2L^2 (\theta_\mathrm{max}-1)}{1-2\eta^2L^2 U_2}
    + 8\eta^2 L^2 \theta_{\mathrm{max}}^2 \theta_\mathrm{min}^{-1} \frac{U_3}{T} \!\sum\limits_{t=0}^{T-1} \sum\limits_{s=1}^{t-1} \!\rho_{s,t-1}^2$,
    $B (\theta_\mathrm{min},\theta_\mathrm{max},\delta_\mathrm{max}) \triangleq 8 \eta^2 L^2 \delta_\mathrm{max}^2 \theta_{\mathrm{max}}^2 \theta_\mathrm{min}^{-1} U_4 + \frac{24\eta^2L^2U_2}{1 \!-\! 2\eta^2L^2 U_2}
    + 16 \eta^2 L^2 \theta_{\mathrm{max}}^2 \theta_\mathrm{min}^{-1} U_3 \frac{1}{T} \!\sum_{t=0}^{T-1} ( \sum_{s=1}^{t-1} \rho_{s,t-1} )^2$,
    $ C (\theta_\mathrm{max},\delta_\mathrm{max}) \triangleq 8 \eta^2 L^2 \delta_\mathrm{max}^2 \theta_\mathrm{max} U_4
    + 16 \eta^2 L^2 \theta_\mathrm{max}^2 U_3 \frac{1}{T} \!\sum_{t=s+1}^{T-1} \rho_{s,t-1} ( \sum_{l=1}^{t-1} \rho_{l,t-1} )$,
    $U_2 \!\triangleq\! \theta_\mathrm{max} (\theta_\mathrm{max}\!-\!1)$, 
    $U_3 \!\triangleq\! \frac{1\!+\!4\eta^2L^2 U_2}{1\!-\!2\eta^2L^2 U_2}$,
    $U_4 \!\triangleq\! \frac{1 \!+\! 22\eta^2L^2 U_2}{1\!-\!2\eta^2L^2 U_2}$,
    and $\rho_{s,t-1} \triangleq \| \prod_{l=s}^{t-1} \mathbf{P}_{l} - \mathbf{M} \|_\mathrm{op}$.
\end{lemma}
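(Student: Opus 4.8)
The plan is to bound the time-averaged deviation $\frac{1}{T}\sum_{t=0}^{T-1}\mathcal{E}_t$ by working from the three-way decomposition already established in \eqref{eq:respective}, namely $\mathcal{E}_t \leq 2L^2\mathcal{E}_{t,1} + 4L^2\mathcal{E}_{t,2} + 4L^2\mathcal{E}_{t,3}$. Each of the three components is bounded separately, and the resulting contributions are collected according to whether they scale with the gradient-sampling variance $\sigma^2$, the non-IIDness $\kappa^2$, or the global gradient magnitude $Q_{\Tilde{t}}$; this grouping is exactly what produces the three functions $A$, $B$, and $C$ in the statement.

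First I would treat the staleness term $\mathcal{E}_{t,1} = \mathbb{E}[\|\bm{\overline{y}}_t - \bm{\overline{y}}_{\Tilde{t}}\|^2]$. Since $\Tilde{t} = t - \delta_t^{(d)}$ with $\delta_t^{(d)} \le \delta_\mathrm{max}$ by Lemma \ref{lemma-delta}, I can telescope the update \eqref{eq:y_t} to write $\bm{\overline{y}}_t - \bm{\overline{y}}_{\Tilde{t}}$ as a sum of at most $\delta_\mathrm{max}$ increments of the form $-\eta\mathbf{\hat{G}}_s\bm{\Tilde{m}}^\TT\mathbf{\Lambda}$. Applying Jensen's inequality pulls out the factor $\delta_\mathrm{max}$, the diagonal matrix $\mathbf{\Lambda}$ contributes the $\theta_\mathrm{max}^2\theta_\mathrm{min}^{-1}$ scaling, and Assumption \ref{assumptions} splits each increment into a $\sigma^2$ part, a $\kappa^2$ part, and a $Q_{\Tilde{t}}$ part; this is the source of the $\delta_\mathrm{max}^2$ terms appearing in all of $A$, $B$, and $C$. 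In parallel, the intra-cluster term $\mathcal{E}_{t,3}$ measures how far a client's model $\bm{w}_{\Tilde{t},l}^{(i)}$ drifts from its server model $\bm{y}_{\Tilde{t}}^{(d)}$ over $l \le \theta_i - 1 \le \theta_\mathrm{max}-1$ local steps; because the gradient at step $l$ is itself evaluated at the drifting iterate, the natural bound is \emph{recursive} in $l$, and I would resolve the recursion under the learning-rate condition $1-2\eta^2L^2U_2 > 0$ with $U_2 = \theta_\mathrm{max}(\theta_\mathrm{max}-1)$. Solving this recursion yields the denominators $1-2\eta^2L^2U_2$ together with the auxiliary constants $U_3$ and $U_4$.

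Next I would handle the inter-cluster consensus term $\mathcal{E}_{t,2} = \sum_{d\in\mathcal{D}}\Tilde{m}_d\,\mathbb{E}[\|\bm{\overline{y}}_{\Tilde{t}} - \bm{y}_{\Tilde{t}}^{(d)}\|^2]$, which is the genuine decentralized-averaging error. Unrolling the server recursion \eqref{eq:inter-async} in matrix form expresses each $\bm{y}_{\Tilde{t}}^{(d)}$ as a product of mixing matrices applied to the accumulated history of intra-cluster updates, so its deviation from the weighted mean $\bm{\overline{y}}_{\Tilde{t}}$ is controlled by how close $\prod_{l=s}^{t-1}\mathbf{P}_l$ is to the consensus projector $\mathbf{M}$, precisely the quantity $\rho_{s,t-1} = \|\prod_{l=s}^{t-1}\mathbf{P}_l - \mathbf{M}\|_\mathrm{op}$. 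Bounding the squared norm with Jensen and Assumption \ref{assumptions} then yields terms weighted by $\sum_s\rho_{s,t-1}^2$ (feeding $A$), by $(\sum_s\rho_{s,t-1})^2$ (feeding $B$), and by the cross products $\rho_{s,t-1}(\sum_l\rho_{l,t-1})$ (feeding $C$); the reuse of the intra-cluster drift estimate here is what attaches the factor $U_3$ to these terms. Finally, summing all three bounds over $t=0,\dots,T-1$, dividing by $T$, and grouping the $\sigma^2$, $\kappa^2$, and $Q_{\Tilde{t}}$ contributions gives \eqref{eq-lemma}.

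I expect the main obstacle to be the intra-cluster drift recursion and its propagation into the inter-cluster term: because $\mathcal{E}_{t,3}$ is self-referential (the drift after $l$ local steps depends on the drift accumulated over the previous steps), one must set up and close a recursive inequality rather than a direct telescoping bound, and the same drift estimates then re-enter the inter-cluster analysis through $U_3$ and $U_4$. Keeping the bookkeeping consistent across these couplings, while ensuring the condition $1-2\eta^2L^2U_2>0$ keeps every denominator positive and finite, is the delicate part; by contrast, the staleness term $\mathcal{E}_{t,1}$ is comparatively routine once Lemma \ref{lemma-delta} supplies the uniform bound $\delta_\mathrm{max}$.
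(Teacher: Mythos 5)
Your proposal follows essentially the same route as the paper's proof: it uses the same decomposition \eqref{eq:respective}, bounds $\mathcal{E}_{t,1}$ by telescoping \eqref{eq:y_t} over at most $\delta_\mathrm{max}$ iterations with Jensen's inequality and Lemma \ref{lemma-delta} (as in Lemma \ref{lemma-4}), bounds $\mathcal{E}_{t,2}$ by unrolling the mixing recursion in matrix form and controlling $\rho_{s,t-1}$ (as in Lemma \ref{lemma-8}), and bounds $\mathcal{E}_{t,3}$ by closing a recursive drift inequality under the condition $1-2\eta^2L^2U_2>0$ (as in Lemma \ref{lemma-6}), before summing over $t$, dividing by $T$, and grouping the $\sigma^2$, $\kappa^2$, and $Q_{\Tilde{t}}$ contributions. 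One minor directional correction: in the paper the coupling that produces the $U_3$ and $U_4$ factors runs through $\mathcal{E}_{t,3}$, whose bound contains $\mathcal{E}_{t,1}$ and $\mathcal{E}_{t,2}$ (via bounding $\mathbb{E}[\|\nabla F_i(\bm{y}_{\Tilde{t}}^{(d)})\|^2]$), rather than the drift estimate re-entering the $\mathcal{E}_{t,2}$ bound itself.
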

\begin{proof}
The proof is obtained by respectively bounding the three terms in the RHS of \eqref{eq:respective}.
Please refer to the Appendix.
\end{proof}

We are now ready to prove the convergence of the asynchronous training algorithm.
\begin{theorem} \label{thm-2}
With Assumption \ref{assumptions}, if the learning rate $\eta$ satisfies
    \begin{equation}
    1 - \eta L  \theta_\mathrm{max}^2 \theta_\mathrm{min}^{-1} - C  (\theta_\mathrm{max},\delta_\mathrm{max}) \geq 0, 1 - 2\eta^2L^2 U_2 > 0,
    \label{eq:lr-async}
    \end{equation}
    we have
    \begin{equation}
    \begin{split}
    &\quad \frac{1}{T} \sum_{t=0}^{T-1} \mathbb{E} \left[\left\| \nabla F(\bm{\overline{y}}_t) \right\|^2 \right] \\
    &\leq\frac{2 \{\mathbb{E}[F(\bm{y}_{0})] - \mathbb{E} [F(\bm{\overline{y}}_{T})]\}}{\eta \theta_\mathrm{min} U_1 T}
    + \frac{\eta L \theta_\mathrm{max}^2}{U_1 \theta_\mathrm{min}^2} \sum_{i\in\mathcal{C}} m_i^2 \sigma^2 \\
    &+ A (\theta_\mathrm{min}, \theta_\mathrm{max},\delta_\mathrm{max}) \frac{\sigma^2}{U_1} + B (\theta_\mathrm{min}, \theta_\mathrm{max},\delta_\mathrm{max}) \frac{\kappa^2}{U_1},
    \end{split}
    \label{eq:thm-2}
    \end{equation}
    where $U_1 \!\triangleq\! \frac{1-14\eta^2L^2 U_2}{1-2\eta^2L^2 U_2}$.
\end{theorem}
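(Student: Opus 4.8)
The plan is to mirror the proof of Theorem \ref{thm-1}: convert the per-iteration descent of Lemma \ref{lemma-5} into a telescoping sum, insert the averaged model-inconsistency bound of Lemma \ref{lemma-e-t}, and then use the two learning-rate conditions in \eqref{eq:lr-async} to clear the auxiliary gradient term before rearranging. First I would sum \eqref{eq:loss-change-async} over $t=0,1,\dots,T-1$ and divide by $T$. Because every edge server is initialized with the same model, $\bm{\overline{y}}_0=\bm{y}_0$, so the left-hand side telescopes to $\frac{1}{T}\{\mathbb{E}[F(\bm{\overline{y}}_T)]-\mathbb{E}[F(\bm{y}_0)]\}$, whose rearrangement eventually furnishes the $\frac{2\{\mathbb{E}[F(\bm{y}_0)]-\mathbb{E}[F(\bm{\overline{y}}_T)]\}}{\eta\theta_{\mathrm{min}}U_1 T}$ term of \eqref{eq:thm-2}.

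Next I would substitute the bound of Lemma \ref{lemma-e-t} for the averaged term $\frac{1}{T}\sum_{t=0}^{T-1}\mathcal{E}_t$ that the coefficient $\frac{1}{2}\eta\theta_{\mathrm{min}}$ multiplies in the summed inequality. This produces two competing contributions proportional to the exact aggregated-gradient norm $Q_{\Tilde{t}}$: the term $-\frac{\eta}{2}(\theta_{\mathrm{min}}-\eta L\theta_{\mathrm{max}}^2)Q_{\Tilde{t}}$ already present in \eqref{eq:loss-change-async}, and the term $\frac{1}{2}\eta\theta_{\mathrm{min}}C(\theta_{\mathrm{max}},\delta_{\mathrm{max}})Q_{\Tilde{t}}$ coming from Lemma \ref{lemma-e-t}. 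Their sum is $-\frac{\eta\theta_{\mathrm{min}}}{2}\bigl(1-\eta L\theta_{\mathrm{max}}^2\theta_{\mathrm{min}}^{-1}-C(\theta_{\mathrm{max}},\delta_{\mathrm{max}})\bigr)Q_{\Tilde{t}}$; the first inequality in \eqref{eq:lr-async} is exactly what makes the parenthesized factor nonnegative, so this whole $Q_{\Tilde{t}}$ contribution is nonpositive and can be discarded, leaving only the $A\sigma^2$ and $B\kappa^2$ pieces from Lemma \ref{lemma-e-t} together with the stochastic term $\frac{1}{2}\eta^2 L\theta_{\mathrm{max}}^2\theta_{\mathrm{min}}^{-1}\sum_{i\in\mathcal{C}} m_i^2\sigma^2$.

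The second inequality in \eqref{eq:lr-async}, namely $1-2\eta^2L^2U_2>0$, plays a double role: it keeps the common denominator $1-2\eta^2L^2U_2$ of $U_3$ and $U_4$ (hence of $A$, $B$, $C$) positive so that those constants are finite and the bound of Lemma \ref{lemma-e-t} is valid, and it guarantees $U_1>0$. Collecting every factor that carries this denominator and consolidating the numerators is what turns the bare descent coefficient $\frac{1}{2}\eta\theta_{\mathrm{min}}$ of $\frac{1}{T}\sum_t\mathbb{E}[\|\nabla F(\bm{\overline{y}}_t)\|^2]$ into $\frac{1}{2}\eta\theta_{\mathrm{min}}U_1$. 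Dividing the rearranged inequality through by this coefficient then yields \eqref{eq:thm-2} directly, with the $\frac{1}{U_1}$ factors distributed across all four right-hand-side terms.

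The genuinely hard work sits inside Lemma \ref{lemma-e-t}, which I treat as given: the decomposition \eqref{eq:respective} isolates the staleness gap $\mathcal{E}_{t,1}$, the server-to-average divergence $\mathcal{E}_{t,2}$, and the within-cluster divergence $\mathcal{E}_{t,3}$, and bounding them requires the bounded iteration gap $\delta_t^{(d)}\le\delta_{\mathrm{max}}$ from Lemma \ref{lemma-delta} together with careful control of the mixing-product norms $\rho_{s,t-1}=\|\prod_{l=s}^{t-1}\mathbf{P}_l-\mathbf{M}\|_{\mathrm{op}}$. For the theorem itself, the only real delicacy is the bookkeeping of the third step: verifying that both conditions in \eqref{eq:lr-async} are simultaneously satisfiable and that the denominator-consolidation indeed produces exactly the factor $U_1$ rather than some coarser shrinkage of the descent coefficient, since any over-counting there would either flip the sign of the discarded $Q_{\Tilde{t}}$ term or spoil the stated constants.
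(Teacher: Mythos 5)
Your overall route is the paper's own: sum \eqref{eq:loss-change-async} over $t=0,\dots,T-1$, telescope the left side (using identical initialization so that $\bm{\overline{y}}_0=\bm{y}_0$), plug the averaged bound \eqref{eq-lemma} into the right side, and use the first condition in \eqref{eq:lr-async} to discard the combined $Q_{\Tilde{t}}$ mass, whose coefficient you compute correctly as $-\frac{\eta\theta_\mathrm{min}}{2}\bigl(1-\eta L\theta_\mathrm{max}^2\theta_\mathrm{min}^{-1}-C(\theta_\mathrm{max},\delta_\mathrm{max})\bigr)$; the second condition indeed serves only to keep the $1-2\eta^2L^2U_2$ denominators (hence $A$, $B$, $C$, and $U_1$) positive and finite.

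The one genuine flaw is your account of where $U_1$ comes from. If Lemma \ref{lemma-e-t} is applied exactly as stated, with right-hand side $A\sigma^2+B\kappa^2+CQ_{\Tilde{t}}$ and no gradient-norm term, then nothing in the derivation ever touches the descent coefficient: after eliminating $Q_{\Tilde{t}}$ you are left with $\frac{\eta\theta_\mathrm{min}}{2}\cdot\frac{1}{T}\sum_{t}\mathbb{E}\left[\left\|\nabla F(\bm{\overline{y}}_t)\right\|^2\right]$ on the left, and dividing through yields \eqref{eq:thm-2} with $U_1$ replaced by $1$. ``Collecting every factor that carries the denominator $1-2\eta^2L^2U_2$'' cannot shrink that coefficient, because those denominators live inside the constants $A$, $B$, $C$ and never multiply the descent term. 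The factor $U_1=1-\frac{12\eta^2L^2U_2}{1-2\eta^2L^2U_2}<1$ actually originates in Lemma \ref{lemma-6}: its bound on $\mathcal{E}_{t,3}$ contains $+6\,\mathbb{E}\left[\left\|\nabla F(\bm{\overline{y}}_t)\right\|^2\right]$ inside the factor $\frac{2\eta^2U_3}{1-2\eta^2L^2U_3}(\cdots)$, and once this is propagated through the decomposition \eqref{eq:respective} (with weight $4L^2$) and the coefficient $\frac{1}{2}\eta\theta_\mathrm{min}$ of $\mathcal{E}_t$ in \eqref{eq:loss-change-async}, it subtracts a multiple of $\frac{\eta^2L^2U_2}{1-2\eta^2L^2U_2}$ times the averaged gradient norm from the descent side; $U_1$ is precisely the record of that subtraction. (The statement of Lemma \ref{lemma-e-t} silently suppresses this gradient-norm contribution, an inconsistency between the paper's main text and its appendix, so your confusion is understandable.) You correctly flagged this consolidation as the delicate step, but the mechanism you propose for it would produce no $U_1$ at all, so your proof as written arrives at a bound with different constants rather than \eqref{eq:thm-2} as stated.
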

\begin{proof} 
By summing up both sides of \eqref{eq:loss-change-async}, plugging \eqref{eq-lemma} into the RHS, and then choosing the learning rate as \eqref{eq:lr-async} to eliminate $Q_{\Tilde{t}}$, we conclude the proof.
\end{proof}

\begin{remark}
\textbf{(Convergence rate)}
If we choose the learning rate as $\eta \!=\! \mathcal{O} \left( \! \frac{1}{L\sqrt{T}} \! \right)$, the RHS of \eqref{eq:thm-2} decreases at a speed of $\mathcal{O}\left(\frac{1}{\sqrt{T}}\right)$ and approaches zero when $T \!\! \rightarrow \!\! \infty$, which ensures convergence.
\end{remark}

\section{Simulations}\label{sec:experiment}

\subsection{Settings} 
We simulate an SD-FEEL system with 50 client nodes and 10 edge servers. Without otherwise specified, it is assumed that each edge server coordinates five client nodes.
We consider a ring topology of edge servers unless otherwise specified.
We evaluate SD-FEEL on two benchmark datasets for image classification, i.e., the MNIST \cite{mnist} and CIFAR-10 \cite{cifar10} datasets, each of which has ten classes of labels. 
For the non-IID setting, we adopt the skewed label partition \cite{hsieh2020non} for the MNIST dataset, where each client node has $c$ (with a default value of 2) random classes of data samples.
For the CIFAR-10 dataset, we utilize a Dirichlet distribution $\text{Dir}_{50}(\beta)$ to sample the probabilities $\{p^\prime_{l,i}\}$'s, which is the proportion of the training samples of class $l$ assigned to the $i$-th client node \cite{yurochkin2019bayesian}.
Here $\beta$ (with a default value of 0.5) is the concentration parameter and a smaller value of $\beta$ results in a more uneven local distribution across the client nodes.
Following \cite{HierFL}, we train a convolutional neural network (CNN) with two $5\times 5$ convolutional layers and $M=21,840$ trainable parameters on the MNIST dataset, and another CNN with six convolutional layers that consists of $M=5,852,170$ trainable parameters on the CIFAR-10 dataset.
The mini-batch SGD is employed with a batch size of 10, and the learning rate is set as 0.001 and 0.01 for the MNIST and CIFAR-10 datasets, respectively. 

To demonstrate the effectiveness of SD-FEEL, we adopt three FL schemes as baselines, including 1) FedAvg \cite{mcmahan2017communication}, 2) HierFAVG \cite{HierFL}, and 3) FEEL \cite{lim2020federated}.
We implement the three baseline schemes based on FedML \cite{he2020fedml}, which is an open-source research library for FL.
Note that FEEL is an edge-assisted FL scheme with a single edge server randomly scheduling five client nodes in each iteration. 
For fair comparisons, each edge server in HierFAVG, FEEL, and SD-FEEL is assumed to have an equal number of orthogonal uplink wireless channels.

\subsection{Training Latency} 
For synchronous SD-FEEL, latency of $K$ training iterations can be calculated as $T_\text{tot} = K \left( T_\text{comp}^\text{ct} + \frac{1}{\tau_1} T_\text{comm}^\text{ct-sr} + \frac{\alpha}{\tau_1\tau_2} T_\text{comm}^\text{sr-sr} \right)$,
where $T_\text{comp}^\text{ct}$ is the computation latency for each local iteration at client nodes, $T_\text{comm}^\text{ct-sr}$ denotes the model uploading latency from a client node to its associated edge server, and $T_\text{comm}^\text{sr-sr}$ is the model transmission latency between neighboring edge servers. The training latency of other baseline FL schemes can be calculated similarly. Following \cite{smith2017federated}, the averaged computation time is assumed to be $T_\text{comp}^\text{ct} \!=\! \frac{N_\text{MAC}}{C_\text{CPU}}$, where $N_\text{MAC}$ is the number of the floating-point operations (FLOPs) for one local iteration, and $C_\text{CPU}=10 \,\text{GFLOPS}$ denotes the CPU's computing bandwidth for the slowest device. The numbers of FLOPs required for local training at each iteration are $N_\text{MAC} \!=\! 487.54 \,\text{KFLOPs}$ for the MNIST dataset and $N_\text{MAC} \!=\! 138.4 \,\text{MFLOPs}$ for the CIFAR-10 dataset\footnote{These values are calculated using OpCounter, which is an open-source model analysis library available at \url{https://github.com/Lyken17/pytorch-OpCounter}.}.
The communication latency is expressed as $T_\text{comm} \!=\! \frac{M_{\rm{bit}}}{R}$ with $M_{\rm{bit}} \!=\! 32\,\text{Mbits}$ and $R$ denoting the transmission rate. Specifically, we assume that the client nodes communicate with the associated edge servers using orthogonal channels and there is no inter-cluster interference \cite{shi2020joint}. The transmission rate is assumed to be $R^\text{ct-sr}\!=\!B\log_2{(1+\text{SNR}}) \!\approx\! 5\,\text{Mbps}$, where $B\!=\!1\,\text{MHz}$ and $\text{SNR}\!=\!15\,\text{dB}$. The edge server communicates with the neighboring edge servers via high-speed links with the bandwidth of $50\,\text{Mbps}$ \cite{hu2020coedge}, and the bandwidth from the edge servers to the Cloud is set as $5\,\text{Mbps}$. Accordingly, the transmission rate from the client nodes to the Cloud is given by $R^\text{ct-cd}\!=\!2.5\,\text{Mbps}$.

\subsection{Results}\label{sec:result}

\subsubsection{Convergence Speed and Generalization Performance}
We show the training loss with respect to the training time for both the MNIST and CIFAR-10 datasets in Fig. \ref{fig:convergence}. 
We see that the training loss of SD-FEEL drops rapidly in the initial training stage, and SD-FEEL converges at around $40$ seconds and $250$ minutes for the MNIST and CIFAR-10 datasets, respectively. Comparatively, HierFAVG and FedAvg fall behind due to the high communication latency with the Cloud-based PS.
Besides, FEEL undergoes an unstable and slower training process as the edge server has access to a limited number of client nodes and thus leverages much less training data.
Fig. \ref{fig:accuracy} shows the generalization performance in terms of the test accuracy over time. Within the given training time, the test accuracy of SD-FEEL improves over FedAvg and FEEL due to efficient communication across edge servers, while their learned models are still unusable.
In addition, we notice that in Fig. \ref{fig:convergence}(a) and Fig. \ref{fig:accuracy}(a), SD-FEEL and HierFAVG have a small gap in terms of training loss and test accuracy, respectively, since the computation latency dominates the training time for this task on the MNIST dataset.
In Fig. \ref{fig:accuracy}, we also show the required training latency of different FL schemes to meet 90\% (respectively 60\%) in test accuracy on the MNIST (respectively CIFAR-10) dataset.
It is observed that SD-FEEL saves substantial training time for the given training task.

\begin{figure}[!t]
\centering
\subfigure{
\begin{minipage}[t]{0.5\linewidth}
\centering
\includegraphics[width=1\linewidth]{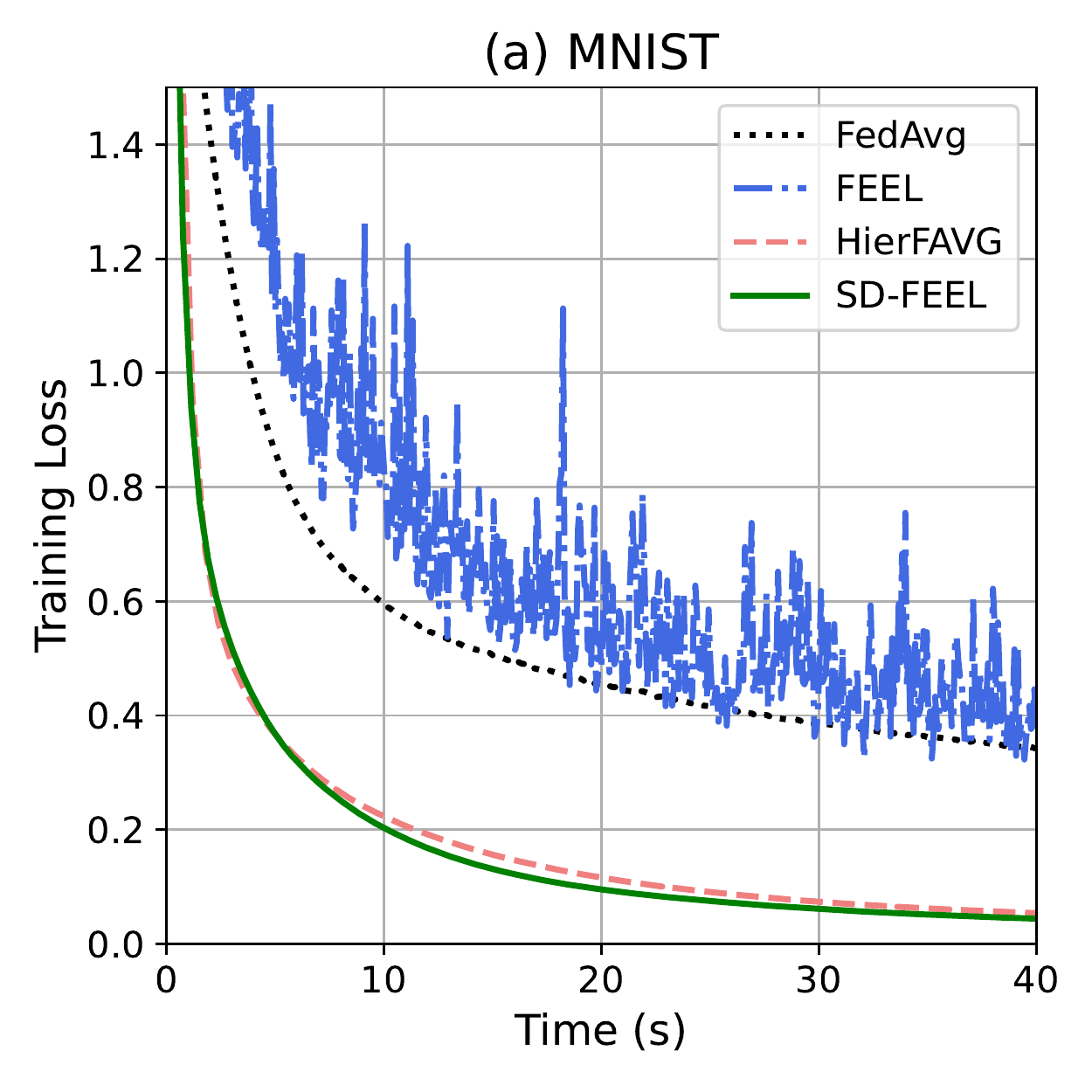}
\end{minipage}%
}%
\subfigure{
\begin{minipage}[t]{0.5\linewidth}
\centering
\includegraphics[width=1\linewidth]{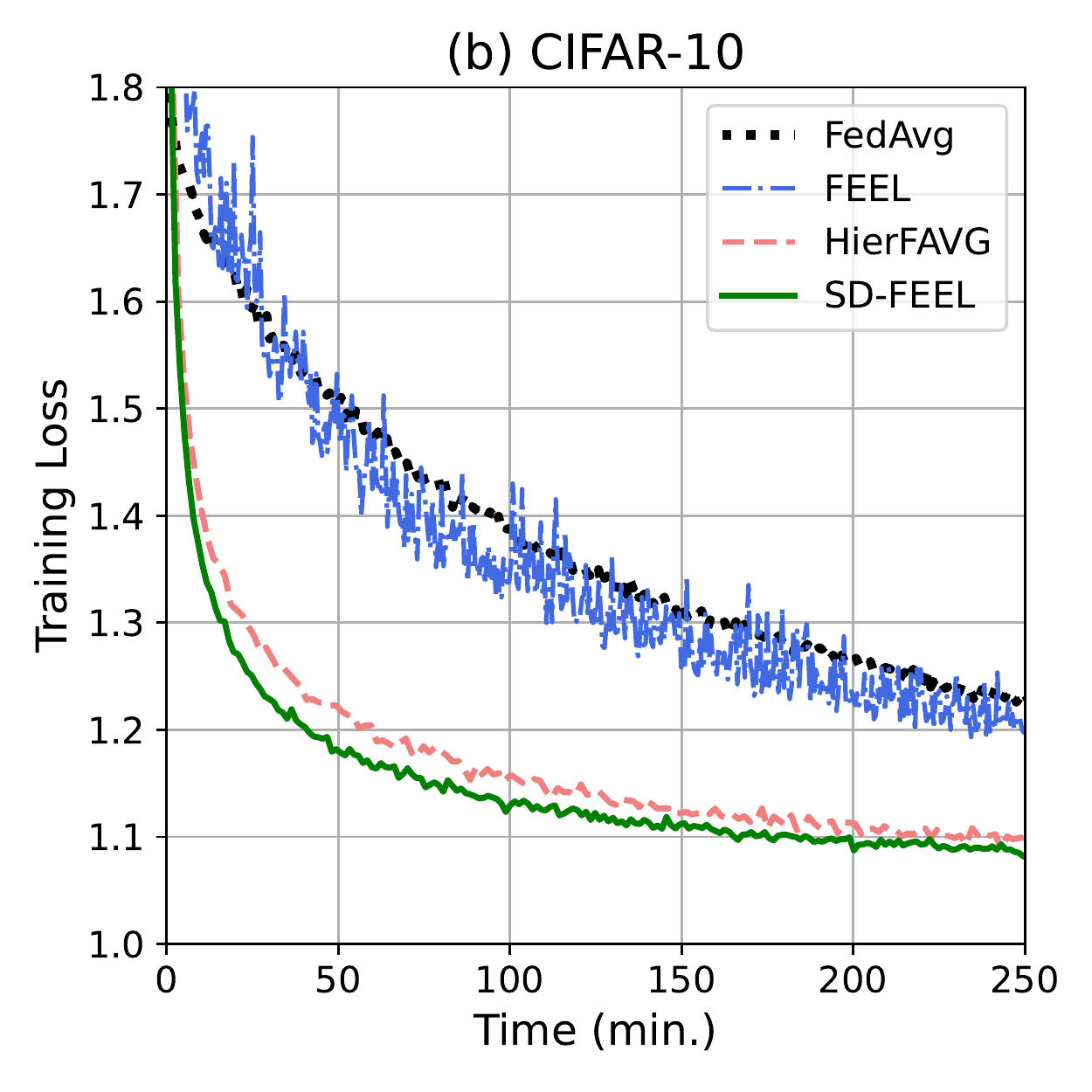}
\end{minipage}
}%
\centering
\caption{Training loss over time on the (a) MNIST ($\tau_1=5$, $\tau_2=1$, and $\alpha=1$) and (b) CIFAR-10 ($\tau_1=2$, $\tau_2=1$, and $\alpha=5$) datasets.}
\label{fig:convergence}
\end{figure}

\begin{figure}[!t]
\centering
\subfigure{
\begin{minipage}[t]{0.5\linewidth}
\centering
\includegraphics[width=1\linewidth]{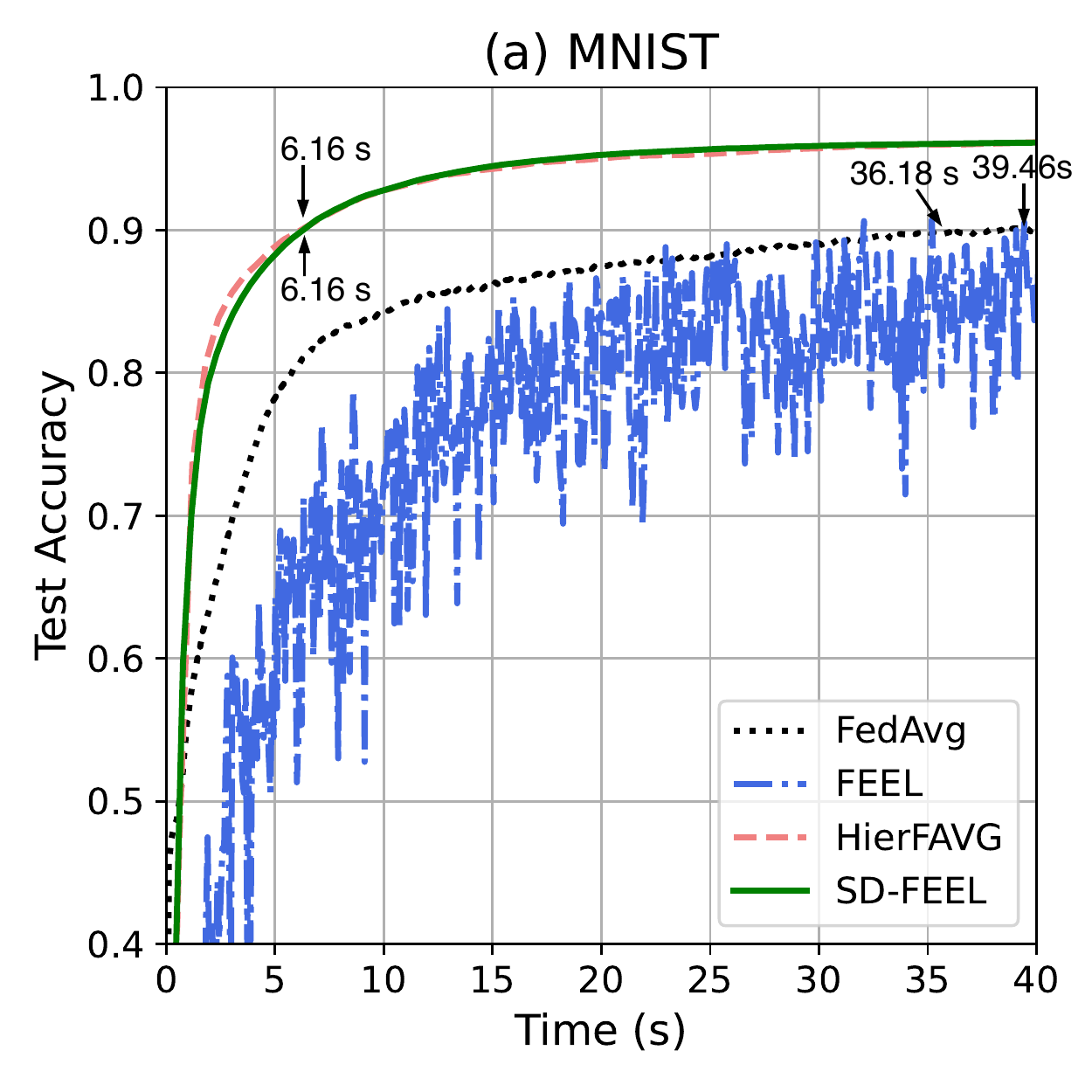}
\end{minipage}%
}%
\subfigure{
\begin{minipage}[t]{0.5\linewidth}
\centering
\includegraphics[width=1\linewidth]{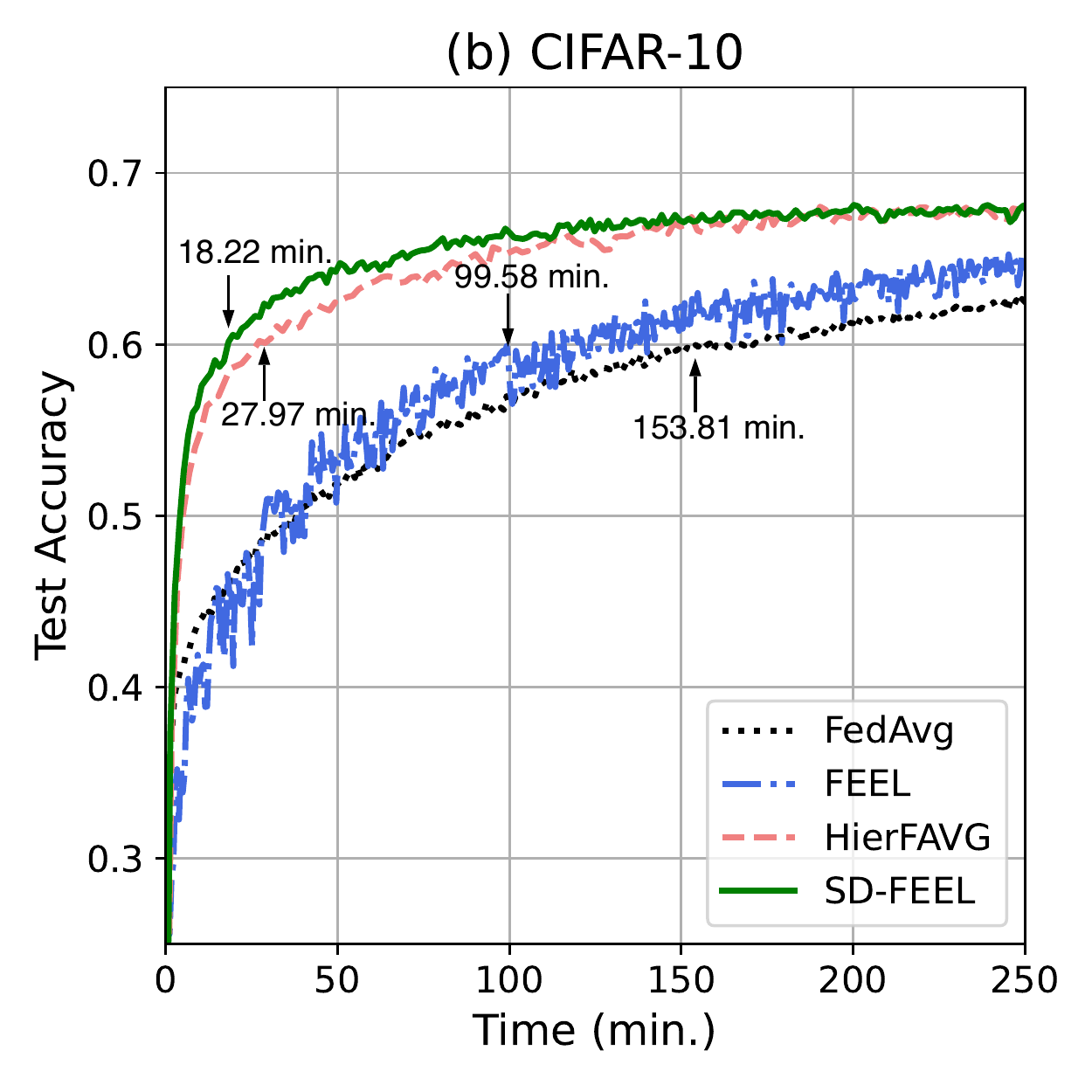}
\end{minipage}
}%
\centering
\caption{Test accuracy over time on the (a) MNIST ($\tau_1=5$, $\tau_2=1$, and $\alpha=1$) and (b) CIFAR-10 ($\tau_1=2$, $\tau_2=1$, and $\alpha=5$) datasets. The training latencies to achieve a target test accuracy of 90\% (respectively 60\%) for the MNIST (respectively CIFAR-10) dataset are also highlighted.}
\label{fig:accuracy}
\end{figure}

To further compare the performance of SD-FEEL and HierFAVG, Fig. \ref{fig:rate}(a) shows the test accuracy over time with different inter-server communication rates, i.e., $R_\text{comm}^\text{sr-sr}=10$ Mbps, $50$ Mbps, and $200$ Mbps. When edge servers share models with a slower rate (e.g., $10$ Mbps), SD-FEEL reaches a lower test accuracy than HierFAVG. Correspondingly, a high communication speed among edge servers (e.g., $200$ Mbps) ensures SD-FEEL converges faster.
Besides, in Fig. \ref{fig:rate}(b), we see that with a sparsely-connected network (i.e., the ring topology), SD-FEEL may have a slower convergence speed due to the model inconsistency among edge servers, which can be alleviated through multiple rounds of communication in inter-cluster model aggregation.
These observations verify the discussion in Remark \ref{rm:reduce}.

\begin{figure}[!t]
\centering
\subfigure{
\begin{minipage}[t]{0.5\linewidth}
\centering
\includegraphics[width=1\linewidth]{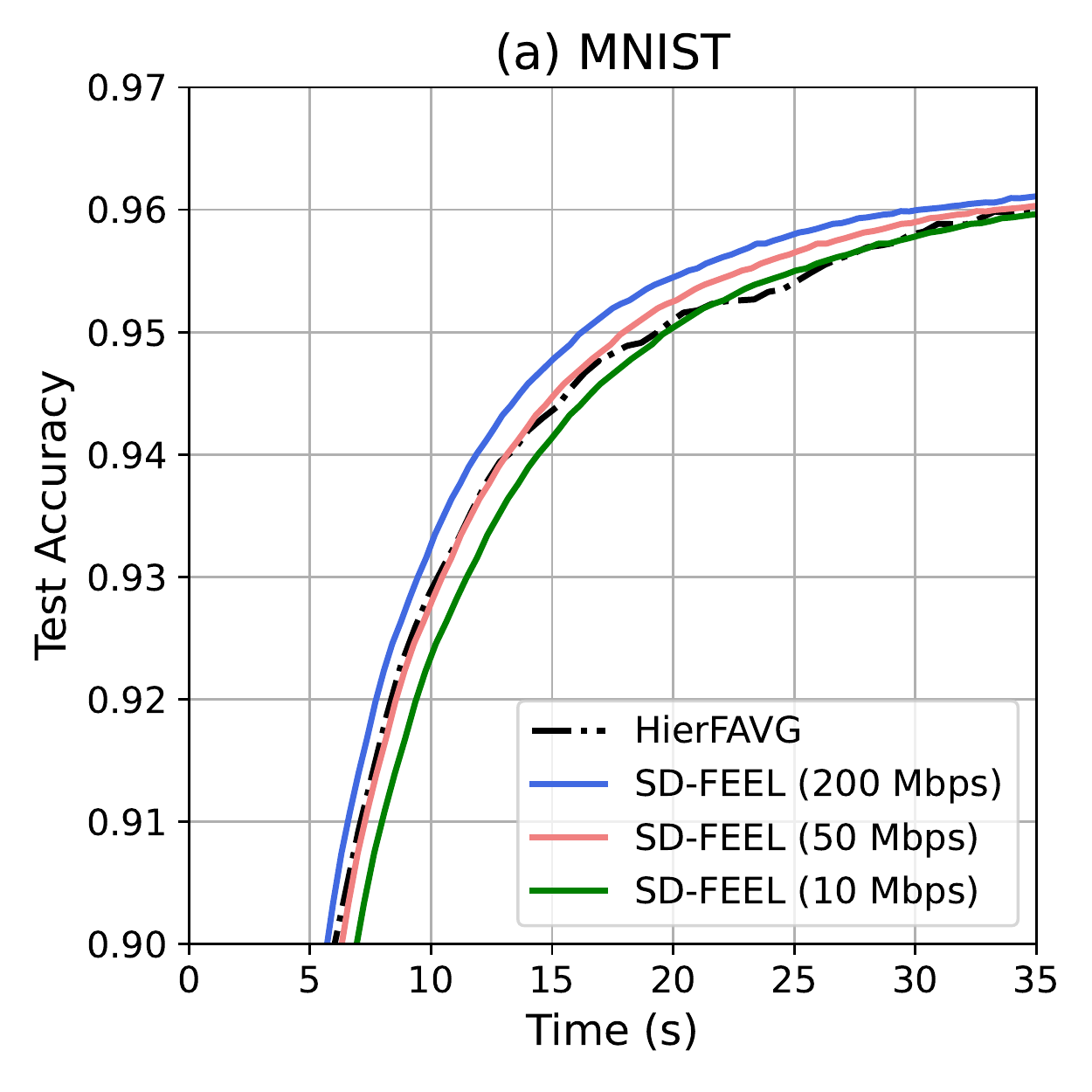}
\end{minipage}%
}%
\subfigure{
\begin{minipage}[t]{0.5\linewidth}
\centering
\includegraphics[width=1\linewidth]{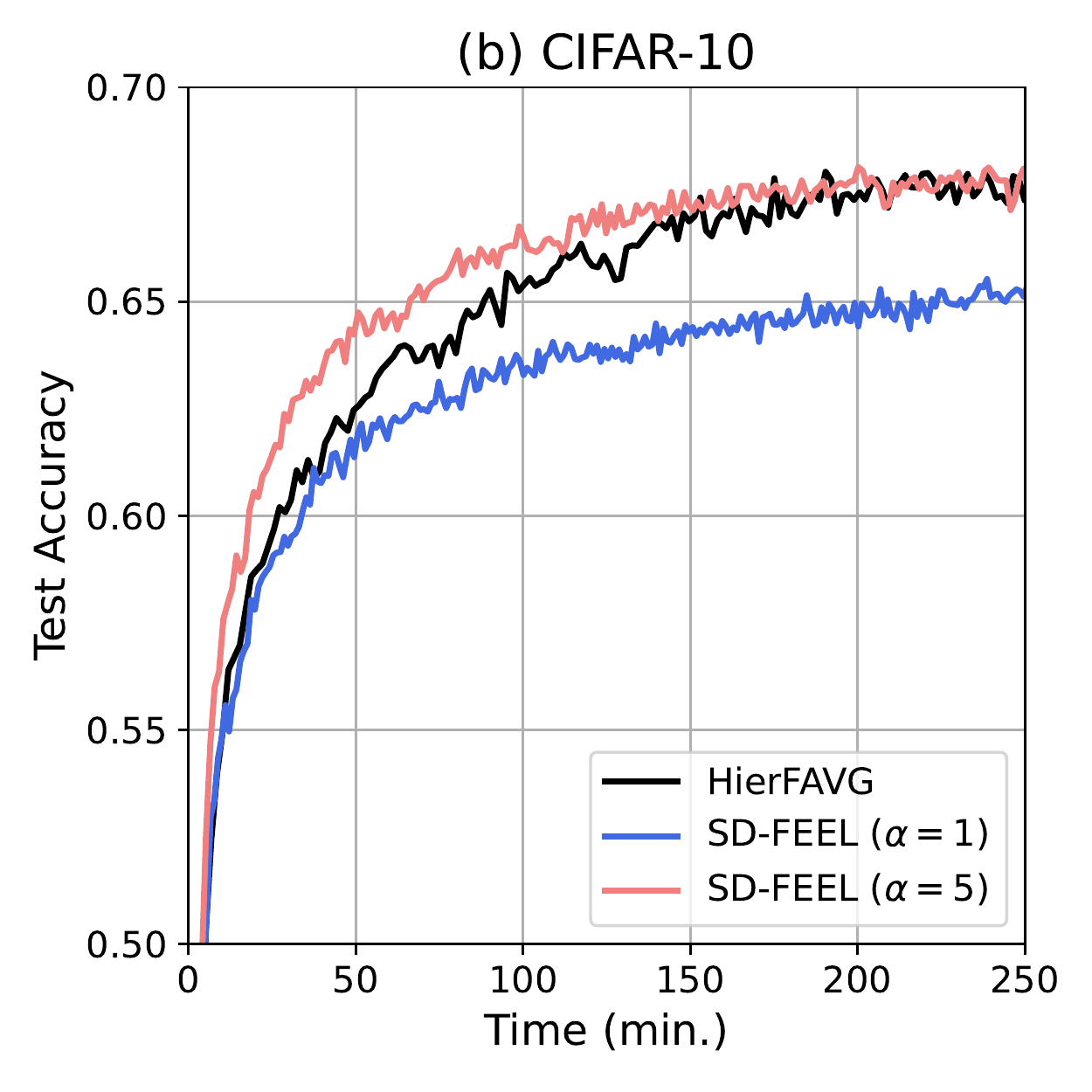}
\end{minipage}
}%
\centering
\caption{Test accuracy over time on the (a) MNIST ($\tau_1=1$, $\tau_2=1$, and $\alpha=1$) and (b) CIFAR-10 ($\tau_1=2$, $\tau_2=1$, $\alpha=1$, and $R_\text{comm}^\text{sr-sr}=50\,\text{Mbps}$) datasets.}
\label{fig:rate}
\end{figure}

\subsubsection{Impacts of Parameters}
We investigate how the aggregation period $\tau_1$ affects the learning performance on the MNIST dataset by showing the relationship between the training loss and training iterations (respectively training time) in Fig. \ref{fig:cifar-tau-1}(a) (respectively Fig. \ref{fig:cifar-tau-1}(b)). We fix $\tau_2=1$ and evaluate SD-FEEL at $\tau_1=1,3,$ and $20$. Fig. \ref{fig:cifar-tau-1}(a) shows that a smaller value of $\tau_1$ leads to a lower training loss within the given training iterations, as explained in Remark \ref{rm:tau}. This conclusion, however, is invalid when considering the training time, as shown in Fig. \ref{fig:cifar-tau-1}(b). Since less frequent communications between client nodes and edge servers can reduce the total latency, a larger value of $\tau_1$ may be preferred. The inter-cluster model aggregation period $\tau_2$ has similar behaviors, and the results are omitted due to space limitation. 

\begin{figure}[!t]
\centering
\subfigure{
\begin{minipage}[t]{0.5\linewidth}
\centering
\includegraphics[width=1\linewidth]{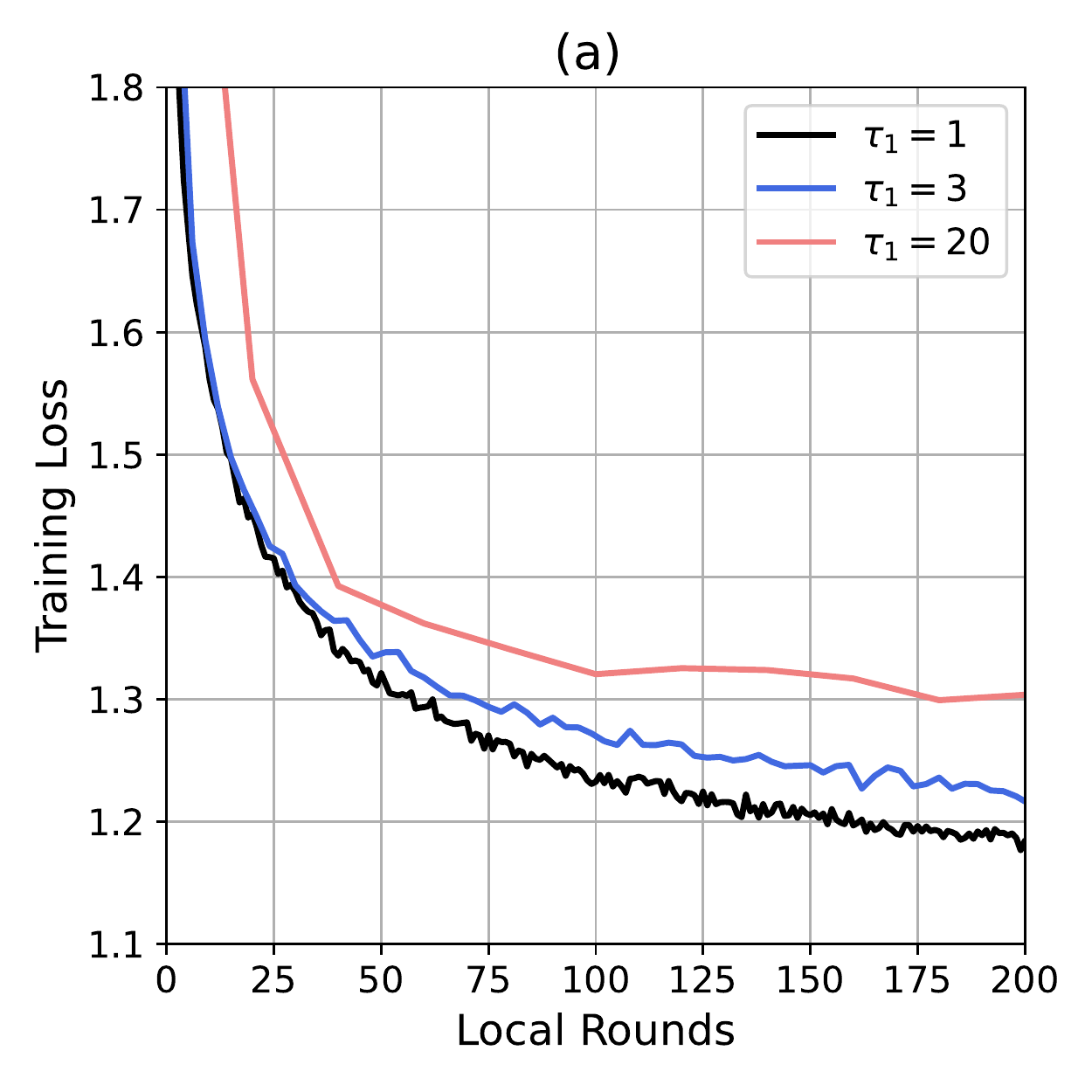}
\end{minipage}%
}%
\subfigure{
\begin{minipage}[t]{0.5\linewidth}
\centering
\includegraphics[width=1\linewidth]{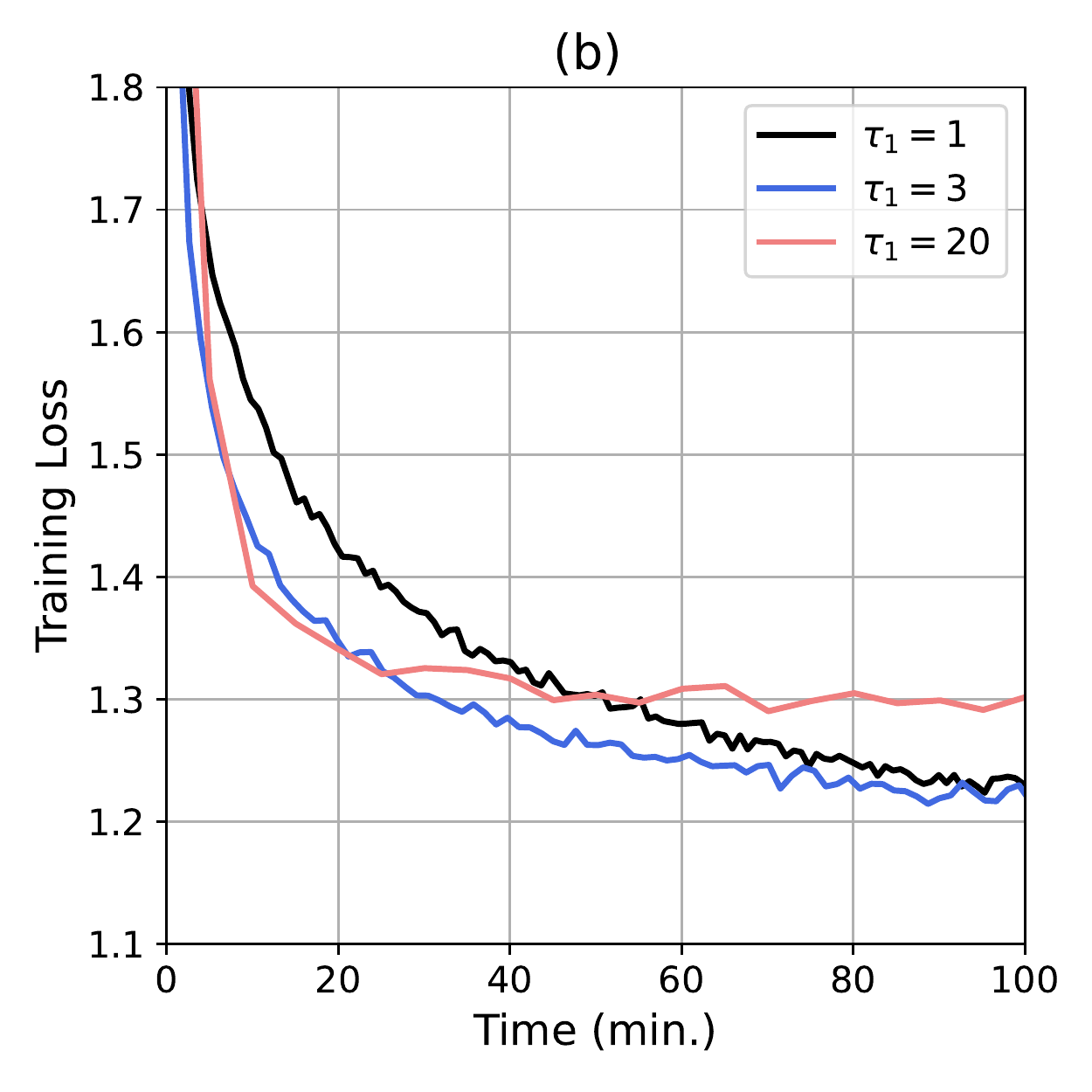}
\end{minipage}
}%
\centering
\caption{Training loss of SD-FEEL ($\tau_2=1$ and $\alpha=1$) over (a) local iterations and (b) time on the CIFAR-10 dataset.}
\label{fig:cifar-tau-1}
\end{figure}

We also evaluate the test accuracy of SD-FEEL on different network topologies of the edge servers. As shown in Fig. \ref{fig:topo}, with $\alpha=1$ round of communication, a more connected topology leads to a higher test accuracy within the given number of training iterations. This is because more information can be received from neighboring edge clusters in each round of inter-cluster model aggregation, as discussed in Remark \ref{rm:topo}. Besides, as $\alpha$ increases in the ring topology, the training speed becomes faster in terms of training iterations since more information is collected from neighboring edge clusters. When $\alpha=10$ (respectively $\alpha=15$), SD-FEEL with the ring topology leads to a comparable performance with a fully-connected topology on the MNIST dataset (respectively CIFAR-10 dataset), which corroborates the discussion in Remark \ref{rm:topo}.

\begin{figure}[!t]
\centering
\subfigure{
\begin{minipage}[t]{0.5\linewidth}
\centering
\includegraphics[width=1\linewidth]{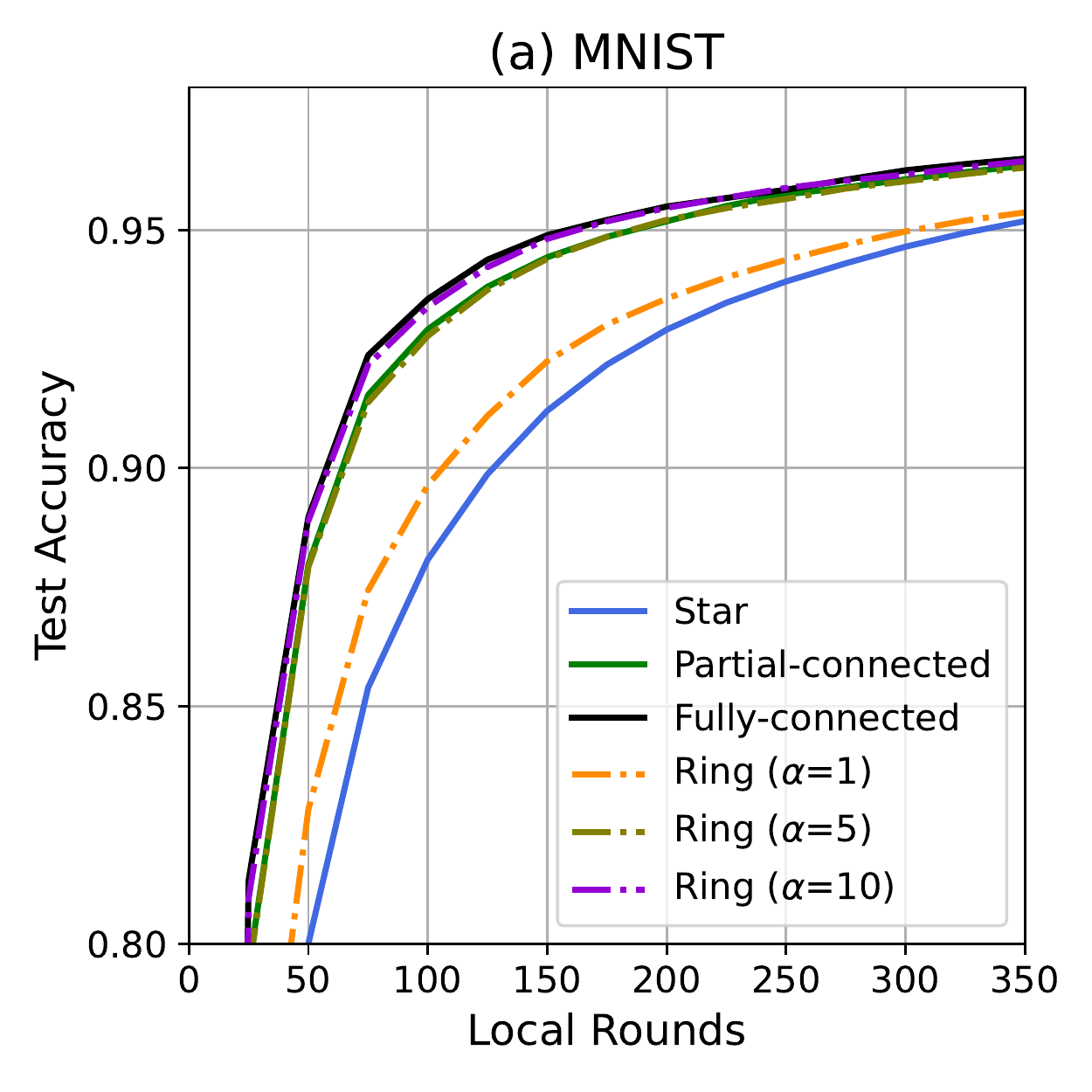}
\end{minipage}%
}%
\subfigure{
\begin{minipage}[t]{0.5\linewidth}
\centering
\includegraphics[width=1\linewidth]{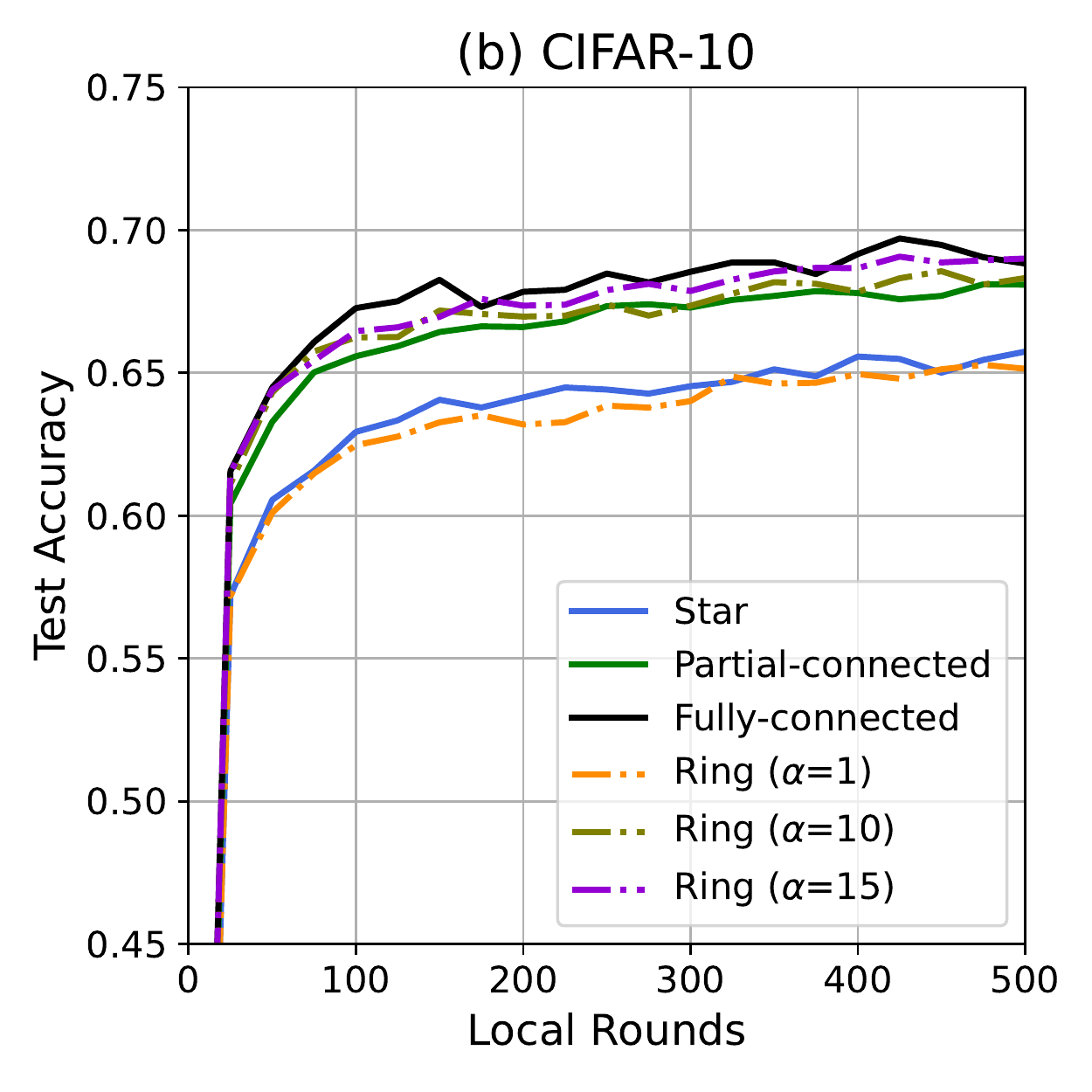}
\end{minipage}
}%
\centering
\caption{Test accuracy over local iterations with different network topologies ($\tau_1=5$ and $\tau_2=5$) on the (a) MNIST and (b) CIFAR-10 datasets.}
\label{fig:topo}
\end{figure}

\begin{figure}[!t]
\centering
\subfigure{
\begin{minipage}[t]{0.5\linewidth}
\centering
\includegraphics[width=1\linewidth]{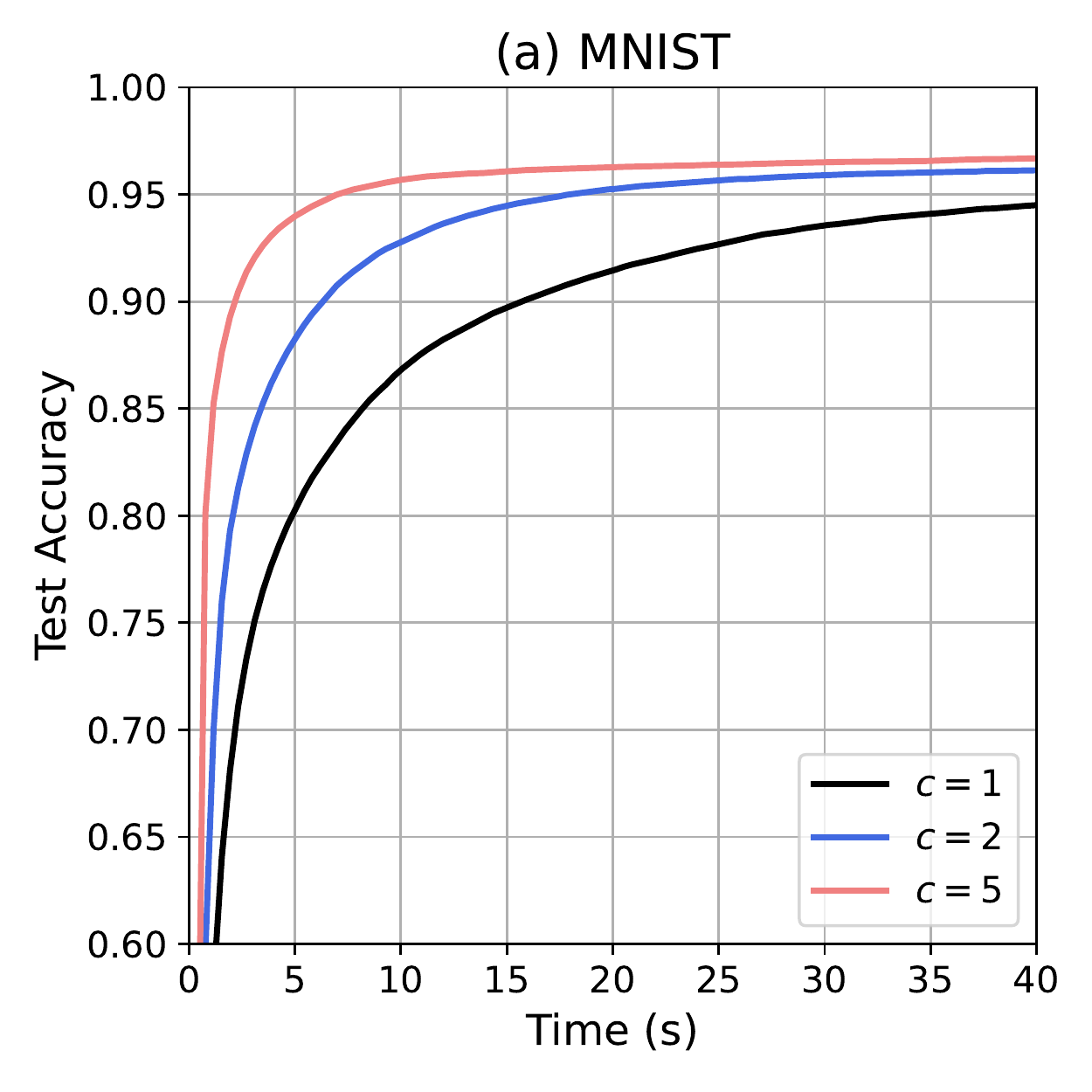}
\end{minipage}%
}%
\subfigure{
\begin{minipage}[t]{0.5\linewidth}
\centering
\includegraphics[width=1\linewidth]{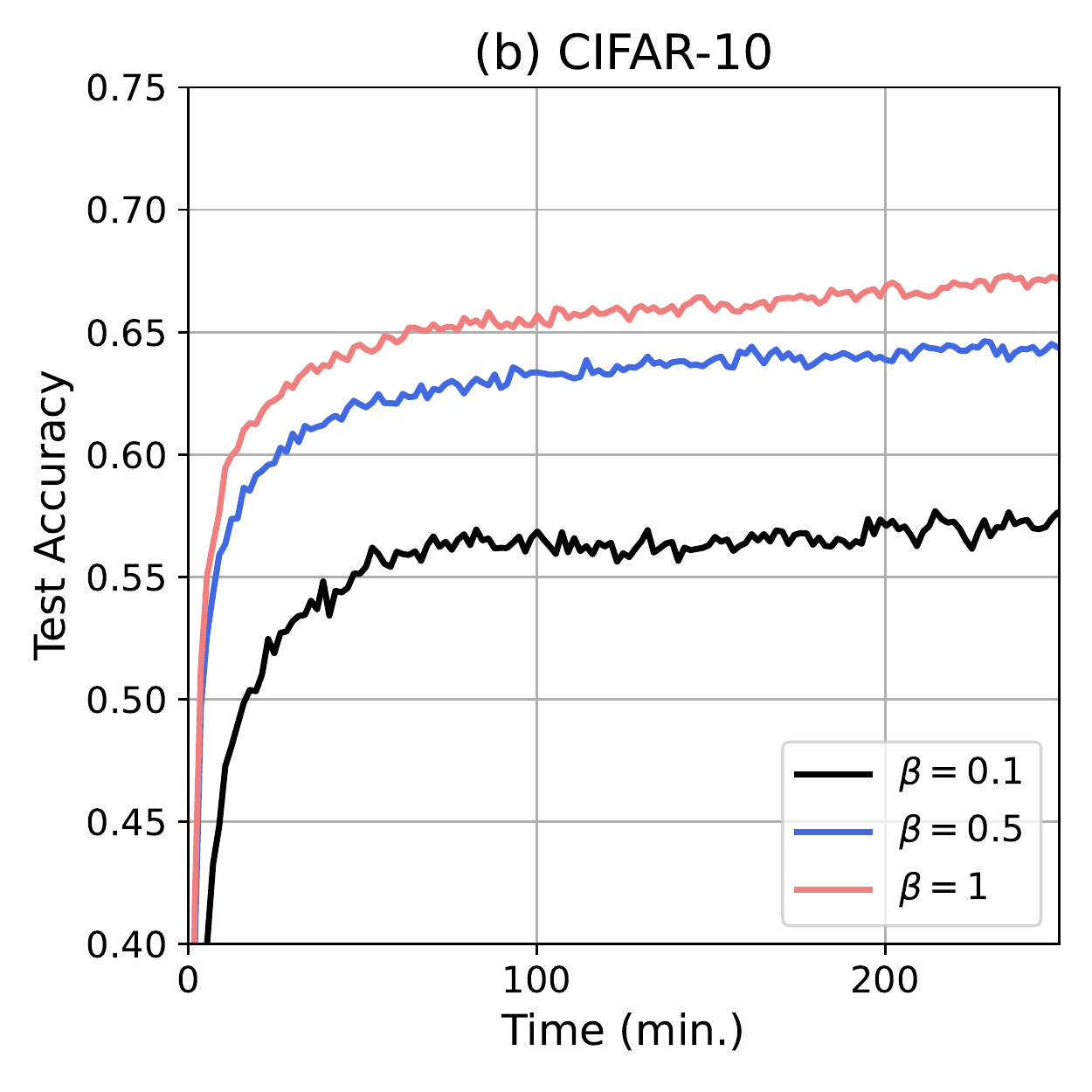}
\end{minipage}
}%
\centering
\caption{Test accuracy over time with varying degrees of non-IIDness ($\tau_1=5$, $\tau_2=1$, $\alpha=1$, and $H=1$) on the (a) MNIST and (b) CIFAR-10 datasets.}
\label{fig:non-iid}
\end{figure}

\subsubsection{Effect of Data and Device Heterogeneity}
We test SD-FEEL with different degrees of data and device heterogeneity. We start with the effect of data heterogeneity in Fig. \ref{fig:non-iid}.
According to Fig. \ref{fig:non-iid}(a), when each client node has more classes of data samples, the local training is less biased and thus SD-FEEL has a faster learning speed. Similarly in Fig. \ref{fig:non-iid}(b), a smaller value of $\beta$ leads to a higher degree of data heterogeneity and thus slows down the training process significantly.

To investigate the effect of device heterogeneity, we next compare synchronous SD-FEEL (denoted by \textit{Sync.}), asynchronous SD-FEEL (denoted by \textit{Async.}), and asynchronous SD-FEEL with a constant mixing matrix (denoted by \textit{Vanilla Async.}) in Fig. \ref{fig:async}(a). 
The computation deadline is chosen such that each client node can compute at least 100 (respectively 1000) batches of data samples on the MNIST (respectively CIFAR-10) dataset. 
Besides, we adopt $\psi(\delta_t^{(j)})=\frac{1}{2 (\delta_t^{(j)}+1)}$ to calculate the mixing matrix for inter-cluster model aggregation.
According to Fig. \ref{fig:async}(a), we observe that asynchronous SD-FEEL with a constant mixing matrix has a slightly slower convergence than synchronous training. Nevertheless, with the proposed staleness-aware mixing matrix, asynchronous SD-FEEL performs much better, which demonstrates the necessity of our proposed staleness-aware mixing matrix.


\begin{figure}[!t]
\centering
\subfigure{
\begin{minipage}[t]{0.5\linewidth}
\centering
\includegraphics[width=1\linewidth]{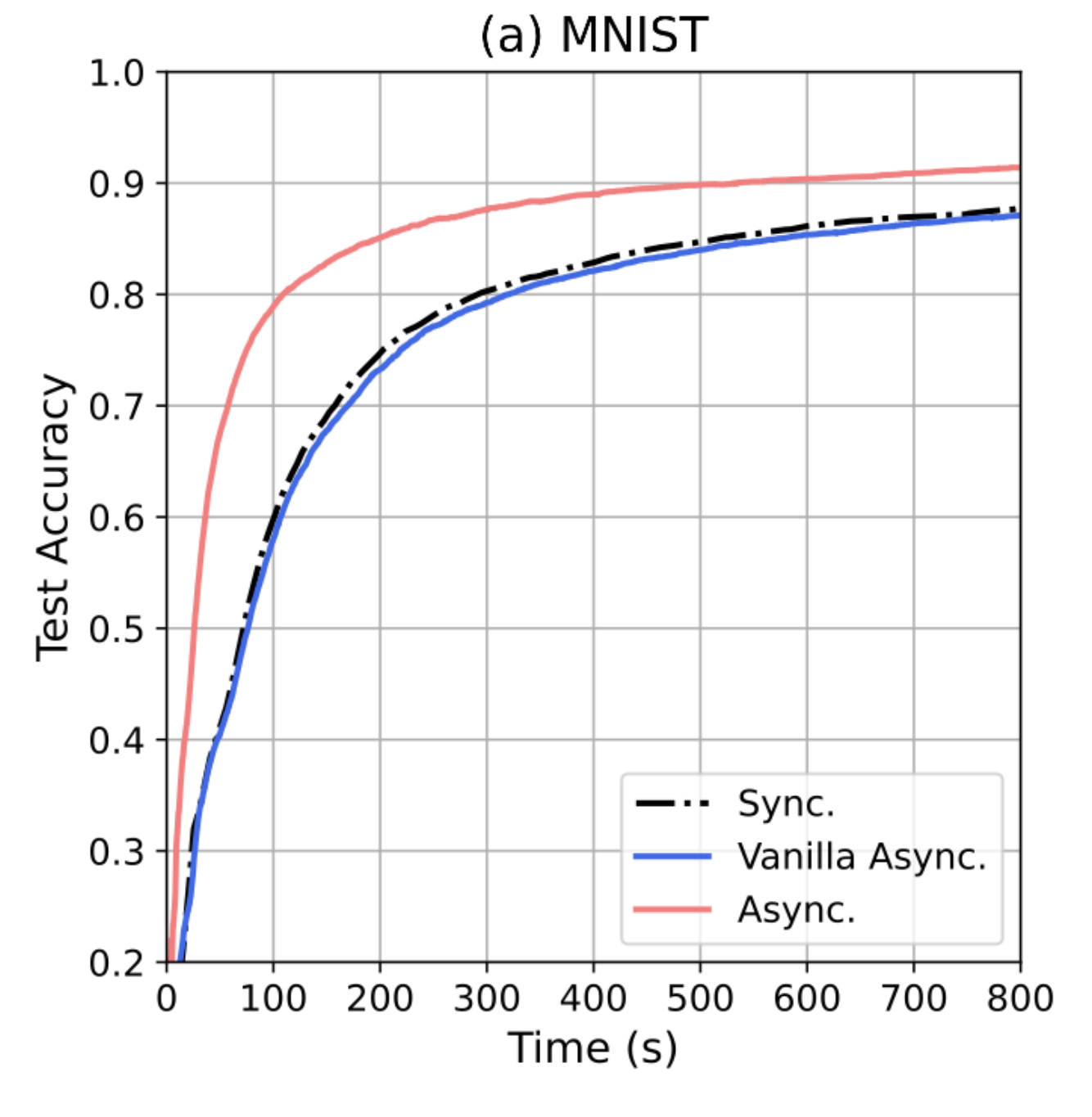}
\end{minipage}%
}%
\subfigure{
\begin{minipage}[t]{0.5\linewidth}
\centering
\includegraphics[width=1\linewidth]{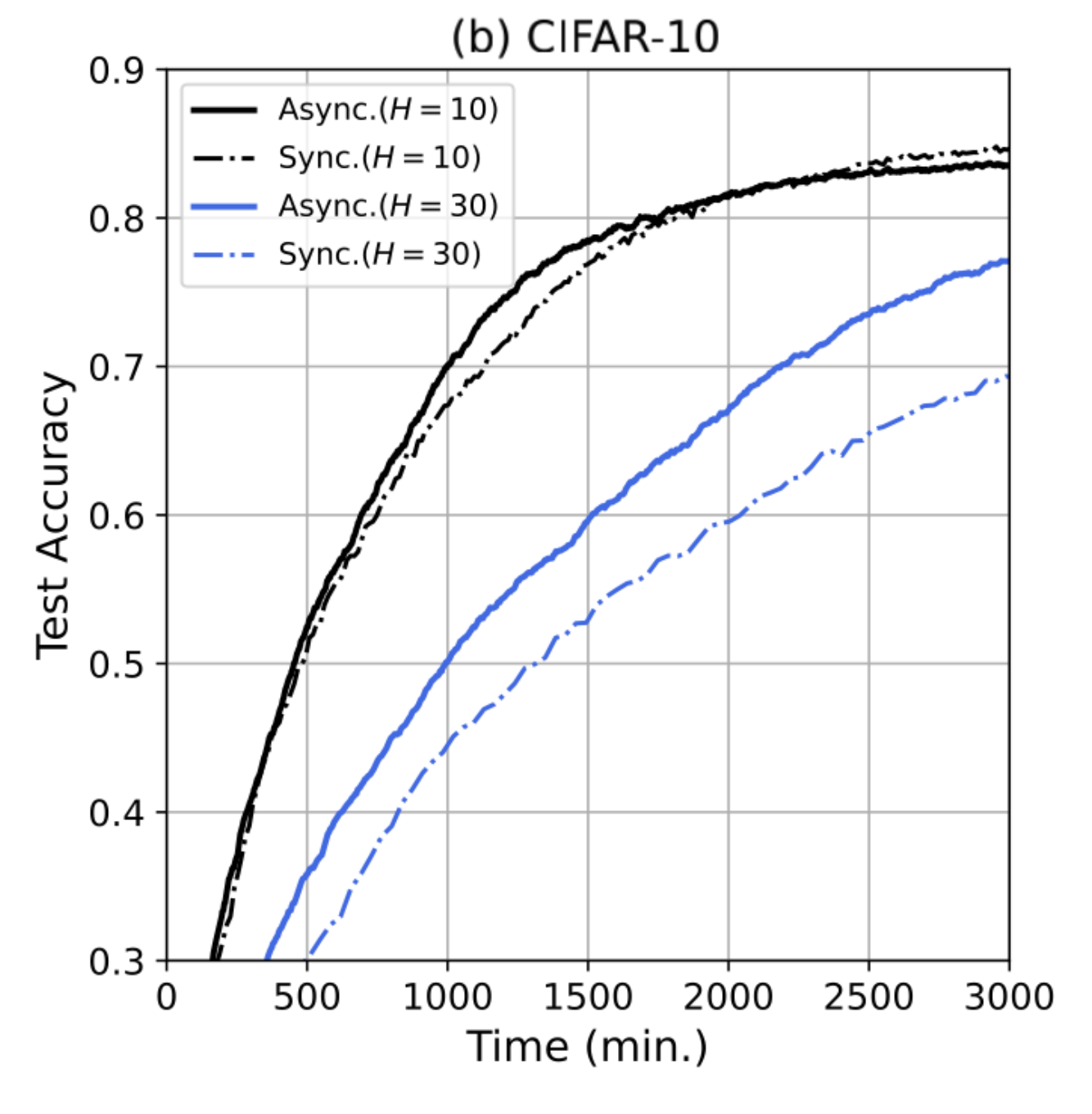}
\end{minipage}
}%
\centering
\caption{Test accuracy over time on the (a) MNIST and (b) CIFAR-10 datasets.}
\label{fig:async}
\end{figure}

Fig. \ref{fig:async} (b) shows the test accuracy over time with different degrees of device heterogeneity on the CIFAR-10 dataset. When the computational resources are quite uneven (i.e., with a larger $H$), the convergence speed of synchronous SD-FEEL is degraded, since slower client nodes require more time for local training. Comparatively, the training efficiency of SD-FEEL is improved with the asynchronous training algorithm, which is more notable as the device heterogeneity becomes more significant. This is because faster client nodes are allowed to perform more local training and thus have less idle time. 
With the limited training time, asynchronous SD-FEEL obtains an improvement in test accuracy over synchronous training. 
However, if given a sufficiently long training time such that the data at the slower client nodes are fully exploited, synchronous SD-FEEL is able to achieve a higher test accuracy.

\subsubsection{Effect of Learning Rate}


\begin{figure}[!t]
\centering
\subfigure{
\begin{minipage}[t]{0.5\linewidth}
\centering
\includegraphics[width=1\linewidth]{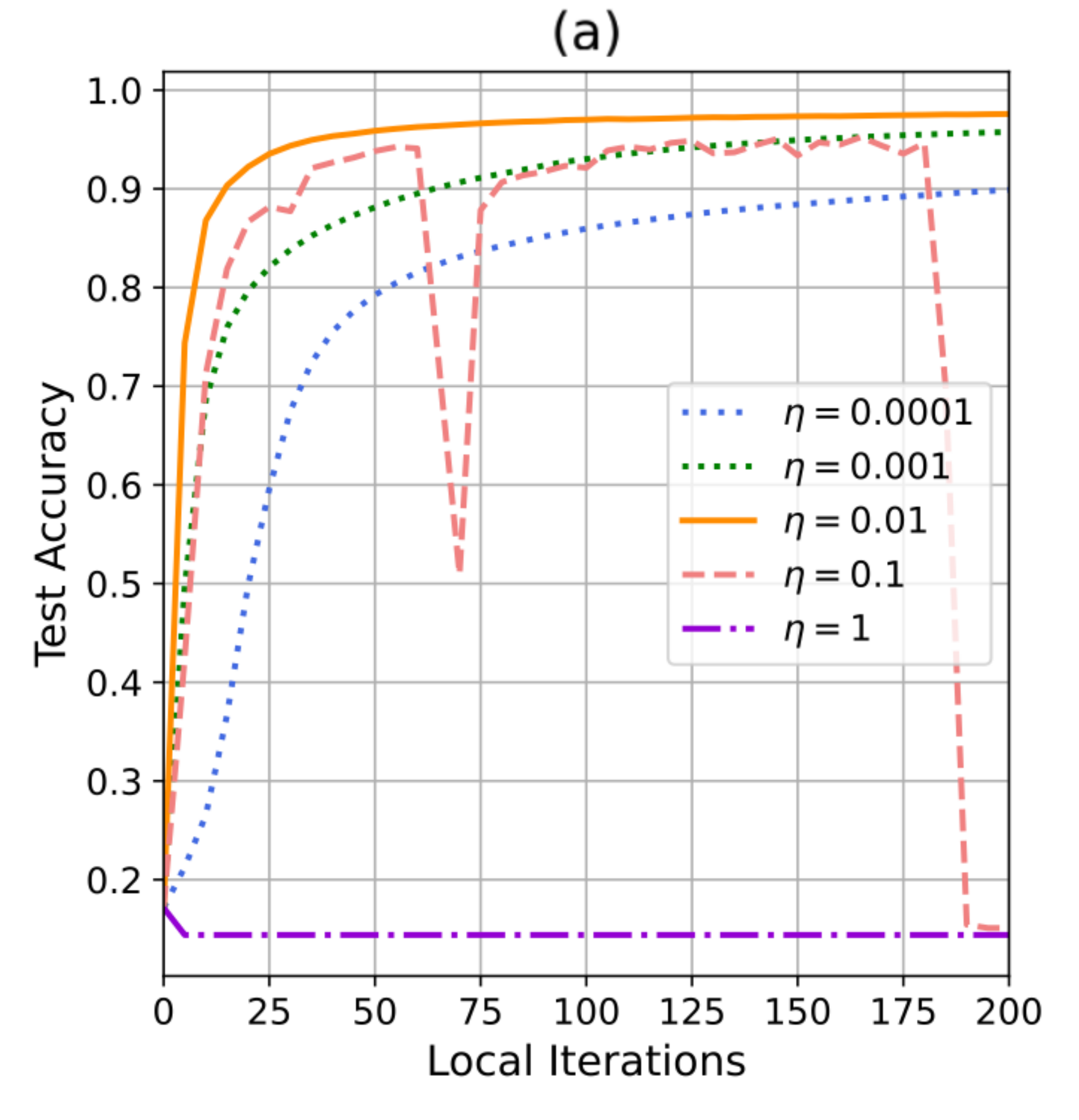}
\end{minipage}%
}%
\subfigure{
\begin{minipage}[t]{0.5\linewidth}
\centering
\includegraphics[width=1\linewidth]{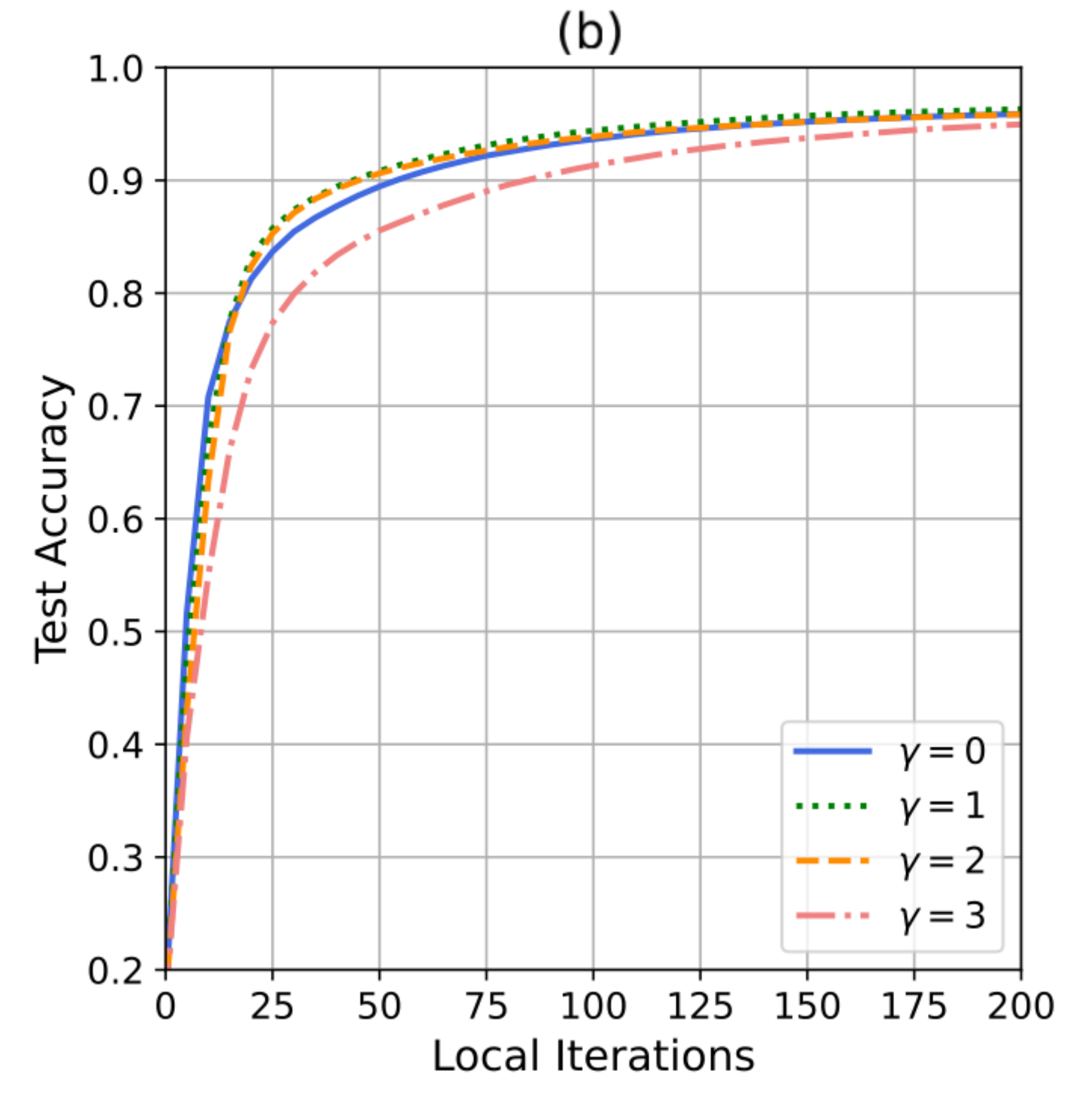}
\end{minipage}
}%
\centering
\caption{Test accuracy over local iterations with (a) varying learning rates and (b) different cluster imbalance on the MNIST dataset ($\tau_1=5,\tau_2=1$, and $\alpha=1$).}
\label{fig:lr}
\end{figure}

We evaluate the effect of the learning rate on the MNIST training task in Fig. \ref{fig:lr}(a).
The results show that as the learning rate increases from $0.0001$ to $0.01$, SD-FEEL achieves faster convergence.
This is because a large learning rate allows a faster step towards the global optimal.
Nevertheless, when the learning rate is too large, i.e., $\eta=0.1$ or $\eta=1$, the training process becomes unstable and may even suffer from divergence.
Therefore, the learning rate should be selected to achieve a proper convergence speed and avoid training failure.

\subsubsection{Effect of Cluster Imbalance}

We define a parameter $\gamma$ to represent the level of cluster imbalance.
Specifically, we assume four of the total ten edge clusters have five client nodes, three edge clusters have $5-\gamma$ client nodes, and others have $5+\gamma$ client nodes.
It is straightforward that a larger value of $\gamma$ implies the training data among edge clusters is more unbalanced.
Fig. \ref{fig:lr}(b) shows the learning performance with different values of $\gamma$, from which, we see the convergence speed of SD-FEEL remains similar despite with a slight cluster imbalance.
Nevertheless, a severe cluster imbalance, i.e., $\gamma=3$, results in much slower convergence.
However, different cluster imbalance setups converge to the same model accuracy, since the information of training data can be exchanged among edge clusters through sufficient inter-cluster model aggregations.
This also verifies the necessity of cooperation among edge servers.

\section{Conclusions}\label{sec:conclusion}

In this paper, we investigated a novel FL system for privacy-preserving distributed learning in 6G networks, named semi-decentralized federated edge learning (SD-FEEL). It enjoys high training efficiency by employing low-latency communication among multiple edge servers.
We presented the training algorithm for SD-FEEL with a convergence analysis on non-IID data. Then the effects of various parameters on the training performance of SD-FEEL were discussed.
Moreover, to combat the device heterogeneity, we proposed an asynchronous training algorithm for SD-FEEL, followed by its convergence analysis.
Simulation results demonstrated the benefits of SD-FEEL and corroborated our analysis.
For future works, it is worth considering the performance of SD-FEEL under varying channel conditions.
Further investigations are also needed for the selection of key algorithmic parameters as well as practical deployment.
Extending SD-FEEL and its theoretical analysis to scenarios with a second-order local optimizer is also very interesting.

\appendix\label{proof-appendix}
To obtain Lemma \ref{lemma-e-t}, we first prove the upper bounds for $\mathcal{E}_{t,1}$, $\mathcal{E}_{t,2}$ and $\mathcal{E}_{t,3}$ in Lemmas \ref{lemma-4}-\ref{lemma-6}.
Then the proof is completed by plugging the results into \eqref{eq:respective}, summing up both sides over $t=0,1,\dots,T-1$, and then dividing them by $T$.

\begin{lemma}\label{lemma-4}
With Assumption \ref{assumptions}, we have:
\begin{equation}
\begin{split}
    \mathcal{E}_{t,1}
    & \leq 2 \eta^2 \delta_\text{max}^2 \theta_\mathrm{max}^2\theta_\text{min}^{-1} \sigma^2 \\
    & + 4 \eta^2 \delta_\text{max}^2 \theta_\mathrm{max}^2\theta_\text{min}^{-1} \kappa^2
    + 4 \eta^2  \delta_\text{max}^2 \theta_\mathrm{max}^2 Q_{\Tilde{t}}.
\end{split}
\end{equation}
\end{lemma}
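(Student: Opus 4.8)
Recall that $\mathcal{E}_{t,1}=\mathbb{E}[\|\bm{\overline{y}}_t-\bm{\overline{y}}_{\Tilde{t}}\|^2]$ measures how far the averaged model drifts across the staleness window. The plan is to telescope this difference through the evolution \eqref{eq:y_t} of $\bm{\overline{y}}_t$ and to cap the window length with the bounded iteration gap of Lemma \ref{lemma-delta}. Since \eqref{eq:y_t} gives the per-iteration increment $\bm{\overline{y}}_{s+1}-\bm{\overline{y}}_s=-\eta\,\mathbf{\hat{G}}_s\bm{\Tilde{m}}^\TT\mathbf{\Lambda}=-\eta\sum_{d\in\mathcal{D}}\Tilde{m}_d\overline{\theta}_d[\mathbf{\hat{G}}_s]_d$, where $[\mathbf{\hat{G}}_s]_d$ is the $d$-th column, the drift $\bm{\overline{y}}_t-\bm{\overline{y}}_{\Tilde{t}}$ is $-\eta$ times a sum of these increments over the window. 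By Lemma \ref{lemma-delta} the window spans at most $\delta_\text{max}$ global iterations, so there are at most $\delta_\text{max}$ summands; applying $\|\sum_s\bm{a}_s\|^2\le n\sum_s\|\bm{a}_s\|^2$ with $n\le\delta_\text{max}$ pulls out one factor $\delta_\text{max}$, which together with the number of summands gives the $\delta_\text{max}^2$ appearing in every term of the statement. It then remains to bound $\mathbb{E}[\|\mathbf{\hat{G}}_s\bm{\Tilde{m}}^\TT\mathbf{\Lambda}\|^2]$ uniformly in $s$.

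First I would split $\mathbf{\hat{G}}_s\bm{\Tilde{m}}^\TT\mathbf{\Lambda}$ into its conditional mean $\nabla\mathbf{\hat{F}}_s\bm{\Tilde{m}}^\TT\mathbf{\Lambda}$ and the stochastic fluctuation, and apply $\|\bm{a}+\bm{b}\|^2\le2\|\bm{a}\|^2+2\|\bm{b}\|^2$; this first Jensen step produces the leading $2$ on $\sigma^2$ in the statement. By unbiasedness \eqref{eq-gradient} the fluctuation is a weighted sum of independent zero-mean terms $g(\bm{\xi}_{s,l}^{(i)};\cdot)-\nabla F_i$, with combined weight $\Tilde{m}_d\overline{\theta}_d\hat{m}_i/\theta_i$ on sample $(i,l)$; squaring, summing over the $\theta_i$ inner epochs, and using $\overline{\theta}_d\le\theta_\mathrm{max}$, $\theta_i\ge\theta_\mathrm{min}$ collapses the weight to $\theta_\mathrm{max}^2\theta_\mathrm{min}^{-1}\Tilde{m}_d^2\hat{m}_i^2=\theta_\mathrm{max}^2\theta_\mathrm{min}^{-1}m_i^2$. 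Since $\sum_{i\in\mathcal{C}}m_i^2\le(\sum_i m_i)^2=1$, the bounded-variance assumption \eqref{eq-variance} then yields the $\theta_\mathrm{max}^2\theta_\mathrm{min}^{-1}\sigma^2$ contribution.

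For the mean term $\nabla\mathbf{\hat{F}}_s\bm{\Tilde{m}}^\TT\mathbf{\Lambda}=\sum_{d\in\mathcal{D}}\Tilde{m}_d\overline{\theta}_d[\nabla\mathbf{\hat{F}}_s]_d$ I would add and subtract the global gradient $\nabla F(\bm{w}_{s,l}^{(i)})$ inside each cluster and apply Jensen once more; this second factor $2$ upgrades the constants to the $4$ multiplying $\kappa^2$ and $Q_{\Tilde{t}}$. The heterogeneity residual $\nabla F_i-\nabla F$ is controlled by \eqref{eq-kappa}: because the inner weights $\hat{m}_i/\theta_i$ sum to one over $l$ and $i\in\mathcal{C}_d$, convexity of $\|\cdot\|^2$ bounds each cluster residual by $\kappa^2$, which after tracking the $\overline{\theta}_d$ and $\theta_i$ factors gives the $\theta_\mathrm{max}^2\theta_\mathrm{min}^{-1}\kappa^2$ term. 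The complementary, full-gradient part is precisely the averaged quantity $Q_{\Tilde{t}}=\mathbb{E}[\|\nabla\mathbf{\hat{F}}_{\Tilde{t}}\bm{\Tilde{m}}^\TT\|^2]$ up to the $\overline{\theta}_d$ weights, contributing the $\theta_\mathrm{max}^2 Q_{\Tilde{t}}$ term. Collecting the three contributions and restoring the prefactor $\eta^2\delta_\text{max}^2$ reproduces the claimed inequality.

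The main difficulty is the bookkeeping rather than any single inequality. Two points need care. First, the staleness index $\Tilde{t}$ is server-specific, so the telescoping of $\bm{\overline{y}}_t-\bm{\overline{y}}_{\Tilde{t}}$ must be justified through the uniform bound $\delta_t^{(d)}\le\delta_\text{max}$ of Lemma \ref{lemma-delta}, and the gradients at the intermediate iterations inside the window have to be related to the single delayed term $Q_{\Tilde{t}}$, which I expect to require treating $Q_{\Tilde{t}}$ as a uniform bound over the window. Second, the per-client normalization by $\theta_i$ interacting with the nested weights $\hat{m}_i$, $\Tilde{m}_d$ and the epoch factors $\overline{\theta}_d$ must be tracked so that the powers of $\theta_\mathrm{max}$ and $\theta_\mathrm{min}$ land exactly as stated---in particular so that $\theta_\mathrm{min}^{-1}$ attaches to the $\sigma^2$ and $\kappa^2$ terms but not to $Q_{\Tilde{t}}$---and so that the client weights collapse to $\sum_i m_i^2\le1$.
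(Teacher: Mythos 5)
Your overall architecture matches the paper's proof step for step: telescope $\bm{\overline{y}}_t-\bm{\overline{y}}_{\Tilde{t}}$ through \eqref{eq:y_t}, cap the window length by Lemma \ref{lemma-delta}, apply Jensen twice to extract the factor $\delta_\text{max}^2$, split $\mathbf{\hat{G}}_s$ into $\nabla\mathbf{\hat{F}}_s$ plus a zero-mean fluctuation for the $2\sigma^2$ term, and bound the mean term to produce the $\kappa^2$ and $Q_{\Tilde{t}}$ contributions (the paper, like you, silently upgrades the $Q_s$ over the window to the single $Q_{\Tilde{t}}$, so that abuse is shared). Your noise bookkeeping ($\Tilde{m}_d\hat{m}_i=m_i$, per-sample weight collapsing to $\theta_\mathrm{max}^2\theta_\mathrm{min}^{-1}m_i^2$, $\sum_{i\in\mathcal{C}}m_i^2\le 1$) is sound, with the caveat that the inner-epoch terms are martingale differences rather than independent, so the cross terms vanish by the tower property, not by literal independence.

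The genuine gap is in your treatment of the mean term. You center at the global gradient, writing $\nabla F_i(\bm{w}_{s,l}^{(i)})=\bigl(\nabla F_i-\nabla F\bigr)(\bm{w}_{s,l}^{(i)})+\nabla F(\bm{w}_{s,l}^{(i)})$, and then assert the complementary part ``is precisely'' $Q_{\Tilde{t}}$. It is not: $Q_{\Tilde{t}}=\mathbb{E}\bigl[\|\nabla\mathbf{\hat{F}}_{\Tilde{t}}\bm{\Tilde{m}}^\TT\|^2\bigr]$ is built from the \emph{local} objectives $\nabla F_i$ evaluated at the local iterates, whereas your complementary part is $\sum_{d\in\mathcal{D}}\Tilde{m}_d\overline{\theta}_d\sum_{i\in\mathcal{C}_d}\frac{\hat{m}_i}{\theta_i}\sum_{l}\nabla F(\bm{w}_{s,l}^{(i)})$, with the global objective at those points. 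Converting the latter back to $Q$ requires a second Jensen-plus-$\kappa$ step, which inflates the constants beyond the stated $4\kappa^2$ and $4Q_{\Tilde{t}}$. The paper avoids this by centering at $\sum_{i\in\mathcal{C}}m_i\nabla\hat{F}_s^{(i)}$, i.e.\ the data-weighted average of the per-cluster normalized gradient aggregates, whose squared norm \emph{is} $Q_s$ by definition; this gives the clean intermediate bound \eqref{eq:delta-F}, $\mathbb{E}[\|\nabla\mathbf{\hat{F}}_s\|_{\mathbf{\Tilde{M}}}^2]\le 2\theta_\mathrm{min}^{-1}\kappa^2+2Q_s$, from which the lemma's constants fall out directly. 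Relatedly, the convexity argument you describe only bounds each cluster residual by $\kappa^2$, which would yield $\theta_\mathrm{max}^2\kappa^2$ rather than the stated, strictly tighter $\theta_\mathrm{max}^2\theta_\mathrm{min}^{-1}\kappa^2$ (recall $\theta_\mathrm{min}\ge 1$): the surviving $1/\theta_i$ comes from the paper's bound $\sum_i m_i\theta_i^{-2}\sum_{l}\kappa^2=\sum_i (m_i/\theta_i)\kappa^2\le\theta_\mathrm{min}^{-1}\kappa^2$, which plain convexity does not deliver. Keep your skeleton, but adopt the paper's centering, and both issues disappear.
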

\begin{proof}
Denote $\nabla \hat{F}_s^{(i)} \triangleq \sum_{i\in\mathcal{C}_d} \frac{\hat{m}_i}{\theta_i} \sum_{l=0}^{\theta_i-1} \nabla F_i(\bm{w}_{s,l}^{(i)})$.
We first provide an upper bound for a term $\mathbb{E} \left\| \nabla \mathbf{\hat{F}}_{s} \right\|_{\mathbf{\Tilde{M}}}^2$ where $\mathbf{\Tilde{M}} \triangleq \bm{\Tilde{m}}\mathbf{1}_D^\TT$.
\begin{align}
    &\quad \mathbb{E} \left[ \left\| \nabla \mathbf{\hat{F}}_{s} \right\|_{\mathbf{\Tilde{M}}}^2 \right] \nonumber \\
    &= \sum_{d\in\mathcal{D}} \Tilde{m}_d \mathbb{E} \left[ \bigg\| \nabla \hat{F}_s^{(i)} - \sum_{i\in\mathcal{C}} m_i \nabla \hat{F}_s^{(i)} + \sum_{i\in\mathcal{C}} m_i \nabla \hat{F}_s^{(i)} \bigg\|^2 \right] \nonumber \\
    &\overset{(a)}{\leq} \sum_{d\in\mathcal{D}} \Tilde{m}_d \bigg( 2\mathbb{E} \left[ \bigg\| \nabla \hat{F}_s^{(i)} - \sum_{i\in\mathcal{C}} m_i \nabla \hat{F}_s^{(i)} \bigg\|^2 \right] \nonumber \\
    & + 2\mathbb{E} \left[ \bigg\| \sum_{i\in\mathcal{C}} m_i \nabla \hat{F}_s^{(i)} \bigg\|^2  \right] \bigg) \nonumber \\
    &\overset{(b)}{\leq} \sum_{d\in\mathcal{D}} \Tilde{m}_d \bigg( 2\sum_{i\in\mathcal{C}} m_i \frac{1}{\theta_i^2} \sum_{l=0}^{\theta_i-1} \kappa^2
    + 2\mathbb{E}  \left[ \bigg\| \sum_{i\in\mathcal{C}} m_i \nabla \hat{F}_s^{(i)} \bigg\|^2 \right] \bigg) \nonumber \\
    &= \sum_{d\in\mathcal{D}} \Tilde{m}_d \bigg( 2\sum_{i\in\mathcal{C}} \frac{m_i}{\theta_i} \kappa^2 + 2Q_s \bigg) \nonumber \\
    &\leq 2\theta_\text{min}^{-1}\kappa^2 + 2Q_s, \label{eq:delta-F}
\end{align}
where (a) follows the Jensen's inequality, and (b) is due to \eqref{eq-kappa} in Assumption \ref{assumptions}.
Then we conclude the proof as follows:
\begin{align}
    & \quad \mathcal{E}_{t,1} \nonumber\\
    &= \mathbb{E} \left[ \bigg\| \eta \sum_{s=1}^{\delta_t^{(d)}}  \overline{\theta}_{t-s} \mathbf{\hat{G}}_{t-s} \bm{\Tilde{m}} \bigg\|^2 \right] \nonumber\\
    &\leq \eta^2 \delta_t^{(d)} \theta_\mathrm{max}^2  \sum_{s=1}^{\delta_t^{(d)}} \mathbb{E}  \left[ \left\| \mathbf{\hat{G}}_{t-s} - \nabla \mathbf{\hat{F}}_{t-s} + \nabla \mathbf{\hat{F}}_{t-s} \right\|_\mathbf{\Tilde{M}}^2 \right] \nonumber\\
    &\leq 2\eta^2 \delta_t^{(d)} \theta_\mathrm{max}^2  \sum_{s=1}^{\delta_t^{(d)}} \mathbb{E} \left[ \left\| \mathbf{\hat{G}}_{t-s} - \nabla \mathbf{\hat{F}}_{t-s} \right\|_\mathbf{\Tilde{M}}^2 \right] \nonumber\\
    &\quad + 2\eta^2 \delta_t^{(d)} \theta_\mathrm{max}^2  \sum_{s=1}^{\delta_t^{(d)}} \mathbb{E} \left[  \left\| \nabla \mathbf{\hat{F}}_{t-s} \right\|_\mathbf{\Tilde{M}}^2 \right] \nonumber\\
    &\overset{(c)}{\leq} 2 \eta^2 \delta_\text{max}^2 \theta_\mathrm{max}^2 \theta_\mathrm{min}^{-1} \sigma^2 
    + 2\eta^2 \delta_\text{max} \theta_\mathrm{max}^2  \sum_{s=1}^{\delta_t^{(d)}} \mathbb{E} \left[ \left\| \nabla \mathbf{\hat{F}}_{t-s} \right\|_\mathbf{\Tilde{M}}^2 \right] \nonumber\\
    &\overset{(d)}{\leq} 2 \eta^2 \delta_\text{max}^2 \theta_\mathrm{max}^2 \theta_\mathrm{min}^{-1} \sigma^2
    + 4 \eta^2 \delta_\text{max}^2 \theta_\mathrm{max}^2 \theta_\mathrm{min}^{-1} \kappa^2
    + 4 \eta^2 \delta_\text{max}^2 \theta_\mathrm{max}^2 Q_{\Tilde{t}},
\end{align}
where (c) follows Assumption \ref{assumptions} and Lemma \ref{lemma-delta}, and (d) follows \eqref{eq:delta-F}.
\end{proof}

\begin{lemma}\label{lemma-8}
With Assumption \ref{assumptions}, we have:
\begin{equation}
\begin{split}
    \mathcal{E}_{t,2}
    & \leq 2\eta^2 \frac{\theta_\mathrm{max}^2}{\theta_\text{min}}\sigma^2 Z_{t,\rho}^{(1)}
    + 4\eta^2 \frac{\theta_\mathrm{max}^2}{\theta_\text{min}} \kappa^2 Z_{t,\rho}^{(2)} \\
    & + 4\eta^2 \theta_\mathrm{max}^2 \left( \sum_{l=1}^{t-1} \rho_{l,t-1} \right) \left( \sum_{s=1}^{t-1} \rho_{s,t-1} Q_s \right),
\end{split}
\end{equation}
where $Z_{t,\rho}^{(1)} \triangleq \sum_{s=1}^{t-1} \rho_{s,t-1}^2$, and $Z_{t,\rho}^{(2)} \triangleq \left( \sum_{s=1}^{t-1} \rho_{s,t-1} \right)^2$.
\end{lemma}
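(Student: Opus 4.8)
The plan is to reproduce the model-deviation argument of Lemma~\ref{lemma:deviation} at the level of the edge-server models. Collecting the server models into $\mathbf{Y}_t \triangleq [\bm{y}_t^{(d)}]_{d\in\mathcal{D}} \in \mathbb{R}^{M\times D}$, the intra-cluster update \eqref{eq:intra-async} and inter-cluster aggregation \eqref{eq:inter-async} give the recursion $\mathbf{Y}_{t+1} = (\mathbf{Y}_t - \eta\mathbf{\hat{G}}_t\mathbf{\Lambda})\mathbf{P}_t$, since $\overline{\theta}_d\sum_{i\in\mathcal{C}_d}\hat{m}_i\bm{\Delta}_t^{(i)}$ is precisely the $d$-th column of $-\eta\mathbf{\hat{G}}_t\mathbf{\Lambda}$. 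Because all servers share a common initialization and each $\mathbf{P}_l$ fixes the consensus matrix $\mathbf{M}$ appearing in $\rho_{s,t-1}$, unrolling this recursion and right-multiplying by $\mathbf{I}_D-\mathbf{M}$ kills the initial term exactly as in Lemma~\ref{lemma:deviation}, leaving $\mathbf{Y}_{\Tilde{t}}(\mathbf{I}_D-\mathbf{M}) = -\eta\sum_{s=1}^{t-1}\mathbf{\hat{G}}_s\mathbf{\Lambda}\bigl(\prod_{l=s}^{t-1}\mathbf{P}_l-\mathbf{M}\bigr)$. Observing that $\bm{\overline{y}}_{\Tilde{t}} = \mathbf{Y}_{\Tilde{t}}\bm{\Tilde{m}}^\TT$, so that $\mathcal{E}_{t,2} = \mathbb{E}[\|\mathbf{Y}_{\Tilde{t}}(\mathbf{I}_D-\mathbf{M})\|_{\mathbf{\Tilde{M}}}^2]$ with $\|\cdot\|_{\mathbf{\Tilde{M}}}$ the $\Tilde{m}_d$-weighted Frobenius norm, it remains to bound this accumulated sum.

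First I would split the accumulated gradient into its stochastic and mean components, $\mathbf{\hat{G}}_s\mathbf{\Lambda} = (\mathbf{\hat{G}}_s-\nabla\mathbf{\hat{F}}_s)\mathbf{\Lambda} + \nabla\mathbf{\hat{F}}_s\mathbf{\Lambda}$, and apply $\|\bm{a}+\bm{b}\|^2\le 2\|\bm{a}\|^2+2\|\bm{b}\|^2$ to the two resulting sums; this is the origin of the overall factor $2$ in the bound. For the stochastic sum, the mini-batch noise is conditionally zero-mean and independent across iterations, so the cross terms vanish in expectation and the variances add. Bounding the operator effect of $\prod_{l=s}^{t-1}\mathbf{P}_l-\mathbf{M}$ by $\rho_{s,t-1}$, the diagonal scaling $\mathbf{\Lambda}$ by $\theta_\mathrm{max}$, and the per-iteration noise variance by $\sigma^2\theta_\mathrm{min}^{-1}$ (the $1/\theta_i$ normalization contributes $1/\theta_i^2$, summing $\theta_i$ independent terms restores one factor $\theta_i$, and $\sum_{i\in\mathcal{C}_d}\hat{m}_i^2\le 1$) yields the first term $2\eta^2\theta_\mathrm{max}^2\theta_\mathrm{min}^{-1}\sigma^2 Z_{t,\rho}^{(1)}$, where the sum of squares $Z_{t,\rho}^{(1)}=\sum_s\rho_{s,t-1}^2$ is the signature of additive independent variances.

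For the mean sum I would instead apply Jensen's inequality in the weighted form $\|\sum_s\rho_{s,t-1}\bm{x}_s\|^2 \le (\sum_l\rho_{l,t-1})(\sum_s\rho_{s,t-1}\|\bm{x}_s\|^2)$, which pulls out the factor $\sum_l\rho_{l,t-1}$ and explains why this contribution carries $(\sum_s\rho_{s,t-1})^2=Z_{t,\rho}^{(2)}$ rather than a sum of squares. Invoking the already-established estimate $\mathbb{E}[\|\nabla\mathbf{\hat{F}}_s\|_{\mathbf{\Tilde{M}}}^2]\le 2\theta_\mathrm{min}^{-1}\kappa^2+2Q_s$ from \eqref{eq:delta-F}, the $\kappa^2$ piece combines with the two $\rho$-sums into $4\eta^2\theta_\mathrm{max}^2\theta_\mathrm{min}^{-1}\kappa^2 Z_{t,\rho}^{(2)}$, while the $Q_s$ piece produces the double-product term $4\eta^2\theta_\mathrm{max}^2(\sum_l\rho_{l,t-1})(\sum_s\rho_{s,t-1}Q_s)$. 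Adding the stochastic and mean contributions gives the claimed bound.

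The main obstacle is that the mixing matrices $\mathbf{P}_t$ are time-varying and data-dependent through the staleness-aware weights \eqref{eq:pk}, so unlike the synchronous analysis I cannot collapse $\prod_{l=s}^{t-1}\mathbf{P}_l-\mathbf{M}$ into a single geometric factor such as $\zeta^{\alpha}$ and must carry the operator norms $\rho_{s,t-1}$ symbolically throughout. The attendant subtlety is keeping the variance term---which relies on the independence-based cancellation of cross terms and hence produces $Z_{t,\rho}^{(1)}$---cleanly separated from the bias term---which relies only on Jensen's inequality and hence produces $Z_{t,\rho}^{(2)}$---while correctly tracking the normalization factors $\theta_\mathrm{min}$, $\theta_\mathrm{max}$, and $\overline{\theta}_d$ introduced by the non-uniform local epoch counts through the weighted norm $\|\cdot\|_{\mathbf{\Tilde{M}}}$.
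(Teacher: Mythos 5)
Your proposal reproduces the paper's proof essentially step for step: the same server-level recursion $\mathbf{Y}_{t+1} = (\mathbf{Y}_t - \eta\mathbf{\hat{G}}_t\mathbf{\Lambda})\mathbf{P}_t$ unrolled against the common initialization to kill the initial term, the same bias--variance split with factor $2$, the independence-based cancellation of cross terms in the noise sum producing $Z_{t,\rho}^{(1)}$, the weighted Jensen step $\|\sum_s \rho_{s,t-1}\bm{x}_s\|^2 \le (\sum_l \rho_{l,t-1})(\sum_s \rho_{s,t-1}\|\bm{x}_s\|^2)$ producing $Z_{t,\rho}^{(2)}$, and the same invocation of the estimate \eqref{eq:delta-F} from the preceding lemma, with identical tracking of $\theta_\mathrm{max}$ and $\theta_\mathrm{min}^{-1}$. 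The only cosmetic slip is that your unrolled identity should run to $\Tilde{t}-1$ rather than $t-1$ (the extension to $t-1$ enters only at the inequality stage via $\Tilde{t}-1 \le t-1$, exactly as in the paper's step (c)), which does not affect correctness.
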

\begin{proof}
We first define the evolution of consensus status as follows:
\begin{equation}
    \rho_{s,t-1} \triangleq \left\| \prod_{l=s}^{t-1} \mathbf{P}_{l} - \mathbf{M} \right\|_\text{op} = \prod_{l=s}^{t-1} \left\| \mathbf{P}_{l} - \mathbf{M} \right\|_\text{op} \leq \prod_{l=s}^{t-1} 1 = 1,
\label{eq:rho}
\end{equation}
which is owing to the fact that $\mathbf{P}_{l}$ is a doubly stochastic matrix and follows Lemma 3 in \cite{sun2021semi}.
Following the iterative evolution of $\bm{\overline{y}}_t$ and $\bm{y}_t^{(d)}$, we have:
\begin{align}
    \mathcal{E}_{t,2}
    &= \mathbb{E} \left[ \left\| \mathbf{Y}_{\Tilde{t}} (\mathbf{\Tilde{M}} - \mathbf{I}) \right\|_{\mathbf{\Tilde{M}}}^2 \right] \nonumber \\
    &= \mathbb{E} \left[ \left\| (\mathbf{Y}_{\Tilde{t}-1} - \eta  \mathbf{\hat{G}}_{\Tilde{t}-1} \bm{\overline{\theta}}_{\Tilde{t}-1}) \mathbf{P}_{\Tilde{t}-1} (\mathbf{\Tilde{M}} - \mathbf{I}) \right\|_{\mathbf{\Tilde{M}}}^2 \right] \nonumber \\
    &\overset{(a)}{=} \mathbb{E} \left[  \bigg\| \mathbf{Y}_{0} \mathbf{\Phi}_{0,\Tilde{t}-1} (\mathbf{\Tilde{M}} - \mathbf{I}) - \eta \sum_{s=1}^{\Tilde{t}-1}  \bm{\overline{\theta}}_s \mathbf{\hat{G}}_{s} \mathbf{\Phi}_{s,\Tilde{t}-1} (\mathbf{\Tilde{M}} - \mathbf{I}) \bigg\|_{\mathbf{\Tilde{M}}}^2 \right] \nonumber \\
    &\overset{(b)}{=} \eta^2 \mathbb{E} \left[  \bigg\| \sum_{s=1}^{\Tilde{t}-1} \bm{\overline{\theta}}_s \mathbf{\hat{G}}_{s} \mathbf{\Phi}_{s,\Tilde{t}-1} (\mathbf{\Tilde{M}} - \mathbf{I}) \bigg\|_{\mathbf{\Tilde{M}}}^2 \right] \nonumber \\
    &\overset{(c)}{\leq} 2\eta^2 \sum_{s=1}^{t-1} \mathbb{E} \left[  \left\| \bm{\overline{\theta}}_s (\mathbf{\hat{G}}_{s} - \nabla \mathbf{\hat{F}}_{s}) (\mathbf{\Phi}_{s,t-1} - \mathbf{\Tilde{M}}) \right\|_{\mathbf{\Tilde{M}}}^2 \right] \nonumber \\
    &\quad + 2\eta^2 \mathbb{E} \left[  \bigg\| \sum_{s=1}^{t-1} \bm{\overline{\theta}}_s \nabla \mathbf{\hat{F}}_{s} (\mathbf{\Phi}_{s,t-1} - \mathbf{\Tilde{M}}) \bigg\|_{\mathbf{\Tilde{M}}}^2 \right] \nonumber\\
    &\overset{(d)}{\leq} 2\eta^2 \sum_{s=1}^{t-1} \mathbb{E} \left[ \bigg\| \bm{\overline{\theta}}_s (\mathbf{\hat{G}}_{s} - \nabla \mathbf{\hat{F}}_{s}) \bigg\|_{\mathbf{\Tilde{M}}}^2 \bigg\| \mathbf{\Phi}_{s,t-1} - \mathbf{\Tilde{M}} \bigg\|_\text{op}^2 \right] \nonumber \\
    &\quad + 2\eta^2 \mathbb{E} \left[  \bigg\| \sum_{s=1}^{t-1} \bm{\overline{\theta}}_s \nabla \mathbf{\hat{F}}_{s} (\mathbf{\Phi}_{s,t-1} - \mathbf{\Tilde{M}}) \bigg\|_{\mathbf{\Tilde{M}}}^2 \right] \nonumber\\
    &\overset{(e)}{\leq} 2\eta^2 \frac{\theta_\mathrm{max}^2}{\theta_\text{min}}\sigma^2 \sum_{s=1}^{t-1} \rho_{s,t-1}^2  \nonumber \\
    &\quad + 2\eta^2 \theta_\mathrm{max}^2 \mathbb{E} \left[ \left\| \sum_{s=1}^{t-1} \nabla \mathbf{\hat{F}}_{s} (\mathbf{\Phi}_{s,t-1} - \mathbf{\Tilde{M}}) \right\|_{\mathbf{\Tilde{M}}}^2 \right] \nonumber \\
    &\overset{(f)}{\leq} 2\eta^2   \frac{\theta_\mathrm{max}^2}{\theta_\text{min}}\sigma^2 Z_{t,\rho}^{(1)} \nonumber \\
    &\quad + 2\eta^2  \theta_\mathrm{max}^2 \mathbb{E} \left[ \bigg( \sum_{s=1}^{t-1} \left\| \nabla \mathbf{\hat{F}}_{s} (\mathbf{\Phi}_{s,t-1} - \mathbf{\Tilde{M}}) \right\|_{\mathbf{\Tilde{M}}}\bigg)^2 \right] \nonumber\\
    &\overset{(g)}{\leq} 2\eta^2 \frac{\theta_\mathrm{max}^2}{\theta_\text{min}}\sigma^2 Z_{t,\rho}^{(1)} \nonumber \\
    &\quad + 2\eta^2 \theta_\mathrm{max}^2 \mathbb{E} \left[  \bigg( \sum_{s=1}^{t-1} \rho_{s,t-1} \left\| \nabla \mathbf{\hat{F}}_{s} \right\|_{\mathbf{\Tilde{M}}}\bigg)^2 \right] \nonumber \\
    &\leq 2\eta^2   \frac{\theta_\mathrm{max}^2}{\theta_\text{min}}\sigma^2 Z_{t,\rho}^{(1)} \nonumber \\
    &\quad + 2\eta^2 \theta_\mathrm{max}^2 \bigg( \sum_{s=1}^{t-1} \rho_{s,t-1} \bigg) \bigg( \sum_{s=1}^{t-1} \rho_{s,t-1} \mathbb{E}\left[  \left\| \nabla \mathbf{\hat{F}}_{s} \right\|_{\mathbf{\Tilde{M}}} \right]\bigg) \nonumber \\
    &\overset{(h)}{\leq} 2\eta^2   \frac{\theta_\mathrm{max}^2}{\theta_\text{min}}\sigma^2 Z_{t,\rho}^{(1)}  \nonumber \\
    &\quad + 2\eta^2 \theta_\mathrm{max}^2 \bigg( \sum_{s=1}^{t-1} \rho_{s,t-1} \bigg) \bigg( \sum_{s=1}^{t-1} \rho_{s,t-1} (\frac{2\kappa^2}{\theta_\text{min}} + 2Q_s) \bigg)  \nonumber \\
    &= 2\eta^2 \frac{\theta_\mathrm{max}^2}{\theta_\text{min}}\sigma^2 Z_{t,\rho}^{(1)}
    + 4\eta^2 \frac{\theta_\mathrm{max}^2}{\theta_\text{min}} \kappa^2 Z_{t,\rho}^{(2)} \nonumber \\
    &\quad + 4\eta^2 \theta_\mathrm{max}^2 \bigg( \sum_{l=1}^{t-1} \rho_{l,t-1} \bigg) \bigg( \sum_{s=1}^{t-1} \rho_{s,t-1} Q_s \bigg),
    \label{eq:term-circle-4}
\end{align}
where (a) applies the definition of $\mathbf{\Phi}_{s,\Tilde{t}-1}$. (b) holds since the initial models $\bm{y}_{0}^{(d)}$ are same. (c) follows the Jensen's inequality and $\Tilde{t}-1 \leq t-1$, (d) follows Lemma 7 in \cite{castiglia2020multi}.
Besides, we apply \eqref{eq:rho} and \eqref{eq-variance} in (e), and (f) is due to the triangle inequality $\left\|\sum_{s=1}^{t-1}\mathbf{A}_s\right\|_{\mathbf{\Tilde{M}}} \leq \sum_{s=1}^{t-1} \left\|\mathbf{A}_s\right\|_{\mathbf{\Tilde{M}}}$. Moreover, (g) follows the definition in \eqref{eq:rho} and (h) holds due to \eqref{eq:delta-F}. 
\end{proof}

\begin{lemma}\label{lemma-6}
With Assumption \ref{assumptions}, we have:
\begin{equation}
\begin{split}
    \mathcal{E}_{t,3}
    & \leq \frac{\eta^2(\theta_\text{max}-1)\sigma^2}{1-2\eta^2L^2 U_3}
    + \frac{2\eta^2 U_3}{1-2\eta^2L^2 U_3} \left( 3L^2 \mathcal{E}_{t,2} \right. \\
    & \left. + 3\kappa^2 + 6\mathbb{E} \left[ \left\| \nabla F (\bm{\overline{y}}_t) \right\|^2 \right] + 6L^2 \mathcal{E}_{t,1} \right).
\end{split}
\end{equation}
\end{lemma}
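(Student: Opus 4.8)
The plan is to treat $\mathcal{E}_{t,3}$ as a \emph{local client-drift} term: it averages, over the local epochs $l=0,\dots,\theta_i-1$, the deviation of each client iterate $\bm{w}_{\Tilde{t},l}^{(i)}$ from the server model $\bm{y}_{\Tilde{t}}^{(d)}$ that was broadcast to initialize it. Since $\mathcal{E}_{t,3}$ will re-appear on the right-hand side during the bounding, the overall strategy is to derive a \emph{self-referential} inequality for $\mathcal{E}_{t,3}$ and then solve for it. First I would use the broadcast step that sets $\bm{w}_{\Tilde{t},0}^{(i)}=\bm{y}_{\Tilde{t}}^{(d)}$ together with the local update \eqref{eq:update-async} to write $\bm{y}_{\Tilde{t}}^{(d)}-\bm{w}_{\Tilde{t},l}^{(i)} = \eta\sum_{r=0}^{l-1} g(\bm{\xi}_{\Tilde{t},r}^{(i)};\bm{w}_{\Tilde{t},r}^{(i)})$, so that $\mathbb{E}[\|\bm{y}_{\Tilde{t}}^{(d)}-\bm{w}_{\Tilde{t},l}^{(i)}\|^2]=\eta^2\,\mathbb{E}[\|\sum_{r=0}^{l-1} g(\bm{\xi}_{\Tilde{t},r}^{(i)};\bm{w}_{\Tilde{t},r}^{(i)})\|^2]$.

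Next I would separate the stochastic noise from the mean gradient in each step by writing $g=(g-\nabla F_i)+\nabla F_i$. Using the unbiasedness and bounded-variance conditions \eqref{eq-gradient}--\eqref{eq-variance} of Assumption \ref{assumptions}, the martingale-difference noise contributes a term proportional to $\eta^2 l\sigma^2$; after weighting by $\Tilde{m}_d\hat{m}_i/\theta_i$ and averaging over $l$, this collapses to the leading term $\eta^2(\theta_\mathrm{max}-1)\sigma^2$. For the mean part I would apply Jensen's inequality to $\|\sum_{r=0}^{l-1}\nabla F_i(\bm{w}_{\Tilde{t},r}^{(i)})\|^2$ and then expand each $\nabla F_i(\bm{w}_{\Tilde{t},r}^{(i)})$ through $L$-smoothness \eqref{eq-smooth} as $\nabla F_i(\bm{y}_{\Tilde{t}}^{(d)})$ plus a remainder bounded by $L^2\|\bm{w}_{\Tilde{t},r}^{(i)}-\bm{y}_{\Tilde{t}}^{(d)}\|^2$; this remainder is exactly the summand of $\mathcal{E}_{t,3}$ and is what regenerates $\mathcal{E}_{t,3}$ on the right.

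The heart of the argument is the decomposition of the anchored gradient $\nabla F_i(\bm{y}_{\Tilde{t}}^{(d)})$. I would split it three ways (factor $3$) into (i) $\nabla F_i(\bm{y}_{\Tilde{t}}^{(d)})-\nabla F_i(\bm{\overline{y}}_{\Tilde{t}})$, bounded via $L$-smoothness by $L^2\|\bm{y}_{\Tilde{t}}^{(d)}-\bm{\overline{y}}_{\Tilde{t}}\|^2$, (ii) $\nabla F_i(\bm{\overline{y}}_{\Tilde{t}})-\nabla F(\bm{\overline{y}}_{\Tilde{t}})$, bounded by $\kappa^2$ through \eqref{eq-kappa}, and (iii) the global gradient $\nabla F(\bm{\overline{y}}_{\Tilde{t}})$, which I would further split two ways (factor $2$) into the staleness gap $\nabla F(\bm{\overline{y}}_{\Tilde{t}})-\nabla F(\bm{\overline{y}}_t)$, bounded by $L^2\|\bm{\overline{y}}_{\Tilde{t}}-\bm{\overline{y}}_t\|^2$, and the target $\nabla F(\bm{\overline{y}}_t)$. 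Weighting by $\Tilde{m}_d$ and summing over the clusters turns the first remainder into $\mathcal{E}_{t,2}$ and the staleness remainder into $\mathcal{E}_{t,1}$ (as defined in \eqref{eq:respective}), producing precisely the combination $3L^2\mathcal{E}_{t,2}+3\kappa^2+6\mathbb{E}[\|\nabla F(\bm{\overline{y}}_t)\|^2]+6L^2\mathcal{E}_{t,1}$.

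Collecting everything yields an inequality of the shape $\mathcal{E}_{t,3}\le \eta^2(\theta_\mathrm{max}-1)\sigma^2 + 2\eta^2 L^2 U_3\,\mathcal{E}_{t,3} + 2\eta^2 U_3\,(3L^2\mathcal{E}_{t,2}+3\kappa^2+6\mathbb{E}[\|\nabla F(\bm{\overline{y}}_t)\|^2]+6L^2\mathcal{E}_{t,1})$, where the coefficient multiplying the regenerated $\mathcal{E}_{t,3}$ arises from the nested epoch double sum $\sum_l\sum_{r<l}$, whose size is controlled through $U_2\triangleq\theta_\mathrm{max}(\theta_\mathrm{max}-1)$ and the derived constant $U_3$. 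The final step is the self-bounding move: assuming the learning rate keeps $1-2\eta^2 L^2 U_3>0$, I would move the regenerated $\mathcal{E}_{t,3}$ to the left and divide, which delivers the stated prefactor $(1-2\eta^2 L^2 U_3)^{-1}$ on both the $\sigma^2$ term and the bracketed terms. I expect the main obstacle to be the precise bookkeeping of these nested sums over local epochs against the $1/\theta_i$ normalization: the constants $U_2$ and $U_3$ must be tracked carefully so that the contraction coefficient of $\mathcal{E}_{t,3}$ stays below one and the $\theta_\mathrm{max}$-dependence of the $\sigma^2$ term emerges cleanly. A secondary subtlety is verifying, via the tower property, that the cross terms between the martingale-difference SGD noise and the mean gradients drop out, so that the noise enters with the stated coefficient rather than an inflated one.
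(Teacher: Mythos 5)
Your proposal matches the paper's proof essentially step for step: the same telescoping of the local SGD updates $\bm{y}_{\Tilde{t}}^{(d)}-\bm{w}_{\Tilde{t},l}^{(i)}=\eta\sum_{s=0}^{l-1}g_i(\bm{w}_{\Tilde{t},s}^{(i)})$ with a noise/mean split, the same three-way decomposition of $\nabla F_i(\bm{y}_{\Tilde{t}}^{(d)})$ through $\bm{\overline{y}}_{\Tilde{t}}$, the $\kappa$-bound, and the staleness gap (the paper's \eqref{eq:help-1}) yielding $3L^2\mathcal{E}_{t,2}+3\kappa^2+6\mathbb{E}\left[\left\|\nabla F(\bm{\overline{y}}_t)\right\|^2\right]+6L^2\mathcal{E}_{t,1}$, and the same self-bounding rearrangement with division by $1-2\eta^2L^2U_3$. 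The only cosmetic difference is that the paper resolves the self-referential inequality at the per-client level in \eqref{eq:help-3} before weighting by $\Tilde{m}_d\hat{m}_i/\theta_i$ and summing, whereas you aggregate into $\mathcal{E}_{t,3}$ first and then solve, which is equivalent since the contraction coefficient is uniform across clients.
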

\begin{proof}
First, we bound the term $ \mathbb{E} \left[ \left\| \nabla F_i(\bm{y}_{\Tilde{t}}^{(d)}) \right\|^2 \right] $ as follows:
\begin{align}
    &\quad \mathbb{E} \left[ \left\| \nabla F_i (\bm{y}_{\Tilde{t}}^{(d)}) \right\|^2 \right] \nonumber \\
    &\leq \mathbb{E} \left[ \left\| \nabla F_i (\bm{y}_{\Tilde{t}}^{(d)}) - \nabla F_i (\bm{\overline{y}}_{\Tilde{t}}) + \nabla F_i (\bm{\overline{y}}_{\Tilde{t}}) - \nabla F (\bm{\overline{y}}_{\Tilde{t}}) + \nabla F (\bm{\overline{y}}_{\Tilde{t}}) \right\|^2 \right] \nonumber\\
    &= 3\mathbb{E} \left[ \left\| \nabla F_i (\bm{y}_{\Tilde{t}}^{(d)}) - \nabla F_i (\bm{\overline{y}}_{\Tilde{t}}) \right\|^2 + 3\mathbb{E}\left\| \nabla F_i (\bm{\overline{y}}_{\Tilde{t}}) - \nabla F (\bm{\overline{y}}_{\Tilde{t}}) \right\|^2 \right] \nonumber \\
    &\quad + 3\mathbb{E} \left[ \left\| \nabla F (\bm{\overline{y}}_{\Tilde{t}}) \right\|^2 \right] \nonumber\\
    &\overset{(a)}{\leq} 3L^2 \mathbb{E} \left[ \left\| \bm{y}_{\Tilde{t}}^{(d)} - \bm{\overline{y}}_{\Tilde{t}} \right\|^2 \right] + 3\kappa^2 + 3\mathbb{E} \left[ \left\| \nabla F (\bm{\overline{y}}_{\Tilde{t}}) \right\|^2 \right] \nonumber\\
    &\leq 3L^2 \mathbb{E} \left[ \left\| \bm{y}_{\Tilde{t}}^{(d)} - \bm{\overline{y}}_{\Tilde{t}} \right\|^2 \right] + 3\kappa^2 + 6\mathbb{E} \left[ \left\| \nabla F (\bm{\overline{y}}_t) \right\|^2 \right] \nonumber \\
    &\quad + 6 \mathbb{E} \left[ \left\| \nabla F(\bm{\overline{y}}_t) - \nabla F(\bm{\overline{y}}_{\Tilde{t}}) \right\|^2 \right] \nonumber\\
    &\leq 3L^2 \mathcal{E}_{t,2} + 3\kappa^2 + 6\mathbb{E} \left[ \left\| \nabla F (\bm{\overline{y}}_t) \right\|^2 \right] + 6L^2 \mathcal{E}_{t,1},\label{eq:help-1}
\end{align}
where (a) follows \eqref{eq-smooth} and \eqref{eq-kappa} in Assumption \ref{assumptions}.
Then we have:
\begin{align}
    &\quad \mathbb{E} \left[ \left\| \bm{y}_{\Tilde{t}}^{(d)} - \bm{w}_{\Tilde{t},l}^{(i)} \right\|^2 \right] \nonumber \\
    &= \mathbb{E} \left[ \left\| \eta \sum_{s=0}^{l-1} g_i(\bm{w}_{\Tilde{t},s}^{(i)}) \right\|^2 \right] \nonumber \\
    &= \eta^2 \mathbb{E} \left[ \left\| \sum_{s=0}^{l-1} \left( g_i(\bm{w}_{\Tilde{t},s}^{(i)}) - \nabla F_i(\bm{w}_{\Tilde{t},s}^{(i)}) + \nabla F_i(\bm{w}_{\Tilde{t},s}^{(i)}) \right) \right\|^2 \right] \nonumber\\
    &\overset{(b)}{\leq} 2\eta^2 \mathbb{E} \left[ \left\| \sum_{s=0}^{l-1} \left( g_i(\bm{w}_{\Tilde{t},s}^{(i)}) - \nabla F_i (\bm{w}_{\Tilde{t},s}^{(i)}) \right) \right\|^2 \right] \nonumber \\
    &\quad + 2\eta^2 l \sum_{s=0}^{l-1} \mathbb{E} \left[ \left\| \nabla F_i (\bm{w}_{\Tilde{t},s}^{(i)}) \right\|^2 \right] \nonumber\\
    &\leq 2\eta^2 l \sigma^2 \!+\! 2\eta^2 l \! \sum_{s=0}^{l-1} \! \mathbb{E} \left[ \left\| \nabla F_i (\bm{w}_{\Tilde{t},s}^{(i)}) - \nabla F_i (\bm{y}_{\Tilde{t}}^{(d)}) + \nabla F_i (\bm{y}_{\Tilde{t}}^{(d)}) \right\|^2 \right] \nonumber\\
    &\overset{(c)}{\leq} 2\eta^2 l \sigma^2 
    + 4\eta^2L^2 l \sum_{s=0}^{\theta_i-1} \mathbb{E} \left[ \left\| \bm{y}_{\Tilde{t}}^{(d)} - \bm{w}_{\Tilde{t},s}^{(i)} \right\|^2 \right] \nonumber \\
    &\quad + 4\eta^2l \sum_{s=0}^{\theta_i-1} \mathbb{E} \left[ \left\| \nabla F_i (\bm{y}_{\Tilde{t}}^{(d)}) \right\|^2 \right] \nonumber\\
    &= 2\eta^2 l \bigg( \sigma^2 
    + 2\theta_i \mathbb{E} \left[ \left\| \nabla F_i(\bm{y}_{\Tilde{t}}^{(d)}) \right\|^2 \right] \\
    &+ 2L^2 \sum_{s=0}^{\theta_i-1} \mathbb{E} \left[ \left\| \bm{y}_{\Tilde{t}}^{(d)} - \bm{w}_{\Tilde{t},s}^{(i)} \right\|^2 \right] \bigg),
\end{align}
where (b) follows the Jensen's inequality, (c) is due to the fact that $l-1\leq \theta_i-1$ and $L$-smoothness in \eqref{eq-smooth}. Thus, summing both sides over $l=0, 1, \dots, \theta_i-1$ yields:
\begin{align}
    &\quad \sum_{l=0}^{\theta_i-1} \mathbb{E} \left[ \left\| \bm{y}_{\Tilde{t}}^{(d)} - \bm{w}_{\Tilde{t},l}^{(i)} \right\|^2 \right]\nonumber\\
    &\leq 2\eta^2 \left( \sum_{l=0}^{\theta_i-1} l \right) \left[ \sigma^2 
    + 2\theta_i \mathbb{E} \left[ \left\| \nabla F_i(\bm{y}_{\Tilde{t}}^{(d)}) \right\|^2 \right] \nonumber \right. \\
    &\quad \left. + 2L^2 \sum_{s=0}^{\theta_i-1} \mathbb{E} \left[ \left[ \left\| \bm{y}_{\Tilde{t}}^{(d)} - \bm{w}_{\Tilde{t},s}^{(i)} \right\|^2 \right] \right] \right] \nonumber\\
    &\leq \theta_\mathrm{max}(\theta_\mathrm{max}\!-\!1) \! \bigg[ \sigma^2 
    + 2\theta_i \mathbb{E} \left[ \left\| \nabla F_i (\bm{y}_{\Tilde{t}}^{(d)}) \right\|^2 \right] \nonumber \\
    &\quad + 2L^2 \!\sum_{l=0}^{\theta_i-1} \! \mathbb{E} \left[ \bigg\| \bm{y}_{\Tilde{t}}^{(d)} - \bm{w}_{\Tilde{t},l}^{(i)} \bigg\|^2 \right] \bigg].
    \label{eq:help-3}
\end{align}
We rearrange the terms in \eqref{eq:help-3}, divide both sides by $1 \!-\! 2\eta^2L^2 U_3$, and apply \eqref{eq:help-1} to obtain the following result:
\begin{align}
    &\quad\; \sum_{l=0}^{\theta_i-1} \mathbb{E}  \left[ \left\| \bm{y}_{\Tilde{t}}^{(d)} - \bm{w}_{\Tilde{t},l}^{(i)} \right\|^2 \right] \nonumber \\
    & \leq \frac{\eta^2\theta_\mathrm{max}(\theta_\mathrm{max}-1)}{1-2\eta^2L^2 U_3} \left[\sigma^2 +  2\theta_\mathrm{max} \left( 3L^2 \mathcal{E}_{t,2} \right. \right. \nonumber \\
    &\quad \left. \left. + 3\kappa^2 + 6\mathbb{E}  \left[ \left\| \nabla F (\bm{\overline{y}}_t) \right\|^2 \right] + 6L^2 \mathcal{E}_{t,1} \right) \right].
\label{eq:help-2}
\end{align}
Following the definition of $\mathcal{E}_{t,3}$, the proof is completed by summing up both sides of \eqref{eq:help-2} over $i\in \mathcal{C}_d, d\in \mathcal{D}$.
\end{proof}

\bibliographystyle{IEEEtran} 
\bibliography{Reference}

\begin{IEEEbiography}[{\includegraphics[width=1in,height=1.25in,clip,keepaspectratio]{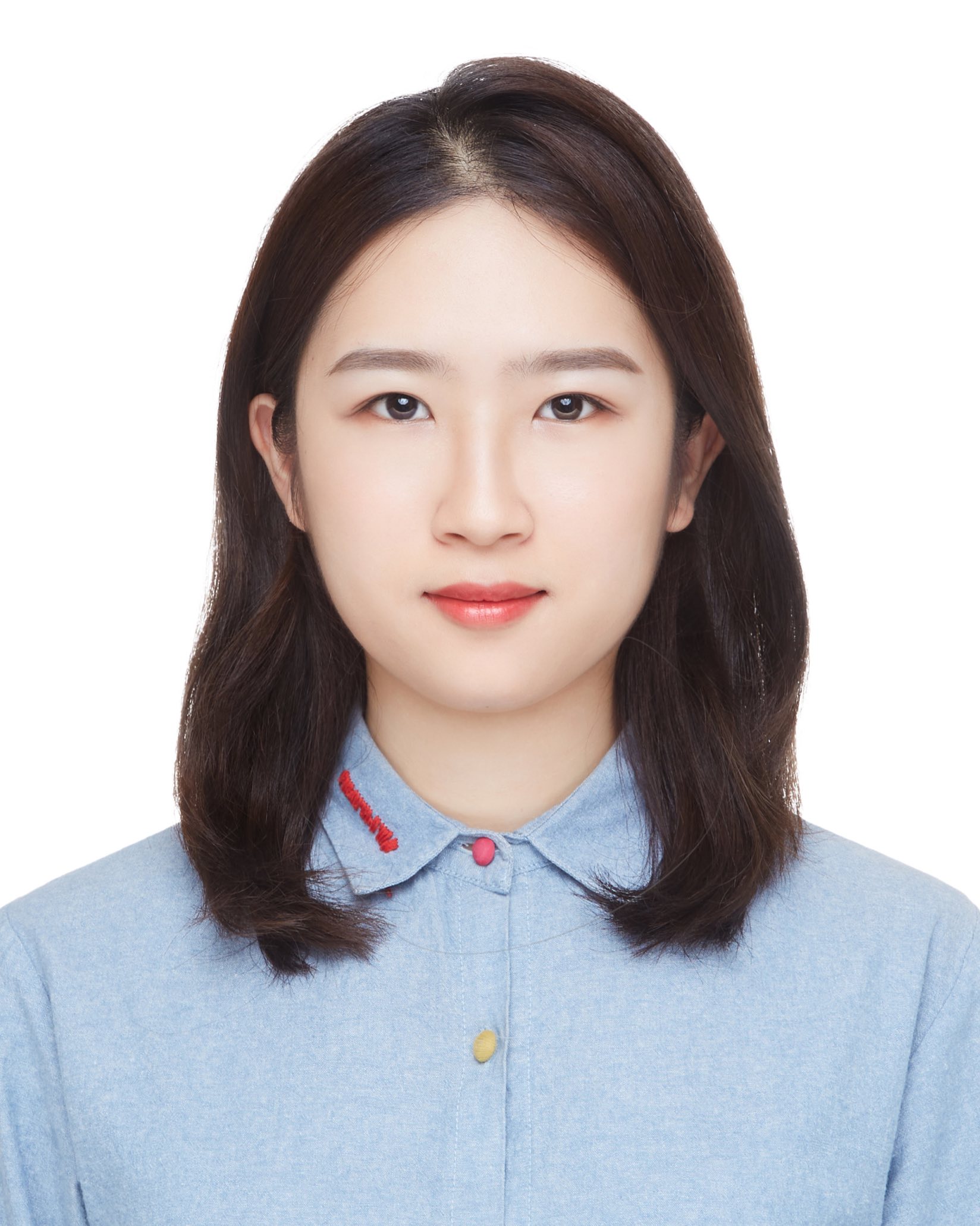}}]{Yuchang Sun}
(Graduate student member, IEEE) received the B.Eng. degree in electronic and information engineering from Beijing Institute of Technology in 2020. She is currently pursuing a Ph.D. degree at Hong Kong University of Science and Technology. Her research interests include federated learning and distributed optimization.
\end{IEEEbiography}

\begin{IEEEbiography}[{\includegraphics[width=1in,height=1.25in,clip,keepaspectratio]{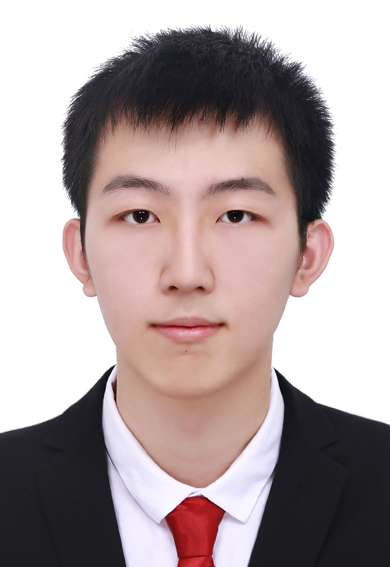}}]{Jiawei Shao}
(Graduate student member, IEEE) received the B.Eng. degree in telecommunication engineering from Beijing University of Posts and Telecommunications in 2019. He is currently pursuing a Ph.D. degree at Hong Kong University of Science and Technology. His research interests include edge intelligence and federated learning.
\end{IEEEbiography}

\begin{IEEEbiography}[{\includegraphics[width=1in,height=1.25in,clip,keepaspectratio]{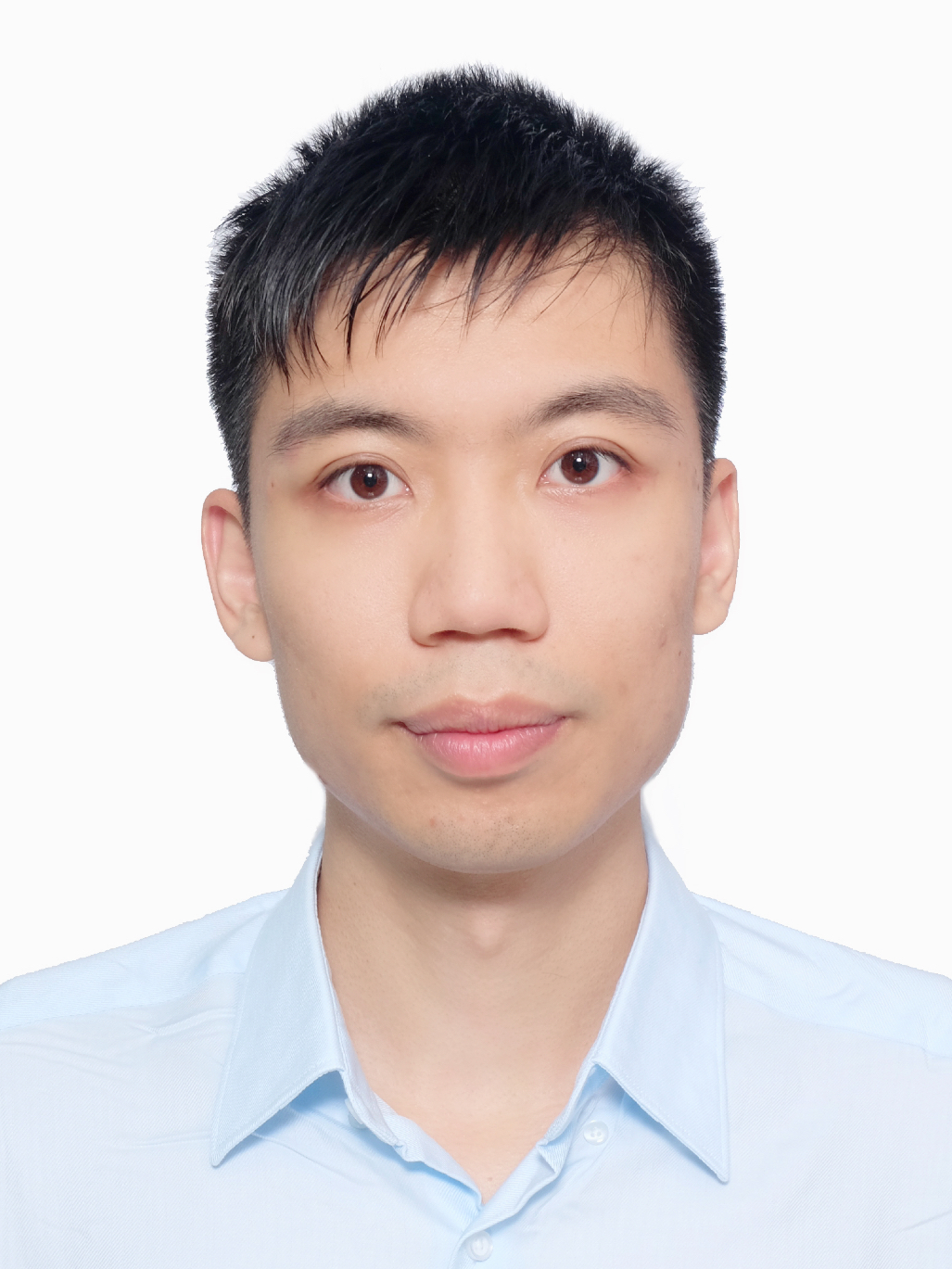}}]{Yuyi Mao}
(Member, IEEE) received the B.Eng. degree in information and communication engineering from Zhejiang University, Hangzhou, China, in 2013, and the Ph.D. degree in electronic and computer engineering from The Hong Kong University of Science and Technology, Hong Kong, in 2017. He was a Lead Engineer with the Hong Kong Applied Science and Technology Research Institute Co., Ltd., Hong Kong, and a Senior Researcher with the Theory Lab, 2012 Labs, Huawei Tech. Investment Co., Ltd., Hong Kong. He is currently a Research Assistant Professor with the Department of Electronic and Information Engineering, The Hong Kong Polytechnic University, Hong Kong. His research interests include wireless communications and networking, mobile-edge computing and learning, and wireless artificial intelligence.

He was the recipient of the 2021 IEEE Communications Society Best Survey Paper Award and the 2019 IEEE Communications Society and Information Theory Society Joint Paper Award. He was also recognized as an Exemplary Reviewer of the IEEE Wireless Communications Letters in 2021 and 2019 and the IEEE Transactions on Communications in 2020.
\end{IEEEbiography}

\begin{IEEEbiography}[{\includegraphics[width=1in,height=1.25in,clip,keepaspectratio]{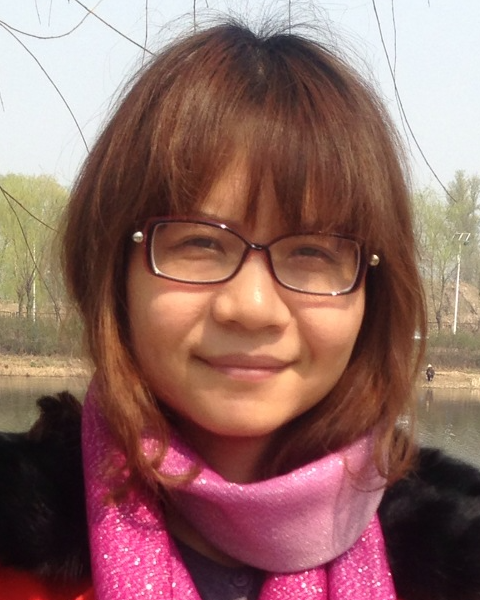}}]{Jessie Hui Wang}
(Member, IEEE) {received the Ph.D. degree in information engineering from The Chinese
University of Hong Kong in 2007. Before that, she received the B.S. degree and the M.S. degree in computer science from Tsinghua University. She is currently a tenured Associate Professor with Tsinghua University. Her research interests include Internet routing, distributed computing, network measurement and Internet economics.}
\end{IEEEbiography}

\begin{IEEEbiography}[{\includegraphics[width=1in,height=1.25in,clip,keepaspectratio]{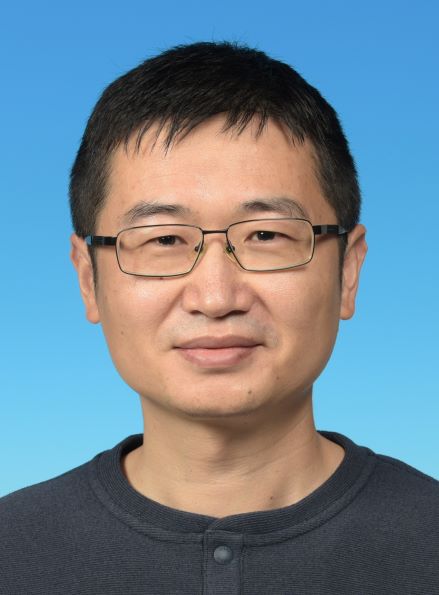}}]{Jun Zhang}

(Fellow, IEEE) received the B.Eng. degree in Electronic Engineering from the University of Science and Technology of China in 2004, the M.Phil. degree in Information Engineering from the Chinese University of Hong Kong in 2006, and the Ph.D. degree in Electrical and Computer Engineering from the University of Texas at Austin in 2009. He is an Associate Professor in the Department of Electronic and Computer Engineering at the Hong Kong University of Science and Technology. His research interests include wireless communications and networking, mobile edge computing and edge AI, and cooperative AI.

Dr. Zhang co-authored the book Fundamentals of LTE (Prentice-Hall, 2010). He is a co-recipient of several best paper awards, including the 2021 Best Survey Paper Award of the IEEE Communications Society, the 2019 IEEE Communications Society \& Information Theory Society Joint Paper Award, and the 2016 Marconi Prize Paper Award in Wireless Communications. Two papers he co-authored received the Young Author Best Paper Award of the IEEE Signal Processing Society in 2016 and 2018, respectively. He also received the 2016 IEEE ComSoc Asia-Pacific Best Young Researcher Award. He is an Editor of IEEE Transactions on Communications, IEEE Transactions on Machine Learning in Communications and Networking, and was an editor of IEEE Transactions on Wireless Communications (2015-2020). He served as a MAC track co-chair for IEEE Wireless Communications and Networking Conference (WCNC) 2011 and a co-chair for the Wireless Communications Symposium of IEEE International Conference on Communications (ICC) 2021. He is an IEEE Fellow and an IEEE ComSoc Distinguished Lecturer.

\end{IEEEbiography}

\end{document}